\documentclass{article} % For LaTeX2e
\usepackage{iclr2019_conference,times}
\usepackage{amsmath,amssymb,mathtools,amsthm}
\usepackage{algpseudocode}
\usepackage{breqn}
\usepackage{algorithm}
\usepackage{subcaption}
\usepackage{cleveref}
\usepackage{mdframed}
\usepackage{bbm}
\usepackage{wrapfig}
\usepackage{scrextend}
\usepackage[font={small}]{caption}
\usepackage{enumitem}
\usepackage{url}
\usepackage{hyperref}

\hypersetup{
  colorlinks   = true, %Colours links 
  urlcolor     = blue, %Colour for external hyperlinks
  linkcolor    = blue, %Colour of internal links
  citecolor   = red %Colour of citations
}
\usepackage{graphicx}

\newcommand{\dxbydy}[2]{\frac{\partial#1}{\partial #2}}

\newcommand{\DxbyDy}[2]{\frac{d #1}{d #2}}
\newcommand{\DbyDt}{\frac{d}{d \theta}}
\newcommand{\DbyDp}{\frac{d}{d \phi}}

\newcommand{\sg}[1]{\widetilde{#1}}
\newcommand{\hV}{{\hat{V}^\phi}}

\newcommand{\cInp}{\Theta}
\newcommand{\cX}{\mathcal{X}}

\newcommand{\cS}{\mathcal{S}}

\newcommand{\cB}{\mathcal{B}}
\newcommand{\cC}{\mathcal{C}}

\newcommand{\cD}{\mathcal{D}}
\newcommand{\cJ}{\mathcal{J}}

\newcommand{\cL}{\mathcal{L}}
\newcommand{\cH}{\mathcal{H}}

\newcommand{\cG}{\mathcal{G}}
\newcommand{\cV}{\mathcal{V}}

\newcommand{\cW}{\mathcal{W}}

\newcommand{\bcG}{\overline{\mathcal{G}}}

\newcommand{\inp}{\theta}

\newcommand{\comment}[1]{}

\newcommand{\dgiven}[1][]{#1\mkern-0.8mu\big\vert}
\newcommand{\bigdgiven}[1][]{#1\mkern-0.8mu\Big\vert}

\newcommand{\E}{\mathbb{E}}
\newcommand{\Var}{\mathrm{Var}}

\newcommand{\mb}{\mathbf}
\newcommand{\bx}{\mathbf{x}}

\newtheorem{theorem}{Theorem}
\newtheorem{definition}{Definition}

\newtheorem{lemma}{Lemma}

\newtheorem{property}{Property}
\newtheorem{corollary}{Corollary}
\newtheorem{example}{Example}

\title{Credit Assignment Techniques\\ in Stochastic Computation Graphs}

\author{Th\'eophane Weber\thanks{equal contribution.} , Nicolas Heess$^*$, Lars Buesing \& David Silver \\
DeepMind\\
\texttt{\{theophane, heess, lbuesing, davidsilver\}@google.com} 
}
\iclrfinalcopy
\begin{document}

\maketitle

\begin{abstract}

\emph{Stochastic computation graphs} (SCGs) provide a formalism to represent structured optimization problems arising in artificial intelligence, including supervised, unsupervised, and reinforcement learning. Previous work has shown that an unbiased estimator of the gradient of the expected loss of SCGs can be derived from a single principle.
However, this estimator often has high variance and requires a full model evaluation per data point, making this algorithm costly in large graphs.
In this work, we address these problems by generalizing concepts from the reinforcement learning literature. We introduce the concepts of value functions, baselines and critics for arbitrary SCGs, and show how to use them to derive lower-variance gradient estimates from partial model evaluations, paving the way towards general and efficient credit assignment for gradient-based optimization. In doing so, we demonstrate how our results unify recent advances in the probabilistic inference and reinforcement learning literature.
\end{abstract}

\section{Introduction} \label{intro}

The machine learning community has recently seen breakthroughs in challenging problems in classification, density modeling, and reinforcement learning (RL). 
To a large extent, successful methods have relied on gradient-based optimization (in particular on the backpropagation algorithm \citep{rumelhart1985learning}) for credit assignment, i.e.\ for answering the question how individual parameters (or units) affect the value of the objective. 
Recently, \citet{schulman2015gradient} have shown that such problems can be formalized as optimization in stochastic computation graphs (SCGs).
Furthermore, they derive a general gradient estimator that remains valid even in the presence of stochastic or non-differentiable computational nodes.
This unified view reveals that numerous previously proposed, domain-specific gradient estimators, such as the likelihood ratio estimator \citep{glasserman1992smoothing}, also known as `REINFORCE'~\citep{williams1992simple},  as well as the pathwise derivative estimator, also known as the ``reparameterization trick'' \citep{glasserman1991gradient, kingma2014efficient, rezende2014stochastic}, can be regarded as instantiations of the general SCG estimator.
While theoretically correct and conceptually satisfying, the resulting estimator often exhibits high variance in practice, and significant effort has gone into developing techniques to mitigate this problem \citep{ng1999policy,sutton2000policy,schulman2015high,arjona2018rudder}. Moreover, like backpropagation, the general SCG estimator requires a full forward and backward pass through the entire graph for each gradient evaluation, making the learning dynamics global instead of local. This can become prohibitive for models consisting of hundreds of layers, or recurrent model trained over long temporal horizons.

In this paper, starting from the SCG framework, we target those limitations, by introducing a collection of results which unify and generalize a growing body of results dealing with credit assignment. In combination, they lead to a spectrum of approaches that provide estimates of model parameter gradients for very general deterministic or stochastic models.
Taking advantage of the model structure, they allow to trade off bias and variance of these estimates in a flexible manner. Furthermore, they provide mechanisms to derive local and asynchronous gradient estimates that relieve the need for a full model evaluation. Our results are borrowing from and generalizing a class of methods popular primarily in the reinforcement learning literature, namely that of learned approximations to the surrogate loss or its gradients, also known as \emph{value functions}, \emph{baselines} and \emph{critics}. As new models with increasing structure and complexity are actively being developed by the machine learning community, we expect these methods to contribute powerful new training algorithms in a variety of fields such as hierarchical RL, multi-agent RL, and probabilistic programming.

This paper is structured as follows. 
We review the stochastic computation graph framework and recall the core result of \cite{schulman2015gradient} in section~\ref{sec:prelim}. In section \ref{sec:valueCritic} we discuss the notions of value functions, baselines, and critics in arbitrary computation graphs, and discuss how and under which conditions they can be used to obtain both lower variance and local estimates of the model gradients, as well as local learning rules for other value functions. In section \ref{sec:gradientCritic} we provide similar results for gradient critics, i.e.\ for estimates or approximations of the downstream loss gradient. In section~\ref{sec:Examples} we go through many examples of more structured SCGs arising from various applications and see how our techniques allow to derive different estimators. 
In section \ref{sec:Menu}, we discuss how the techniques and concepts introduced in the previous sections can be used and combined in different ways to obtain a wide spectrum of different gradient estimators with different strengths and weaknesses for different purposes. We conclude by investigating a simple chain graph example in detail in section~\ref{sec:madness}.

\textbf{Notation for derivatives}

We use a `physics style' notation by representing a function and its output by the same letter. For any two variables $x$ and $y$ in a computation graph, we use the partial derivative $\frac{\partial y}{\partial x}$ to denote the direct derivative of $y$ with respect to $x$, and the $\frac{dy}{dx}$ to denote the total derivative of $y$ with respect to $x$, taking into account all paths (or effects) from $x$ on $y$; we use this notation even if $y$ is still effectively a function of multiple variables. For any variable $x$, we let $\sg{x}$ denote the value of $x$ which we treat as a constant in the gradient formula; i.e.\ $\sg{x}$ can be thought of a the output of a `function' which behaves as the identity, but has gradient zero everywhere\footnote{such an operation is often called `stop gradient' in the deep learning community.}. Finally, we only use the derivative notation for deterministic computation, the gradient of any sampling operation is assumed to be $0$.

\emph{All proofs are omitted from the main text and can be found in the appendix.}

\section{Gradient estimation for expectation of a single function}
\label{sec:prelim}

An important class of problems in machine learning can be formalized as  optimizing an expected loss $\E_{x_1,\ldots,x_n \sim p(x_1,\dots,x_n|\theta)} [ \ell(x_1,x_2,\dots, \theta)]$ over parameters $\theta$, where \emph{both} the sampling distribution $p(.|\theta)$ as well as the loss function $\ell$ can depend on $\theta$.
As we explain in greater detail in the appendix, concrete examples of this setup are reinforcement learning (where $p$ is a composition of known policy and potentially unknown system dynamics), and variational autoencoders (where $p$ is a composition of data distribution and inference network); cf. Fig.~\ref{fig:example_models}. Because of the dependency of the distribution on $\theta$, backpropagation does not directly apply to this problem. 

Two well-known estimators, the score function estimator and the pathwise derivative, have been proposed in the literature. Both turn the gradient of an expectation into an expectation of gradients and hence allow for unbiased Monte Carlo estimates by simulation from known distributions, thus opening the door to stochastic approximations of gradient-based algorithms such as gradient descent \citep{robbins1985stochastic}. 

\begin{minipage}[h]{1.\linewidth}
    \centering
    \includegraphics[width=0.85\textwidth]{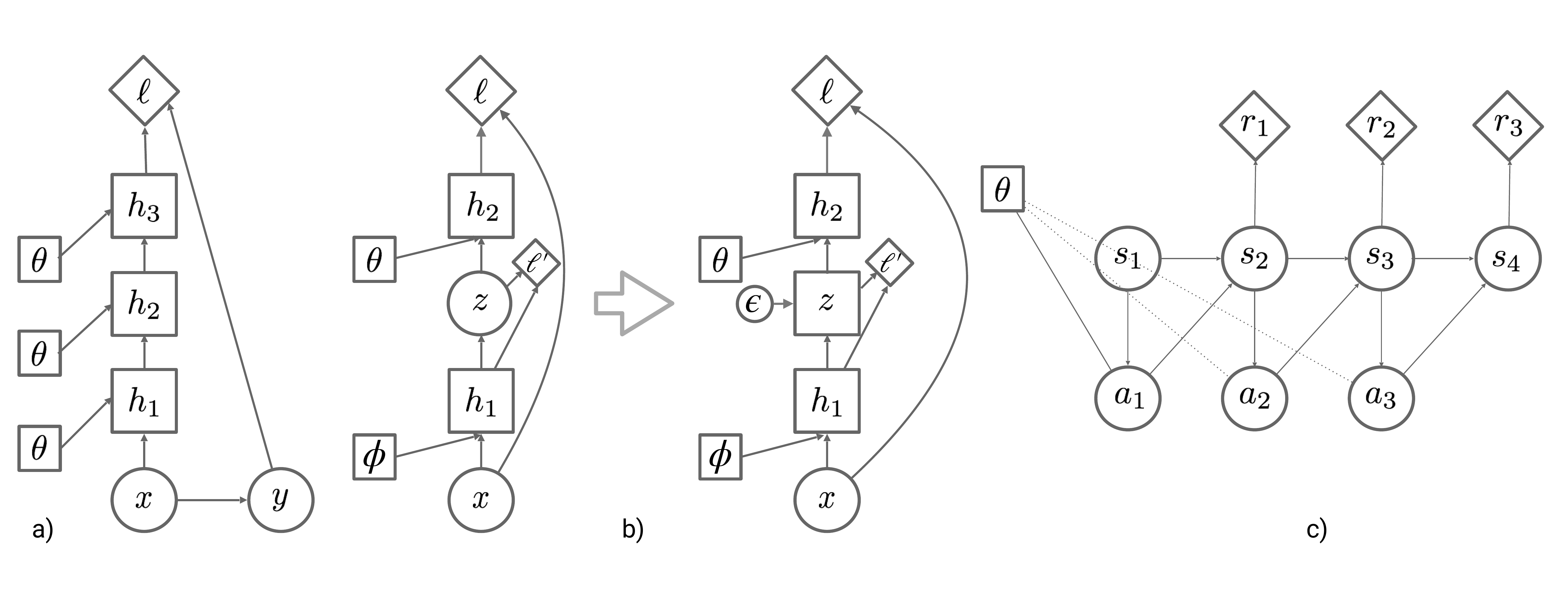}
    \captionof{figure}{\emph{Typical models as SCGs.}
    (a) {\bf Supervised learning.} The loss is $\E_{x,y}\left [ \log p(y | x, \theta) \right]$ where $x,y \sim p_D$ are drawn from the data distribution. $p(y|x,\theta)$ is realized as a multi-layer neural network with hidden layers $h_1, h_2, h_3$ and parameters $\theta$.
    (b) {\bf Variational autoencoder.} The observed data $x \sim p_D$ is fed into an inference network $q(z|x , \phi)$ (with hidden units $h_1$) to infer the latent variables $z$.
    The loss $\ell$ is given by the Evidence lower bound (ELBO) computed from the likelihood $p(x | z, \theta)$ (with hidden units $h_2$). 
    We show two variants: the not-reparameterized variant (left), where the ELBO is given by $\E_x \left [ \E_z \left [ \log p(x | z, \theta) + \log p(z; \theta) \right ] - H[ q(\cdot | x, \phi)\right]$; and the reparameterized variant (right), where the ELBO is expressed as $\E_x \left [ \E_\epsilon \left [ \log p(x | h(x, \epsilon, \phi), \theta) + \log p( h(x, \epsilon, \phi), \theta) \right ] - H[ q(\cdot | x, \phi) ]\right]$.
    Note that $z$ is a deterministic node in the reparametrized graph. 
    (c) {\bf Markov decision process }. The objective here is the (undiscounted) return $\E_\tau [ \sum_t r(s_t) ]$ and trajectories $\tau=(\ldots, s_t, a_t, \ldots)$ are drawn from the composition of a policy $\pi_\theta$ and the system dynamics: $\tau \sim p(\tau, \theta) = p(s_0) \prod_t \pi(a_t|s_t, \theta) p(s_{t+1}|s_t, a_t)$.
    \label{fig:example_models}
    }
\end{minipage}

\paragraph{Likelihood ratio estimator}

For a random variable $x$ parameterized by $\theta$, i.e.\ $x \sim p(\cdot|\theta)$ the gradient of an expectation can be obtained using the following estimator:
\begin{align}
    \DxbyDy{}{\theta}\E_x [ f(x) ] = \E_x \left[ \DxbyDy{}{\theta} \log p(x|\theta) f(x) \right]
\end{align}

This classical result from the literature is known under a variety of names, including likelihood ratio estimator, or ``REINFORCE'' and can readily be derived by noting that $\DxbyDy{}{z} \log g (z) = \frac{1}{g(z)} \DxbyDy{}{z} g(z)$ for any function $g$. The quantity $\DxbyDy{}{\theta} \log p(x|\theta)$ is known as the \emph{score function} of variable $x$.

\paragraph{Pathwise derivative estimator}

In many cases, a random variable $x \sim p_\theta$ can be rewritten as a differentiable, deterministic function $x(\epsilon, \theta)$ of a fixed random variable with parameterless distribution $\epsilon \sim p_\epsilon$\footnote{Note that reparametrization is always possible, but differentiable reparametrization is not.}. This leads to a new expectation for which $\theta$ now only appears inside the expectation and the gradient can be straightforwardly estimated:
\begin{align}
   \DxbyDy{}{\theta} \E_\epsilon [ f(x(\epsilon; \theta)) ] = \E_\epsilon \left[ \DxbyDy{}{\theta}f(x(\epsilon, \theta)) \right] = \E_\epsilon \left[ \dxbydy{f}{x}\dxbydy{x(\epsilon, \theta)}{\theta} \right]
\end{align}

Both estimators remain applicable when $f$ is a function of $\theta$ itself. In particular, for the score function estimator we obtain
\begin{align}
    \DxbyDy{}{\theta} \E_x [ f(x,\theta) ] = \E_x \left[ \left(\dxbydy{}{\theta} \log p_x(x | \theta)\right) f(x,\theta) + \dxbydy{f}{\theta}\right]
\end{align}
and for the reparametrization approach, we obtain:
\begin{align}
   \DxbyDy{}{\theta} \E_\epsilon [ f(x(\epsilon, \theta)) ] = \E_\epsilon \left[\dxbydy{f}{x}\dxbydy{x}{\theta}+\dxbydy{f}{\theta} \right].
\end{align}

Other estimators have been introduced recently, relying on the implicit function theorem \citep{figurnov2018implicit}, continuous approximations to discrete sampling using Gumbel function \citep{maddison2016concrete,jang2016categorical}, law of large numbers applied to sums of discrete samples \citep{bengio2013estimating}, and others.

\subsection{Stochastic computation graphs}

\label{sec:SCG}

\paragraph{SCG framework.} We quickly recall the main results from \citet{schulman2015gradient}. 
\begin{mdframed}
\begin{definition}[Stochastic Computation Graph]
A stochastic computation graph $\cG$ is a directed, acyclic graph , with two classes of nodes (also called variables): deterministic nodes, and stochastic nodes.
\begin{enumerate}
\item \textbf{Stochastic nodes}, (represented with circles, denoted $\cS$), which are distributed conditionally given their parents. 
\item \textbf{Deterministic nodes} (represented with squares, denoted $\cD$), which are deterministic functions of their parents.
\end{enumerate}
We further specialize our notion of deterministic nodes as follows:
\begin{itemize}
\item Certain deterministic nodes with no parents in the graphs are referred to as \textbf{input nodes} $\theta \in \cInp$, which are set externally, including the parameters we differentiate with respect to. 
\item Certain deterministic nodes are designated as \textbf{losses} or \textbf{costs} (represented with diamonds) and denoted $\ell \in \cL$ . We aim to compute the gradient of the expected sum of costs with respect to some input node $\theta$. 
\end{itemize}
\vspace{-0.3cm}
A parent $v$ of a node $w$ is connected to it by a directed edge $(v, w)$. 
Let $L=\sum_{\ell\in\cL} \ell$ be the total cost. \label{def:cg}
\end{definition}
\end{mdframed}

For a node $v$, we let $h_v$ denote the set of parents of $v$ in the graph. 
A path between $v$ and $w$ is a sequence of nodes $a_0=v, a_1,\ldots, a_{K-1}, a_K=w$  such that all $(a_{k-1},a_k)$ are directed edges in the graph; if any such path exists, we say that $w$ descends from $v$ and denote it $v\prec w$.
We say the path is blocked by a set of variables $\cV$ if any of the $a_i\in \cV$. By convention, $v$ descends from itself.
For a variable $x$ and set $\cV$, we say $x$ can be deterministically computed from $\cV$ if there is no path from a stochastic variable to $x$ which is not blocked by $\cV$ (cf. Fig.~\ref{fig:deterministic-computable}). Algorithmically, this means that knowing the values of all variables in $\cV$ allows to compute $x$ without sampling any other variables; mathematically, it means that conditional on $\cV$, $x$ is a constant. 
Finally, whenever we use the notion of conditional independence, we refer to the notion of conditional independence informed by the graph (i.e.\ d-separation) \citep{geiger1990identifying,koller2009probabilistic}.

\paragraph{Gradient estimator for SCG.}

Consider the expected loss  $\cJ(\theta)=\E_{s \in \cS}\left[L \right]$. 
We present the general gradient estimator for the gradient of the expected loss $\DbyDt \cJ(\theta)$ derived in \citep{schulman2015gradient}.

For any stochastic variable $v$, we let $\log p(v)$ denote the conditional log-probability of $v$ given its parents, i.\ e.\ the value $\log p(v|h_v)$\footnote{Note $\log p(v)$ is indeed the conditional distribution, not the marginal one. The parents $h_v$ of $v$ are implicit in this notation, by analogy with deterministic layers, whose parents are typically not explicitly written out.}, and let $s(v,\theta)$ denote the score function $\DxbyDy{\log p(v)}{\theta}$.

\begin{theorem}\label{thm:main}[Theorem 1 from \citep{schulman2015gradient}\footnote{Since we defined the gradient of sampling operations to be zero, we do not need to use the notion of of deterministic descendence as in the original theorem; as the gradient of non-deterministic descendents with respect to inputs is always zero.}]
\textit{Under simple regularity conditions,}
  \begin{align*}
    \nonumber  \DxbyDy{}{\theta}\cJ(\theta)
    = \E \Bigg[
    \sum_{\substack{ v\in \cS \\\theta \prec v}}
    s(v,\theta) {L} 
     + \sum_{\substack{\ell \in \cL\\ \theta \prec \ell}} \DxbyDy{\ell}{\theta}
    \Bigg]
    % \label{eq:mainderiv}
  \end{align*}
\end{theorem}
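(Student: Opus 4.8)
The plan is to reduce the theorem to the two elementary estimators recalled above (score function and pathwise derivative) by exploiting the factorization of the joint law over stochastic nodes induced by the DAG. First I would write the expected loss as an integral over all stochastic variables. Because $\cG$ is acyclic, the joint density factorizes as $\prod_{v \in \cS} p(v \mid h_v)$, where each conditional is a function of the sampled parents and, through the deterministic part of the graph, of $\theta$; making this dependence explicit gives $\cJ(\theta) = \int \big(\prod_{v \in \cS} p(v \mid h_v, \theta)\big)\, L \, d\cS$ (with $d\cS$ shorthand for integration against all stochastic variables), where $L = L(\cS, \theta)$ is itself a deterministic function of the samples and of $\theta$.

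Next I would differentiate under the integral sign — this is exactly where the unstated ``simple regularity conditions'' enter, to justify swapping $\frac{d}{d\theta}$ with $\int$ (e.g.\ via a dominated-convergence argument) — and apply the product rule, splitting the result into a term in which the density is differentiated and a term in which $L$ is differentiated:
\[
\frac{d}{d\theta}\cJ(\theta) = \int \Big(\frac{d}{d\theta}\prod_{v} p(v \mid h_v, \theta)\Big) L \, d\cS + \int \Big(\prod_{v} p(v \mid h_v, \theta)\Big) \frac{dL}{d\theta} \, d\cS .
\]

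For the first term I would invoke the likelihood-ratio identity $\frac{d}{d\theta}\prod_v p = \big(\prod_v p\big)\sum_v \frac{d \log p(v \mid h_v,\theta)}{d\theta} = \big(\prod_v p\big)\sum_v s(v,\theta)$, turning it into $\E\big[\sum_v s(v,\theta)\, L\big]$; the second term is immediately $\E\big[\frac{dL}{d\theta}\big]$, and since gradients do not propagate through sampling, $\frac{dL}{d\theta}$ is the derivative along deterministic paths only and decomposes as $\sum_{\ell} \frac{\partial \ell}{\partial \theta}$. It then remains to trim both index sets: if $\theta \not\prec v$ then no directed path reaches $v$, so the conditional $p(v\mid h_v)$ cannot depend on $\theta$ and $s(v,\theta)=0$; likewise $\frac{\partial \ell}{\partial\theta}=0$ unless $\theta \prec \ell$. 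Restricting the sums to $\theta \prec v$ and $\theta \prec \ell$ — a harmless, possibly over-inclusive choice, since the extra terms vanish, as the footnote notes — yields the claimed formula.

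The main obstacle I anticipate is not the algebra but the bookkeeping around which influences are transmitted as derivatives versus absorbed into samples. Concretely, I must argue carefully that $\theta$'s influence on each conditional $p(v \mid h_v,\theta)$ enters only through \emph{deterministic} parents — a stochastic parent blocks the gradient, its sampled value being held fixed inside the integral — so that $s(v,\theta)$ as defined captures exactly the right quantity, and that the factorization together with the convention ``sampling has zero gradient'' makes $\frac{dL}{d\theta}$ collapse to the sum of direct derivatives $\frac{\partial\ell}{\partial\theta}$. An equivalent and perhaps cleaner route would replace the single global differentiation by an induction over a topological ordering of the nodes, peeling off one stochastic variable at a time and applying the single-variable score-function and pathwise estimators at each step; the inductive version makes the d-separation argument for trimming the sums more transparent, at the cost of heavier notation.
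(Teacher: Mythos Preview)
The paper does not actually supply its own proof of Theorem~\ref{thm:main}: it is explicitly stated as ``Theorem~1 from \citep{schulman2015gradient}'' and is only recalled, not re-derived, so there is no in-paper proof to compare your proposal against. That said, your sketch is the standard derivation from \citet{schulman2015gradient}: factor the joint over stochastic nodes, differentiate under the integral (the ``simple regularity conditions''), apply the product rule to split into a score-function term and a pathwise term, use the log-derivative identity on the product of conditionals, and then prune the sums to descendants of $\theta$ since the remaining terms vanish.

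One small notational wrinkle: you write the pathwise piece as $\sum_\ell \frac{\partial\ell}{\partial\theta}$, but in the paper's conventions the theorem uses the \emph{total} derivative $\DxbyDy{\ell}{\theta}$ along all deterministic paths (with sampling gradients set to zero), not the direct partial; your surrounding prose (``derivative along deterministic paths only'') makes clear you intend the right object, but the symbol should match. Otherwise the argument is sound, and your remark that the footnoted convention ``gradient of sampling is zero'' is what allows the over-inclusive sums (and avoids the original paper's deterministic-descendant bookkeeping) is exactly the point.
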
 \tabularnewline

Here, the first term corresponds to the influence $\inp$ has on the loss through the non-differentiable path mediated by stochastic nodes.
Intuitively, when using this estimator for gradient descent, $\inp$ is changed so as to increase or `reinforce' the probability of samples of $v$ that empirically led to lower total cost $L$. The second term corresponds to the direct influence $\inp$ has on the total cost through differentiable paths. Note that differentiable paths include paths going through reparameterized random variables.

\begin{figure}[H]
    \begin{minipage}[c]{0.4\textwidth}
        \includegraphics[width=1\textwidth]{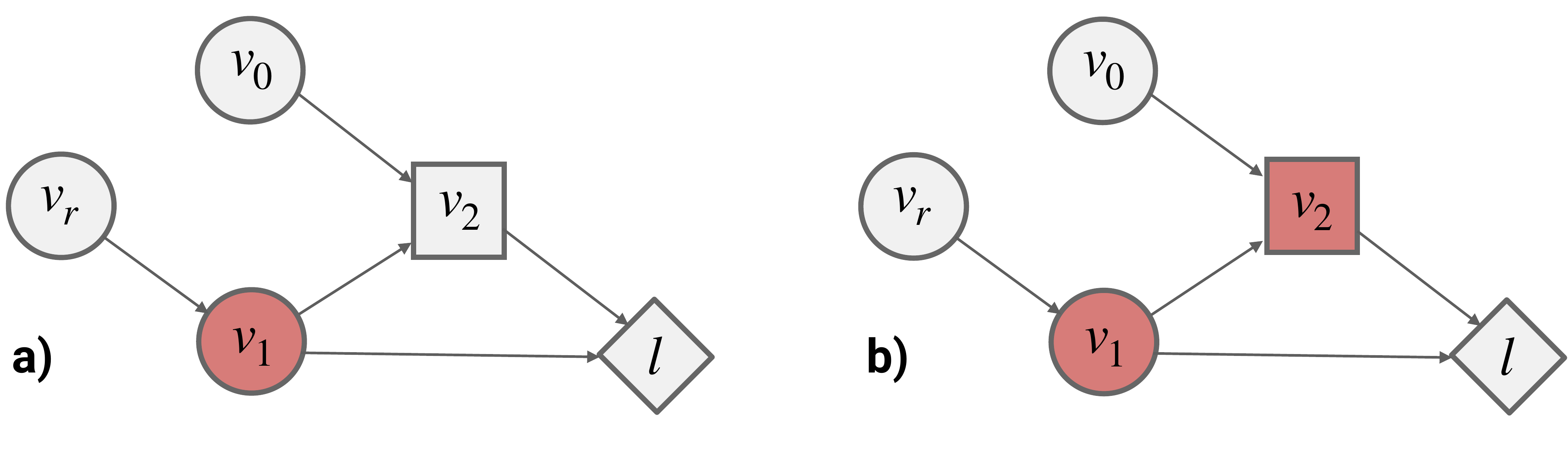}
    \end{minipage}\hfill
    \begin{minipage}[c]{0.55\textwidth}
        \label{fig:blocked}
        \caption{
        \emph{Blocked paths and deterministic computability.}
        a) $\cX=\lbrace v_1\rbrace$, shown in red, blocks the path from $v_r$ to $\ell$, but not $v_0$ to $\ell$.
        b) $\ell$ can be deterministically computed from $\cX=\{v_1,v_2\}$ (red) because all stochastic variables have their path to $L$ blocked by $\cX$ ($v_0$, blocked by $v_2$; $v_r$, blocked by $v_1$; and $v_1$, blocked by itself). 
        }   \label{fig:deterministic-computable}
\end{minipage}

\end{figure}

\section{Value based methods}
\label{sec:valueCritic}

The gradient estimator from theorem \ref{thm:main} is very general and conceptually simple but it tends to have high variance \citep[see for instance the analysis found in][]{mnih2016variational}, which affects convergence speed \citep[see e.g.][]{schmidt2011convergence}. 
Furthermore, it requires a full evaluation of the graph.
To address these issues,
we first discuss several variations of the basic estimator in which the total cost $L$ is replaced by its \emph{deviation} from the expected total cost, or conditional expectations thereof, with the aim of reducing the variance of the estimator. 
We then discuss how approximations of these conditional expectations can be learned locally, leading to a scheme in which gradient computations from partial model evaluations become possible.

\subsection{Values}\label{subsec:Values}

In this section, we use the simple concept of conditional expectations to introduce a general definition of value function in a stochastic computation graph.

\begin{definition}[Value function]
Let $\cX$ be an arbitrary subset of $\cG$, $\mathbf{x}$ an assignment of possible values to variables in $\cX$ and $S$ an arbitrary scalar value in the graph.
The value function for set $\cX$ is the expectation of the quantity $S$ conditioned on $\cX$:
\begin{align*}
 V: \mathbf{x}\mapsto V(\mathbf{x}; S) = \E_{\cG\setminus \cX|\cX=\mathbf{x}}\left[S\right].
\end{align*}
\label{def:value}
\end{definition}

Intuitively, a value function is an estimate of the cost which averages out the effect of stochastic variables not in $\cX$,
therefore the larger the set, the fewer variables are averaged out. 

The definition of the value function as conditional expectation results in the following characterization: 
\begin{lemma}\label{corr:sq-min}
For a given  assignment $\mathbf{x}$ of $\cX$,  $V(\mathbf x; S)$ is the optimal mean-squared error estimator of $S$ given input $\cX$:
\begin{align*}
   V(\mathbf{x};S) = \operatorname{argmin}_{v_{\mathbf x}} \E_{\cG\setminus \cX|\cX=\mathbf x}\left[(S-v_{\mathbf x})^2\right].
\end{align*}
\end{lemma}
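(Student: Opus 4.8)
The plan is to recognize the claim as the classical fact that, among all constant predictors, the conditional mean minimizes expected squared error, specialized here to the conditioning event $\cX = \mathbf{x}$, and to prove it by the standard ``complete the square'' (bias--variance) decomposition. Throughout, the conditioning set $\cX$ and its assignment $\mathbf{x}$ are held fixed, so the optimization is over a single scalar $v_{\mathbf x}$, and every expectation below is the conditional expectation $\E_{\cG\setminus \cX|\cX=\mathbf x}[\cdot]$ appearing in Definition~\ref{def:value}. I abbreviate $\mu \eqdef V(\mathbf{x}; S) = \E_{\cG\setminus\cX|\cX=\mathbf x}[S]$.

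The key step is to insert and subtract $\mu$ inside the square and expand:
\begin{align*}
\E_{\cG\setminus\cX|\cX=\mathbf x}\left[(S - v_{\mathbf x})^2\right]
&= \E_{\cG\setminus\cX|\cX=\mathbf x}\left[\big((S - \mu) + (\mu - v_{\mathbf x})\big)^2\right] \\
&= \E_{\cG\setminus\cX|\cX=\mathbf x}\left[(S-\mu)^2\right] + 2(\mu - v_{\mathbf x})\,\E_{\cG\setminus\cX|\cX=\mathbf x}\left[S - \mu\right] + (\mu - v_{\mathbf x})^2.
\end{align*}
Here $\mu - v_{\mathbf x}$ is a constant (for fixed $\mathbf{x}$) and hence pulls out of the expectation. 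By the very definition of $\mu$ as the conditional mean, the remaining factor vanishes: $\E_{\cG\setminus\cX|\cX=\mathbf x}[S - \mu] = \mu - \mu = 0$. The cross term therefore drops, leaving
\begin{align*}
\E_{\cG\setminus\cX|\cX=\mathbf x}\left[(S - v_{\mathbf x})^2\right] = \E_{\cG\setminus\cX|\cX=\mathbf x}\left[(S-\mu)^2\right] + (\mu - v_{\mathbf x})^2.
\end{align*}

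From here the conclusion is immediate: the first summand is the conditional variance of $S$, which does not depend on the decision variable $v_{\mathbf x}$, while the second summand $(\mu - v_{\mathbf x})^2$ is nonnegative and equals zero if and only if $v_{\mathbf x} = \mu$. Hence the unique minimizer is $v_{\mathbf x} = \mu = V(\mathbf{x}; S)$, as claimed. Equivalently, one could treat $\E_{\cG\setminus\cX|\cX=\mathbf x}[(S - v_{\mathbf x})^2]$ as a quadratic in $v_{\mathbf x}$ and set its derivative to zero; the two arguments coincide.

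There is no substantive obstacle: this is a textbook property of conditional expectations. The only point requiring care is a mild regularity assumption, namely that $S$ has finite conditional second moment $\E_{\cG\setminus\cX|\cX=\mathbf x}[S^2] < \infty$, so that $\mu$ is finite and each expectation above is well-defined; this is subsumed by the ``simple regularity conditions'' already invoked for Theorem~\ref{thm:main}. I would also flag that the minimization is pointwise in $\mathbf{x}$ — one selects the best constant $v_{\mathbf x}$ separately for each assignment — which is precisely what the statement encodes through the $\mathbf{x}$-indexed decision variable $v_{\mathbf x}$.
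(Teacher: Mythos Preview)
Your proof is correct and is exactly the standard bias--variance decomposition argument for why the conditional expectation minimizes mean squared error. The paper itself does not supply an explicit proof of this lemma in the appendix---it is stated as an immediate ``characterization'' following from the definition of the value function as a conditional expectation---so your write-up is precisely the textbook argument the authors are implicitly invoking.
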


Consider an arbitrary node $v \in \cG$, and let $L(v) = \sum_{\substack{\ell \in \cL\\ v \prec \ell}} \ell$ denote the $v$-rooted cost-to-go, i.e.\ the sum of costs `downstreams' from $v$ (similar notation is used for $L(V)$ if $V$ is a set). The scalar $S$ will often be the cost-to-go $L(v)$ for some fixed node $v$; furthermore, when clear from context, we use $\cX$ to both refer to the variables and the values they take. For notational simplicity, we will denote the corresponding value function $V(\cX)$.

Fig.~\ref{fig:value_examples} shows multiple examples of value functions for different graphs. 
The above definition is broader than the typical one used in reinforcement learning. There, due to the simple chain structure of the Markov Decision Processes, the resulting Markov properties of the graph, and the particular choice of $\cX$, the expectation is only with respect to downstream nodes. Importantly, according to Def.\ \ref{def:value} the value can depend on $\cX$ via ancestors of $\cX$ (e.g.\ example in Fig. \ref{fig:value_examples}c). Lemma \ref{corr:sq-min} remains valid nevertheless.

\subsection{Baselines and critics}
\label{sec:value:baseline_critic}

In this section, we will define the notions of baselines and critics and use them to introduce a generalization of theorem~\ref{thm:main} which can be used to compute lower variance estimator of the gradient of the expected cost. We will then show how to use value functions to design baselines and critics.

Consider an arbitrary node $v$ and input $\theta$.
\begin{definition}[Baseline]\label{def:Baseline}
A \emph{baseline} $B$ for $v$ is any function of the graph such that $\E[s(v,\theta) B]=0$.
A \emph{baseline set} $\cB$ is an arbitrary subset of the non-descendants of $v$.
\end{definition}
\vspace{-0.2cm}
Baseline sets are of interest because of the following property:
\begin{property}
Let $B$ be an arbitrary scalar function of $\cB$.  Then $B$ is a baseline for $v$.
\end{property}
\vspace{-0.2cm}
Common choices are constant baselines, i.e.\ $\cB=\emptyset$, or baselines $B(h_v)$ only depending on the parents $\cB=h_v$ of $v$. 

\begin{definition}[Critic]
A \emph{critic} $Q$ of cost $L(v)$ for $v$ is any function of the graph such that $\E[s(v,\theta)(L(v)-Q)]=0$.
\label{def:critic}
\end{definition}
\vspace{-0.2cm}

By linearity of expectations, linear combinations of baselines are baselines, and convex combinations of critics are critics.

The use of the terms \emph{critic} and \emph{baseline} is motivated by their respective roles in the following theorem, which generalizes the policy gradient theorem \citep{sutton2000policy}:
\begin{theorem}\label{thm:value-critic}
Consider an arbitrary baseline $B_v$ and critic $Q_v$ for each stochastic node $v$. Then,
  \begin{align*}
    \nonumber \DbyDt \cJ(\theta)
    = \E \Bigg[
    &\sum_{\substack{v \in \cS\\ \inp \prec  v}}
    G_v + \sum_{\substack{\ell \in \cL\\ \inp \prec \ell}} \DbyDt \ell
    \Bigg],
  \end{align*}
  where $$G_v = s(v,\theta)  \Big(Q_v-B_v\Big).$$
\end{theorem}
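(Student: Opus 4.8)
The plan is to reduce Theorem~\ref{thm:value-critic} to Theorem~\ref{thm:main} by showing that, inside the expectation, each summand $s(v,\theta)L$ appearing in Theorem~\ref{thm:main} may be replaced by $s(v,\theta)(Q_v - B_v)$ without changing its expected value. Since expectation is linear and the differentiable-path term $\sum_{\ell : \inp \prec \ell} \DbyDt \ell$ is identical in both statements, it suffices to prove, for every stochastic node $v$ with $\inp \prec v$, the single per-node identity
\begin{align*}
\E[s(v,\theta) L] = \E[s(v,\theta)(Q_v - B_v)].
\end{align*}

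First I would split the total cost as $L = L(v) + L_{\neg v}$, where $L(v) = \sum_{\ell : v \prec \ell} \ell$ is the $v$-rooted cost-to-go and $L_{\neg v} = \sum_{\ell : v \not\prec \ell} \ell$ collects the costs that do not descend from $v$. The key observation is that $L_{\neg v}$ is a baseline for $v$: every cost $\ell$ with $v \not\prec \ell$ is a non-descendant of $v$, and moreover all of its ancestors are non-descendants of $v$ as well, since an ancestor $w$ of $\ell$ with $v \prec w$ would force $v \prec w \prec \ell$, contradicting $v \not\prec \ell$. Hence $L_{\neg v}$ is a deterministic function of a subset of the non-descendants of $v$, so by the baseline Property it is a baseline and $\E[s(v,\theta) L_{\neg v}] = 0$, giving $\E[s(v,\theta) L] = \E[s(v,\theta) L(v)]$. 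Next I would invoke the two defining identities directly: by Definition~\ref{def:critic}, $\E[s(v,\theta)(L(v) - Q_v)] = 0$, so $\E[s(v,\theta) L(v)] = \E[s(v,\theta) Q_v]$; and by Definition~\ref{def:Baseline}, $\E[s(v,\theta) B_v] = 0$. Chaining these equalities yields $\E[s(v,\theta) L] = \E[s(v,\theta) Q_v] = \E[s(v,\theta)(Q_v - B_v)]$, which is exactly the required identity. Summing over all stochastic $v$ with $\inp \prec v$ and adding back the common differentiable-path term then recovers $\DbyDt \cJ(\theta) = \E[\sum_{v} G_v + \sum_{\ell} \DbyDt \ell]$ with $G_v = s(v,\theta)(Q_v - B_v)$.

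The main obstacle is the claim that $L_{\neg v}$ is a baseline, i.e.\ that the costs not downstream of $v$ can be absorbed at zero expected cost. The subtlety is not merely that each such cost is a non-descendant of $v$, but that it is a deterministic function of non-descendants of $v$ \emph{only} — which is precisely what licenses the application of the baseline Property, and which the ancestor argument above is designed to establish. Once that point is secured, the remaining steps are immediate consequences of the definitions of baseline and critic and of the linearity of expectation, so no further estimation or variance analysis is needed for correctness.
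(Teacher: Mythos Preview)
Your proposal is correct and follows essentially the same three-step skeleton as the paper's proof: replace $L$ by $L(v)$, subtract the baseline, and swap $L(v)$ for the critic. The only cosmetic difference is packaging: you obtain the first step by recognising $L_{\neg v}$ as a Property~1 baseline, whereas the paper re-derives that fact inline by conditioning on $\bcG_v$ and using $\E[s(v,\theta)\mid \bcG_v]=0$; and for the critic and baseline steps you invoke Definitions~\ref{def:Baseline} and~\ref{def:critic} directly, while the paper spells out the conditional-independence argument for the particular case of critic value functions $Q(\cC)$.
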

The difference $Q_v-B_v$ between a critic and a baseline is called an \emph{advantage} function.

Theorem \ref{thm:value-critic}  enables the derivation of a surrogate loss. Let $L^s$ be defined as $L^s=L+\sum_{\substack{\ell \in \cL\\ \inp \prec \ell}}
\log p(v) \left(\sg{Q}-\sg{B}\right)$, where we recall that the tilde notation indicates a constant from the point of view of computing gradients.
Then, the gradient of the expected cost $\cJ(\theta)$ equals the gradient of $L^s$ in expectation: $\DxbyDy{}{\theta}\E[L]=\E\left[\DxbyDy{}{\theta}L^s\right]$.

Before providing intuition on this theorem, we see how value functions can be used to design baselines and critics:

\begin{definition}[Baseline value function and critic value function]\label{def:BCVF}\ \\
For any node $v$ and baseline set $\cB$, a special case of a baseline is to choose the value function with set $\cB$. Such a baseline is called a \textbf{baseline value function}. \\
Let a critic set $\cC$ be a set such that $v\in \cC$, and ${\log p(v)}$ and $L(v)$ are conditionally independent given $\cC$; a special case is when $\cC$ is such that $\log p(v)$ is deterministically computable given $\cC$.
Then the value function for set $\cC$ is a critic for $v$ which we call a \textbf{critic value function} for $v$. 
\end{definition}

In the standard MDP setup of the RL literature, $\cC$ consists of the state $s$ and the action $a$ which is taken by a stochastic policy $\pi$ in state $s$ with probability $\log \pi(a|s)$, which is a deterministic function of $(s,a)$.
Definition \ref{def:BCVF} is more general than this conventional usage of critics since it does not require $\cC$ to contain all stochastic ancestor nodes that are required to evaluate $\log p(v)$. For instance, assume that the action is conditionally sampled from the state $s$ and some source of noise $\xi$, for instance due to dropout, with distribution $\pi(a|\xi, s)$\footnote{in this example, it is important $\xi$ is used only once; it cannot be used to compute other actions.}. The critic set may but does not need to include $\xi$; if it does not, $\log \pi(a|\xi,s)$ is not a deterministic function of $a$ and $s$. The corresponding critic remains useful and valid.

Figure \ref{fig:value_examples} contains several examples of value functions which take the role of baselines and critics for different nodes.

\begin{figure}
    \centering
    \includegraphics[width=0.90\textwidth]{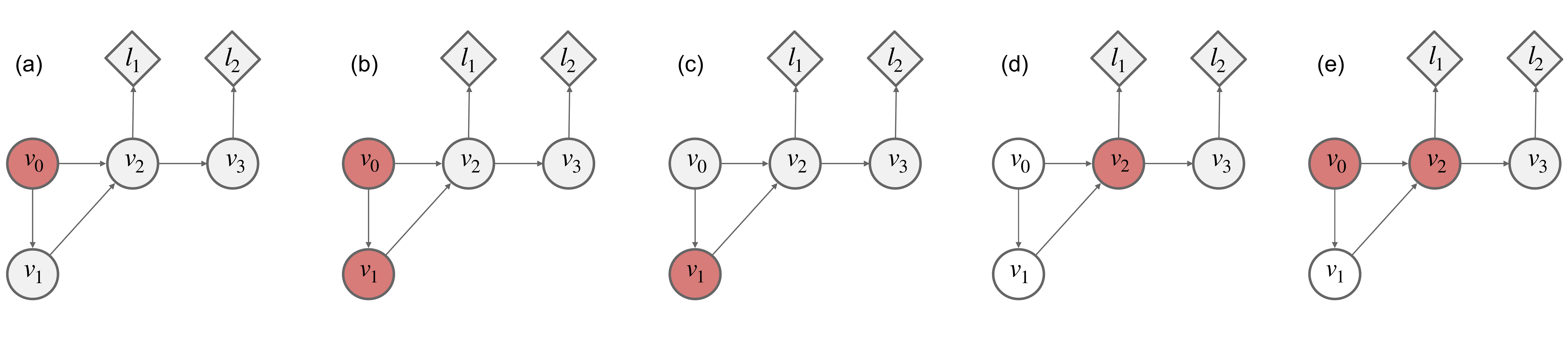}
    \caption{\emph{Examples of baseline value functions and critic value functions.}
    (a-e) Different value functions for the same SCG with $\cX^{(a)}=\{v_0\}$, $\cX^{(b)}=\{v_0, v_1\}$, $\cX^{(c)}=\{v_1\}$, $\cX^{(d)}=\{v_2\}$, $\cX^{(e)}=\{v_0, v_2\}$. The variables averaged over are shaded in light gray. The value function in (c) involves a marginalization over `upstream' variables: $V(v_1) =  \E_{v_0,v_2,v_3 | v_1}[\ell_1 + \ell_3]$.\\
    (a,b,c) are valid baselines for $v_2$; (d,e) cannot act as baselines for $v_2$ since $v_2$ belongs to those sets, but can act as baselines for $v_3$.\\
    (a,b,e) are critics for $v_0$, but (c,d) are not. (b) is a critic for $v_1$. 
    (c) is not a critic for $v_1$ since $L(v_1)$ and $\log \pi_{v_1}$ are correlated conditionally on $\cX^{(c)}$ (through $v_0$).
    Finally (d,e) are critics for $v_2$; note however $\log p(v_2)$ is not a deterministic function of either $\cX^{(d)}$ or $\cX^{(e)}$. 
    \label{fig:value_examples}}
\end{figure}

Three related ideas guide the derivation of theorem \ref{thm:value-critic}.
To give intuition, let us analyze the term $G_v$, which replaces the score function weighted by the total cost $s(v,\theta) L$.
First, the conditional distribution of $v$ only influences the costs downstream from $v$, 
hence we only have to reinforce the probability of $v$ with the cost-to-go $L(v)$ instead the total cost $L$.
Second, the extent to which a particular sample $v$ contributed to cost-to-go $L(v)$ should be compared to the cost-to-go the graph typically produces in the first place. This is the intuition behind subtracting the baseline $B$, also known as a \emph{control variate}. 
Third, we ideally would like to understand the precise contribution of $v$ to the cost-to-go, not for a particular value of downstream random variables, but on average. This is the idea behind the critic $Q$.
The advantage (difference between critic and baseline) therefore provides an estimate of `how much better than anticipated' the cost was, as a function of the random choice $v$. 

Baseline value functions are often used as baselines as they approximate the optimal baseline (see Appendix~\ref{sec:opt-baseline}). Critic value functions are often used as they provide an expected downstream cost given the conditioning set. Furthermore, as we will see in the next section, value functions can be estimated in a recursive fashion, enabling local learning of the values, and sharing of value functions between baselines and critics.
For these reasons, in the rest of this paper, we will only consider baseline value functions and critic value functions.

\emph{In the remainder of this section, we consider an arbitrary value function with conditioning set $\cX$.}

\subsection{Recursive estimation and Markov properties}
\label{sec:valueCritic:boostrapping}

A fundamental principle in RL is given by the Bellman equation -- which details how a value function can be defined recursively in terms of the value function at the next time step.  In this section, we generalize the notion of recursive computation to arbitrary graphs. 

The main result, which follows immediately from the law of iterated expectations, characterizes the value function for one set, as an expectation of a value function (or critic / baseline value function) of a larger set:
\begin{lemma}\label{lem:average}
Consider two sets $\cX^1 \subset \cX^2$, and an arbitrary quantity $S$. Then we have: $\E[V(\cX^2; S)|\cX^1]=V(\cX^1; S)$.
\end{lemma}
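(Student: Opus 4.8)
The plan is to recognize $V(\cX; S)$ as the conditional expectation $\E[S \mid \cX]$ and then invoke the tower property (law of iterated expectations). First I would note that, by Definition~\ref{def:value}, for each assignment $\mathbf{x}$ the quantity $V(\mathbf{x}; S) = \E_{\cG \setminus \cX \mid \cX = \mathbf{x}}[S]$ is exactly the value the conditional expectation $\E[S \mid \cX]$ takes on the event $\{\cX = \mathbf{x}\}$. Hence $V(\cX^2; S)$, read as a function of the (random) variables in $\cX^2$, coincides with the random variable $\E[S \mid \cX^2]$, and likewise $V(\cX^1; S) = \E[S \mid \cX^1]$.

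Next I would use the key structural fact that $\cX^1 \subset \cX^2$: the information carried by $\cX^1$ is contained in that carried by $\cX^2$, i.e.\ the $\sigma$-algebra generated by the variables in $\cX^1$ is a sub-$\sigma$-algebra of the one generated by $\cX^2$. The tower property of conditional expectation then gives $\E[\E[S \mid \cX^2] \mid \cX^1] = \E[S \mid \cX^1]$. Rewriting both sides in value-function notation yields $\E[V(\cX^2; S) \mid \cX^1] = V(\cX^1; S)$, which is the claim.

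The only point requiring care—and the closest thing to an obstacle—is the bookkeeping in the first step: the definition presents $V(\cX; S)$ as a deterministic map on assignments $\mathbf{x}$, whereas the statement to be proved takes the conditional expectation of $V(\cX^2; S)$ over the randomness in $\cX^2 \setminus \cX^1$. I would make explicit that taking $\E[\,\cdot \mid \cX^1]$ means averaging $V(\cX^2; S)$ against the conditional law of the $\cX^2$-variables given $\cX^1$, so that the nested conditional expectation is well posed and the standard tower identity applies verbatim. Once that identification is in place the result is immediate and requires no graph-specific reasoning; in particular it does not depend on any Markov or d-separation structure, only on the inclusion $\cX^1 \subset \cX^2$.
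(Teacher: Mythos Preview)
Your proposal is correct and essentially identical to the paper's own proof: both identify $V(\cX;S)$ with $\E[S\mid\cX]$ and apply the law of iterated expectations (tower property) using $\cX^1\subset\cX^2$. The paper's version is simply terser, omitting the $\sigma$-algebra bookkeeping you spell out.
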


This lemma is powerful, as it allows to relate value functions as average of over value function. A simple example in RL is the relation (here, in the infinite discounted case) between the Q function $Q^\pi(s,a)=\E[R|s,a]$ of a policy and the corresponding value function $V^\pi(s)=\E[R|s]$, which is given by $V^\pi(s)=\sum_a \pi(a|s) Q^\pi(s,a)$. Note this equation relates a critic value function to a value function typically used as baseline.

To fully leverage the lemma above, we proceed with a Markov property for graphs\footnote{borrowed from well known conditional independence conditions in graphical models, and adapted to our purposes.}, which captures the following situation: given two conditioning sets $\cX^1\subset \cX^2$, it may be the case that the additional information contained in $\cX^2$ does not improve the accuracy of the cost prediction compared to the information contained in the smaller set $\cX^1$.

\begin{definition}\label{def:Markov}
For conditioning set $\cX$, we say that $\cX$ is \textbf{Markov} (for $L(v)$) if for any $w$ such that there exists a directed path from $w$ to $L(v)$ not blocked by $\cX$, none of the descendants of $w$ are in $\cX$.  
\end{definition}

Let $\cX^\uparrow$ be the set of all ancestors of nodes $\cX$\footnote{Recall that by convention nodes are descendants of themselves, so $\cX \subset \cX^\uparrow$}.
\begin{property}\label{prop:Markov}
Let $\cX$ be Markov, consider any $\cX'$ such that $\cX\subset \cX'\subset \cX^\uparrow$. For any $x'$ assignment of values to the variables in $\cX'$, let $x'_{|\cX}$ be the restriction of $x'$ to the variables in $\cX$. Then:
$$\forall x', V(\cX=x'_{|\cX})=V(\cX'=x'),$$
which we will simply denote, with a slight abuse of notation, $$V(\cX')=V(\cX).$$
\end{property}

In other words, the information contained in $\cX^\uparrow\setminus \cX$ is irrelevant in terms of cost prediction, given access to the information in $\cX$. Several examples are shown in Fig.~\ref{fig:markov_examples}. It is worth noting that Def.\ \ref{def:Markov} does not rule out changes in the expected value of $L(v)$ after adding additional nodes to $\cX$ (cf.\ Fig.~\ref{fig:markov_examples}(d,e)). Instead it rules out correlations between $\cX$ and $L(v)$ that are mediated via \emph{ancestors} of nodes in $\cX$ as in the example in Fig.~\ref{fig:markov_examples}(a,b,c)).

\begin{figure}
    \centering
    \includegraphics[width=.9\textwidth]{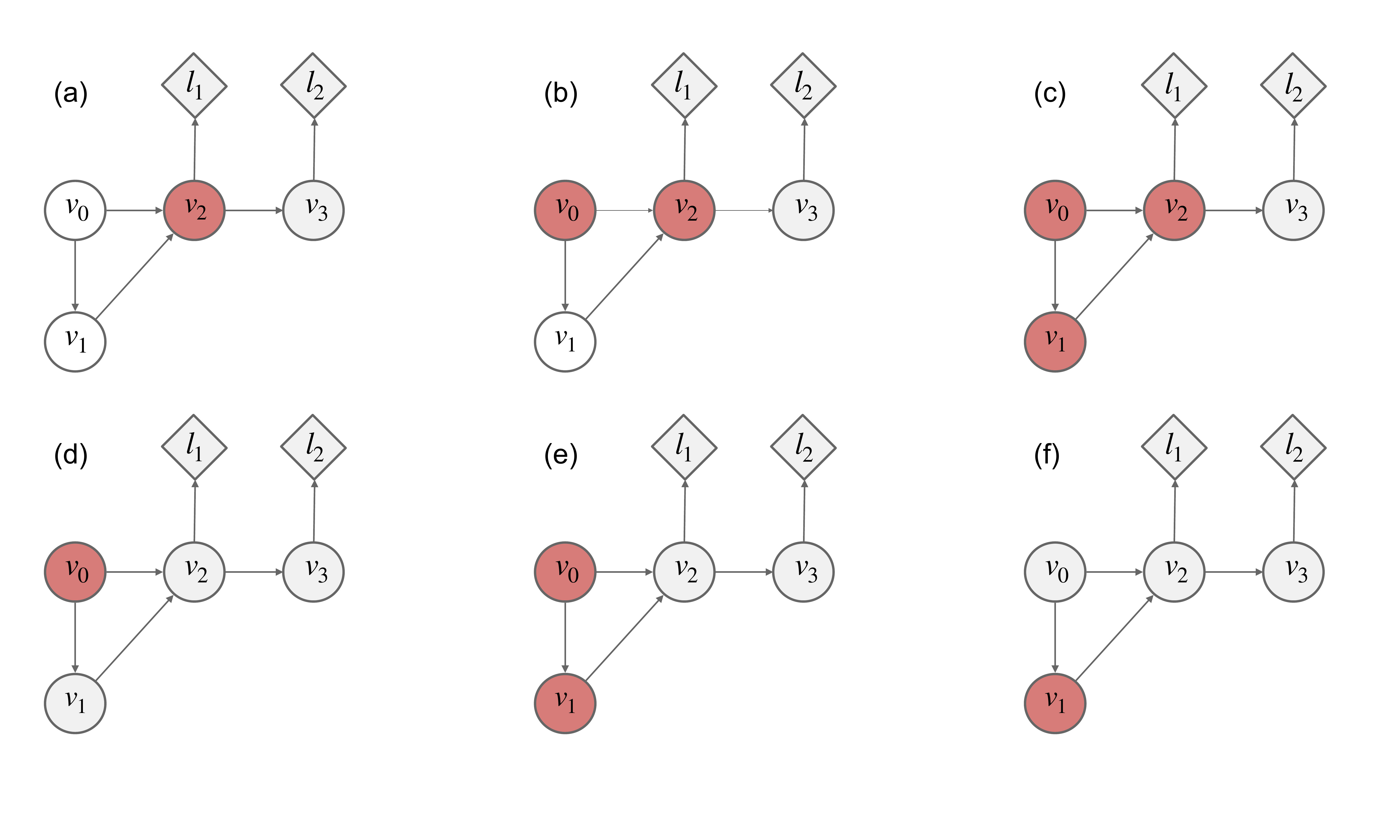}
    \caption{\emph{Markov property for value functions.}
    Consider node $v_2$ and corresponding cost $L(v_2)=\ell_1+\ell_2$; we consider the Markov property for $L(v_2)$.
    For (a,b,c), $\cX^{(a)} \subset \cX^{(b)} \subset \cX^{(c)}=\cX^{(a)\uparrow}$ and the Markov property holds since no ancestor of $\cX$ has an unblocked path to $L(v_2)$. This implies in particular it is sufficient to condition on $\cX^{(a)}$;
    all associated value functions compute the same value, i.e.\ $V(\cX^{(a)}) = V(\cX^{(b)}) = V(\cX^{(c)})$.
    The same applies to (d,e); note however than $V(\cX^{(d)}) \neq V(\cX^{(e)})$, though both sets are Markov as well. This is because neither $\cX^{(d)} \subset \cX^{(e)}\subset \cX^{(d)\uparrow} $ nor $\cX^{(e)}\subset \cX^{(d)}\subset \cX^{(e)\uparrow} $ hold.
    While the set inclusion property holds for (e,f), i.\ e.\ $\cX^{(f)} \subset \cX^{(e)}\subset \cX^{(f)\uparrow}$, we actually have $V(\cX^{(e)}) \neq V(\cX^{(f)})$. This is because $\cX^{(f)}$ is not Markov, which leads to an implicit marginalization of $v_0$ ($p(v_2|v_0,v_1)\neq \int_{v_0} p(v_0|v_1) p(v_2|v_1,v_0)$).
    \label{fig:markov_examples}}
\end{figure}

The notion of Markov set can be used to refine Lemma~\ref{lem:average} as follows:
\begin{lemma}[Generalized Bellman equation]\label{lem:Bellman}
Consider two sets $\cX^1 \subset \cX^{2\uparrow}$, and suppose $\cX^2$ is Markov. Then we have: $\E[V(\cX^2)|\cX^1]=V(\cX^1)$.
\end{lemma}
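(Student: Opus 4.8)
The plan is to reduce the claim to the two tools already established: the iterated-expectation identity of Lemma~\ref{lem:average}, which requires a genuine subset relation $\cX^1 \subset \cX^2$, and the ancestor-invariance of Property~\ref{prop:Markov}, which lets us enlarge a Markov set by ancestors without changing its value function. The obstacle is that the hypothesis here is only $\cX^1 \subset \cX^{2\uparrow}$, which is weaker than the nesting $\cX^1 \subset \cX^2$ that Lemma~\ref{lem:average} wants: the sets $\cX^1$ and $\cX^2$ need not be nested at all, since $\cX^1$ may contain strict ancestors of $\cX^2$. The bridge is to introduce the intermediate set $\cX^3 \eqdef \cX^1 \cup \cX^2$, which sits above $\cX^2$ (so Property~\ref{prop:Markov} applies) while simultaneously sitting above $\cX^1$ (so Lemma~\ref{lem:average} applies).

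Concretely, I would first verify the sandwich $\cX^2 \subset \cX^3 \subset \cX^{2\uparrow}$. The left inclusion is immediate from $\cX^3 = \cX^1 \cup \cX^2 \supseteq \cX^2$. For the right inclusion, I use $\cX^1 \subset \cX^{2\uparrow}$ (the hypothesis) together with $\cX^2 \subset \cX^{2\uparrow}$ (from the stated convention that every node is among its own ancestors), so that the union $\cX^3$ is still contained in $\cX^{2\uparrow}$. Since $\cX^2$ is Markov and $\cX^2 \subset \cX^3 \subset \cX^{2\uparrow}$, Property~\ref{prop:Markov} yields the identity of value functions $V(\cX^3) = V(\cX^2)$, understood in the sense spelled out there: evaluating the value function of the larger set $\cX^3$ at any assignment equals evaluating the value function of $\cX^2$ at the corresponding restriction.

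Next, because $\cX^1 \subset \cX^3$ is a true subset relation, I apply Lemma~\ref{lem:average} (with $S = L(v)$) to obtain $\E[V(\cX^3) \mid \cX^1] = V(\cX^1)$. The final step is to substitute $V(\cX^3) = V(\cX^2)$ inside this conditional expectation. Here the only point that needs care is that Property~\ref{prop:Markov} delivers equality of value functions as functions on assignments, hence equality of the corresponding random variables almost surely; this justifies replacing $V(\cX^3)$ by $V(\cX^2)$ under the expectation, giving $\E[V(\cX^2)\mid \cX^1] = V(\cX^1)$, which is the desired Generalized Bellman equation.

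I expect the main conceptual obstacle to be precisely the recognition that the auxiliary set $\cX^3 = \cX^1 \cup \cX^2$ simultaneously satisfies the Markov-enlargement hypothesis of Property~\ref{prop:Markov} and the nesting hypothesis of Lemma~\ref{lem:average}; once this is seen, the remaining work is the routine sandwich verification (which leans on the self-ancestor convention) and the harmless substitution of one value-function random variable for another inside the conditional expectation.
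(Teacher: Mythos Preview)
Your proof is correct and uses essentially the same approach as the paper: introduce an intermediate set that simultaneously contains $\cX^1$ (so Lemma~\ref{lem:average} applies) and lies between $\cX^2$ and $\cX^{2\uparrow}$ (so Property~\ref{prop:Markov} applies), then substitute. The only difference is cosmetic: the paper takes the intermediate set to be $\cX^{2\uparrow}$ itself, which lets the hypothesis $\cX^1 \subset \cX^{2\uparrow}$ feed directly into Lemma~\ref{lem:average} without forming a union or checking a sandwich, whereas you use $\cX^3 = \cX^1 \cup \cX^2$ and verify $\cX^2 \subset \cX^3 \subset \cX^{2\uparrow}$ explicitly.
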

The Markov assumption is critical in allowing to `push' the boundary at which the expectation is defined; without it, lemma~\ref{lem:average} only allows to relate value functions of sets which are subset of one another. But notice here that no such inclusion is required between $\cX^1$ and $\cX^2$ themselves. In the context of RL, this corresponds to equations of the type $V(s)=\sum_a \pi(s,a)\left(r(s,a)+ \sum_{s'} P(s'|s,a) V(s')\right)$ (see Fig.~\ref{fig:bellman-eq-RL}), though to get the separation between the reward and the value at the next time step, we need a slight refinement, which we detail in the next section.

We next go through a few examples of applying Lemmas ~\ref{lem:average} and ~\ref{lem:Bellman}.\clearpage
\begin{example}
In Fig.~\ref{fig:markov_examples}, we have $\cX^{(d)}\subset\cX^{(a)^\uparrow}$, and $\cX^{(a)}$ is Markov, from which we obtain $\E[V(v_2)|v_0]=V(v_0)$. We also have $\cX^{(d)}\subset\cX^{(f)^\uparrow}$, but because $\cX^{(f)}$ is not Markov, $\E[V(v_1)|v_0]\not = V(v_0)$.
On the other hand, because $\cX^{(a)}$ and $\cX^{(f)}$ are both subset of $\cX^{(c)}$ (this is in fact true of all sets in Fig.~\ref{fig:markov_examples}), so we have, regardless of Markov assumptions, $V(v_2)=\E[V(v_0,v_1,v_2)|v_2]$ and $V(v_1)=\E[V(v_0,v_1,v_2)|v_1]$.
\end{example}

\begin{figure}
    \begin{minipage}[c]{0.35\textwidth}
        \includegraphics[width=1\textwidth]{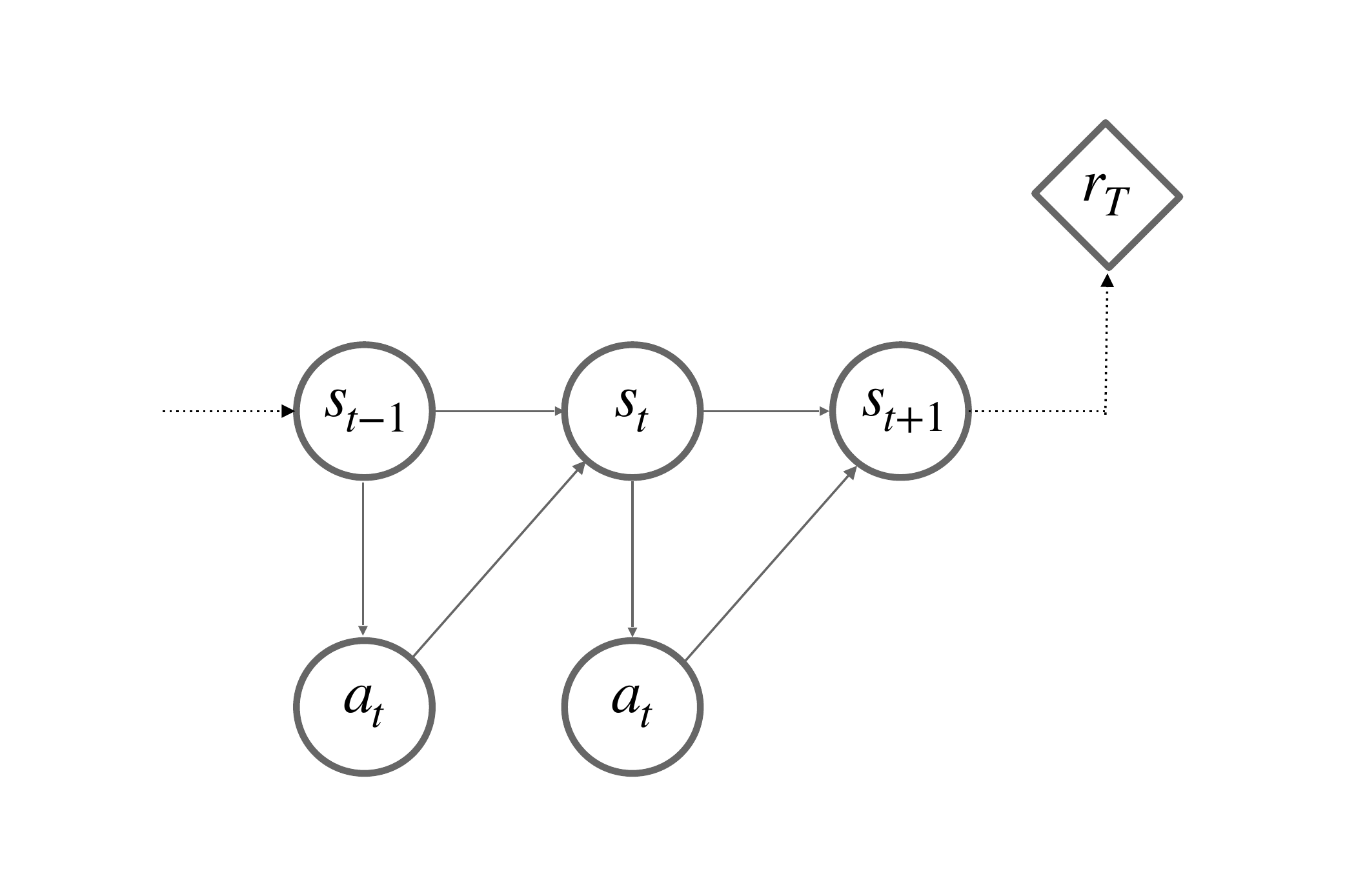}
    \end{minipage}\hfill
    \begin{minipage}[c]{0.65\textwidth}
        \captionof{figure}{
        \emph{Bellman equation in RL.}\\
        The Bellman equation for Markov Decision Processes with single terminal reward $r_T$. The Q-function is $Q(s_t, a_t)=\E[r_T|s_t, a_t]$, the value function $V(s_t)=\E[r_T|s_t]$. From Lemma~\ref{lem:average}, 
        $V(s_t)=\E[Q(s_t,a_t)|s_t]$. Since $s_{t+1}$ is Markov and $s_t\subset s_{t+1}^\uparrow=\lbrace s_j, a_{j-1}, j \leq t+1\rbrace $, from Lemma~\ref{lem:Bellman}, we have $V(s_t)=\E[V(s_{t+1})|s_t]$.
        }
        \label{fig:bellman-eq-RL}
\end{minipage}
\end{figure}

\subsection{Decomposed costs and bootstrap}

In the previous sections we have considered a value function with respect to a node $v$ which predicts an estimate of the cost-to-go $L(v)$ from node $v$ (note $L(v)$ was implicit in most of our notation). In this section, we write the cost-to-go at a node as a funtion of cost-to-go from other nodes or collection of nodes, and leverage the linearity of expectation to turn these relations between costs into relation between value functions.

A first simple observation is that because of the linearity of expectations, for any two scalar quantities $S_1,S_2$, real value $\lambda$ and set $\cX$, we have $V(\cX, \lambda S_1+S_2)=\lambda V(\cX, S_1)+ V(\cX, S_2)$. 

\begin{definition}[Decomposed costs]
For a node $v$ and a collection $\mathbf{V}=\lbrace V_0,V_1,\ldots, V_D\rbrace$ in the graph, we say that the cost $L(v)$ can be decomposed with set $\mathbf{V}$ if $L(v)=\sum_i L(V_i)$.
\end{definition}

This implies that cost nodes can be grouped in disjoint sets corresponding to the descendents of different sets $V_i$, without double-counting.
A common special case is a tree, where each $V_i$ is a singleton containing a single child $\{v_i\}$ of $v$.

\begin{theorem}[Bootstrap principle for SCGs]\label{thm:bootstrap}
Suppose the cost-to-go $L(v)$ from node $v$ can be decomposed with sets $\mathbf{V}=\lbrace V_0, \ldots, V_D\rbrace$, and consider an arbitrary set $\cX_v$ with associated value function $V(\cX_v,L(v))$.
Furthermore, for each set $V_i$, consider a set $\cX_{V_i}$ and associated value function: $V(\cX_{V_i}, L(V_i))$.
If for each $i$, $\cX_v\subset \cX_{V_i}$, or if for each $i$, $\cX_{V_i}$ is Markov and $\cX_v \subset \cX_{V_i}^\uparrow$, then:
\begin{align*}
 V(\cX_v) = \sum_i \E_{\cG \setminus \cX_v|\cX_v}\left[ V(\cX_{V_i})
 \right].
\end{align*}
\label{theorem:boostrap}
\end{theorem}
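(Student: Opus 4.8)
The plan is to reduce the claim to a per-component statement via linearity, and then dispatch each component to one of the two recursive-estimation lemmas according to which hypothesis is available.

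First I would combine the decomposition $L(v)=\sum_i L(V_i)$ with the linearity of the value function in its scalar argument (the identity $V(\cX,\lambda S_1+S_2)=\lambda V(\cX,S_1)+V(\cX,S_2)$ noted just before the definition of decomposed costs). Holding the conditioning set fixed at $\cX_v$, this gives
\begin{align*}
V(\cX_v,L(v)) = V\Big(\cX_v,\textstyle\sum_i L(V_i)\Big) = \sum_i V(\cX_v,L(V_i)),
\end{align*}
so it suffices to prove, for each $i$, the single-component identity
\begin{align*}
V(\cX_v,L(V_i)) = \E_{\cG\setminus\cX_v|\cX_v}\big[V(\cX_{V_i},L(V_i))\big].
\end{align*}
Here I would note that the outer conditional expectation $\E_{\cG\setminus\cX_v|\cX_v}[\cdot]$ is exactly the operator $\E[\cdot\,|\,\cX_v]$ appearing in the two lemmas, by the very definition of the value function as a conditional expectation.

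Next I would establish this per-component identity by a case split matching the two alternative hypotheses, taking $S=L(V_i)$ throughout. If $\cX_v\subset\cX_{V_i}$, I apply Lemma~\ref{lem:average} with $\cX^1=\cX_v$ and $\cX^2=\cX_{V_i}$, which yields $\E[V(\cX_{V_i},L(V_i))\,|\,\cX_v]=V(\cX_v,L(V_i))$ directly. If instead $\cX_{V_i}$ is Markov for $L(V_i)$ and $\cX_v\subset\cX_{V_i}^\uparrow$, then the hypotheses of Lemma~\ref{lem:Bellman} (the generalized Bellman equation) hold with the same assignment $\cX^1=\cX_v$, $\cX^2=\cX_{V_i}$, and it delivers the same identity \emph{without} requiring the inclusion $\cX_v\subset\cX_{V_i}$. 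Summing the per-component identities over $i$ and recombining with the linearity decomposition from the first step then reproduces the claimed equality.

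The step I would be most careful about, and the only genuinely delicate bookkeeping, is the conditioning-set accounting in this case split: the Markov property must be verified for $L(V_i)$, the cost-to-go of the \emph{subset} $V_i$, rather than for $L(v)$, and the containment $\cX_v\subset\cX_{V_i}^\uparrow$ must be precisely the one demanded by Lemma~\ref{lem:Bellman}. Both are supplied verbatim by the theorem's hypotheses, so no fresh graph-theoretic argument is needed; the entire substance of the result is carried by the cost decomposition $L(v)=\sum_i L(V_i)$ and by the two lemmas, which in turn rest on the law of iterated expectations. A worthwhile observation is that linearity is exactly what permits different components to use different conditioning sets $\cX_{V_i}$, and even different justifications (subset versus Markov), while being recombined under the common outer conditioning on $\cX_v$.
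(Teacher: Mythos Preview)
Your proposal is correct and follows essentially the same route as the paper: decompose $L(v)=\sum_i L(V_i)$, use linearity of conditional expectation to reduce to a per-component identity, and then invoke the iterated-expectation lemmas. You are in fact slightly more explicit than the paper's own proof, which only cites Lemma~\ref{lem:Bellman} and does not separately spell out the subset case via Lemma~\ref{lem:average}; your case split handling both alternative hypotheses is the right level of care.
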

Fig.~\ref{fig:bootstrapping_examples} highlights potential difficulties of defining correct bootstrap equations for various graphs.

\begin{figure}[h]
    \centering
    \includegraphics[width=0.8\textwidth]{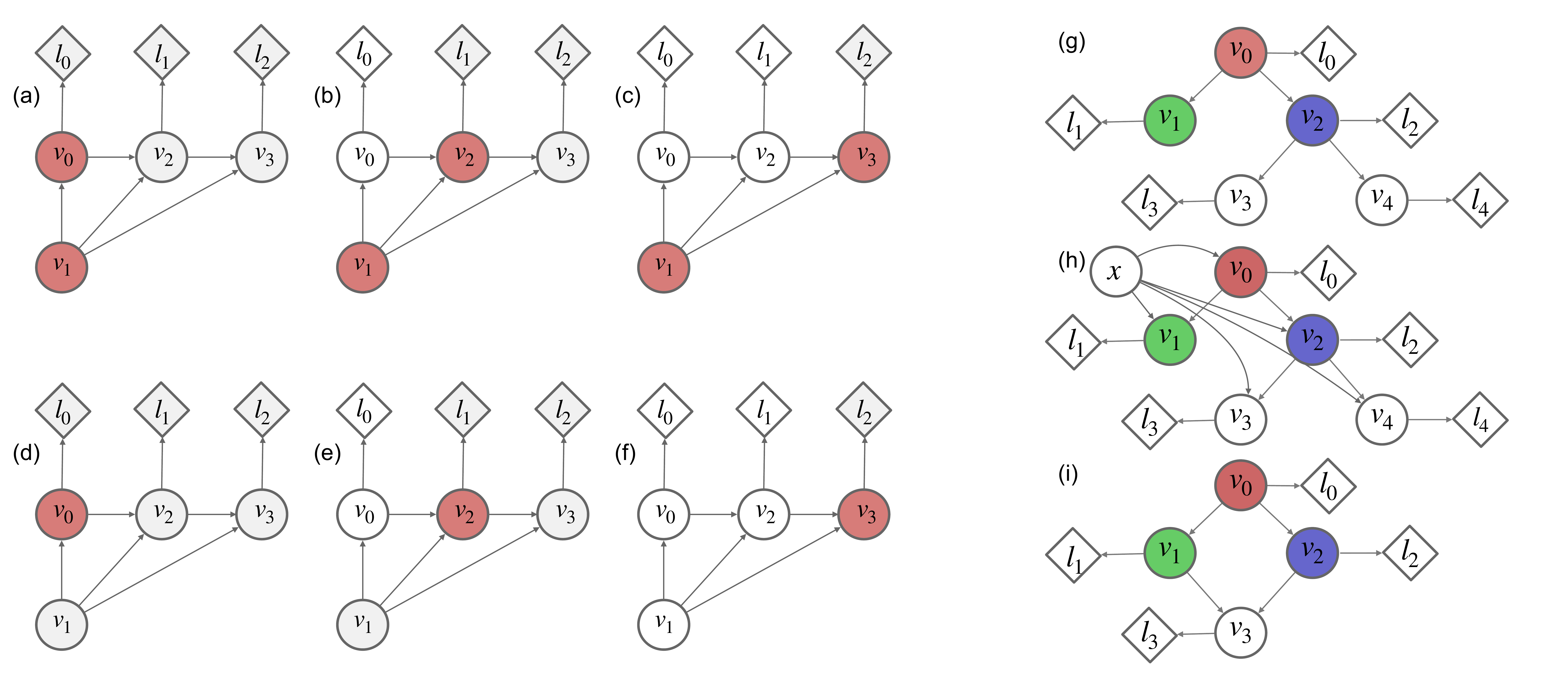}
    \caption{\emph{Bootstrapping.}
    The value functions in (a,b,c) and  (d,e,f) illustrate, for the same graph, ensembles of value functions that do and do not allow bootstrapping respectively. For (a,b,c) we have $V(v_0,v_1) = \E_{v_2,v_3 | v_0,v_1} [ \ell_1 + \ell_2 ] + \ell_0 = 
    \E_{v_2 | v_0,v_1} [ \E_{v_3 | v_2 , v_1 }[\ell_2] + \ell_1] + \ell_0 = 
    %\E_{v_2 | v_0,v_1} [ \E_{v_3 | v_2 , v_1 }[\ell_2 ]+ \ell_1] + \ell_0 = 
    \E_{v_2 | v_0,v_1} [ \E_{v_3 | v_2 , v_1 }[ V(v_3) ]+ \ell_1] + \ell_0 = \E_{v_2 | v_0,v_1} [ V(v_1,v_2)] + \ell_0$. In (d,e,f) a Markov chain is conditioned on an additional shared parent $v_1$. Here,
    $V(v_0) = \E_{v_1,v_2,v_3 | v_0} [ \ell_1 + \ell_2 ] + \ell_0$ cannot be expressed in terms of $V(v_2) = \E_{v_1,v_3 | v_2} [ \ell_2 ] + \ell_1 = \E_{v_1,v_3 | v_2} [ V(3) ] + \ell_1$ due to the implicit marginalization over the shared parent in each value function. Note, however, that $V(v_0)$ can be expressed in terms of $V(v_0, v_2)$. This is akin to a POMDP that can be translated into a MDP by treating the entire history as the state (cf.\ Section \ref{sec:Appendix:Examples:MDP}). (g,h,i) provide additional examples for three related graphs: In (g), thanks to Markov properties, value functions at parent nodes can be expressed in terms of the value functions of the children, e.g.\ $V(v_0) = \ell_0 + \E_{v_1 | v_0}[V(v_1)] + \E_{v_2 | v_0}[V(v_2)]$. In (h), simple Markov properties are missing, so bootstrap is possible only when value functions are additionally conditioned on $x$, i.e.\ $V(v_0,x) = \ell_0 + \E_{v_1 | v_0,x}[V(v_1)] + \E_{v_2 | v_0,x}[V(v_2,x)]$, or when the $\cX_v$ form supersets of each other $V(v_0) = \ell_0 + \E_{v_1 | v_0}[V(v_1,v_0)] + \E_{v_2 | v_0}[V(v_2,v_0)]$. But it is not possible to express e.g.\ $V(v_0)$ in terms of $V(v_2)$. In (i) the cost does not decompose if value functions $V(v_0),V(v_1),V(v_2)$ are naively defined in terms of all downstream costs.
    \label{fig:bootstrapping_examples}}
\end{figure}
From the bootstrap equation follows a special case, which we call \emph{partial averaging}, often used for critics: 
\begin{corollary}[Partial averages]\label{corr:partial-avg}
Suppose that for each $i$, $\cX_{V_i}$ is Markov and $\cX_{V_i}\subset \cX_v\subset \cX_{V_i}^\uparrow$. Without loss of generality, define $V_0$ as the collection of all cost nodes which can be deterministically computed from $\cX_v$. Then,
\begin{align*}
 V(\cX_v) = \sum_{\ell \in V_0} \ell+\sum_{i\geq 1} V(\cX_{V_i}).
\end{align*}
\end{corollary}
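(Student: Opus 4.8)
The plan is to obtain the corollary as a direct specialization of the bootstrap principle (Theorem~\ref{thm:bootstrap}), in which the extra inclusion $\cX_{V_i}\subset\cX_v$ causes the conditional expectations on the right-hand side to collapse into their integrands.

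First I would check that the premises of Theorem~\ref{thm:bootstrap} are met, so that the bootstrap equation may be invoked. For the groups $i\geq 1$, the corollary assumes that $\cX_{V_i}$ is Markov and $\cX_{V_i}\subset \cX_v\subset \cX_{V_i}^\uparrow$; in particular $\cX_{V_i}$ is Markov and $\cX_v\subset\cX_{V_i}^\uparrow$, which is exactly the second sufficient condition of Theorem~\ref{thm:bootstrap}. For the distinguished group $V_0$ I would simply take $\cX_{V_0}=\cX_v$, so that the first sufficient condition $\cX_v\subset\cX_{V_0}$ holds trivially. With the hypotheses verified, the bootstrap equation gives
\begin{align*}
 V(\cX_v) = \sum_i \E_{\cG\setminus\cX_v|\cX_v}\left[V(\cX_{V_i})\right],
\end{align*}
and it remains to evaluate each term.

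Next I would collapse the terms with $i\geq 1$. Since $V(\cX_{V_i})$ is by definition a function only of the variables in $\cX_{V_i}$, and $\cX_{V_i}\subset\cX_v$, the quantity $V(\cX_{V_i})$ is a deterministic function of a subset of the conditioning variables; conditioning on $\cX_v$ therefore fixes its value and $\E_{\cG\setminus\cX_v|\cX_v}[V(\cX_{V_i})]=V(\cX_{V_i})$. For the remaining $i=0$ term, recall $V_0$ collects precisely the cost nodes deterministically computable from $\cX_v$, so $L(V_0)=\sum_{\ell\in V_0}\ell$ is itself deterministically computable from $\cX_v$, i.e.\ constant once $\cX_v$ is fixed. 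Hence $V(\cX_{V_0})=V(\cX_v,L(V_0))=L(V_0)=\sum_{\ell\in V_0}\ell$, and the conditional expectation of this constant is again itself. Substituting both evaluations back yields
\begin{align*}
 V(\cX_v) = \sum_{\ell\in V_0}\ell + \sum_{i\geq 1} V(\cX_{V_i}),
\end{align*}
which is the claim.

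The argument is largely bookkeeping, and the point that most needs care is the ``without loss of generality'' clause together with the $i=0$ term. I would make sure that isolating all deterministically-computable cost nodes into a single group $V_0$ still produces a \emph{valid} decomposition $L(v)=\sum_i L(V_i)$ (every cost counted exactly once, consistent with Definition~\ref{def:Markov} and the decomposed-cost hypothesis of Theorem~\ref{thm:bootstrap}), and that choosing $\cX_{V_0}=\cX_v$ indeed makes $V(\cX_{V_0})$ equal the deterministic sum $\sum_{\ell\in V_0}\ell$ rather than a genuine expectation. Once those two consistency facts are settled, the vanishing of the conditional expectations for $i\geq 1$ follows immediately from the elementary fact that a function of a subset of the conditioning variables is constant under the conditional expectation, and no further analysis is required.
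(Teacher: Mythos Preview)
Your proposal is correct and matches the paper's intended argument: the paper presents Corollary~\ref{corr:partial-avg} as an immediate specialization of Theorem~\ref{thm:bootstrap} (``From the bootstrap equation follows a special case\ldots'') without spelling out the details, and you have filled those in exactly as expected, using the inclusion $\cX_{V_i}\subset\cX_v$ to collapse each conditional expectation and handling the $V_0$ term by taking $\cX_{V_0}=\cX_v$ so that the deterministically computable costs pass through unchanged.
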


The term `partial average' indicates that the value function is a conditional expectation (i.e.\ 'averaging' variables) but that it combines averaged cost estimates (the value terms $V(\cX_{V_i})$) and empirical costs ($\sum_{\ell \in V_0} \ell_v$). 
 Fig.~\ref{fig:value_partial averages} shows some examples for generic graphs. 

In the case of RL for instance, a k-step return is a form of partial average, since the return $R_t$ -- sum of all rewards downstream from state $s_t$ -- can be written as the sum of all rewards in $V_0=\{r_{t}, \ldots, r_{t+k-1}\}$ and downstream from $V_1=\{s_{t+k}\}$; the critic value function $V(s_t, \ldots, s_{t+k})$ is therefore equal\footnote{We assume for simplicity that the rewards are deterministic functions of the state; the result can be trivially generalized.} to $\sum_{t'=t}^{t+k-1} r_{t'}+V(s_{t+k})$. This implies in turn that $V(s_t)=\E[V(s_t, \ldots, s_{t+k})]$ is also equal to $\E[\sum_{t'=t}^{t+k} r_{t'}+V(s_{t+k})]$.

\begin{figure}
    \begin{minipage}[c]{0.65\textwidth}
        \includegraphics[width=1\textwidth]{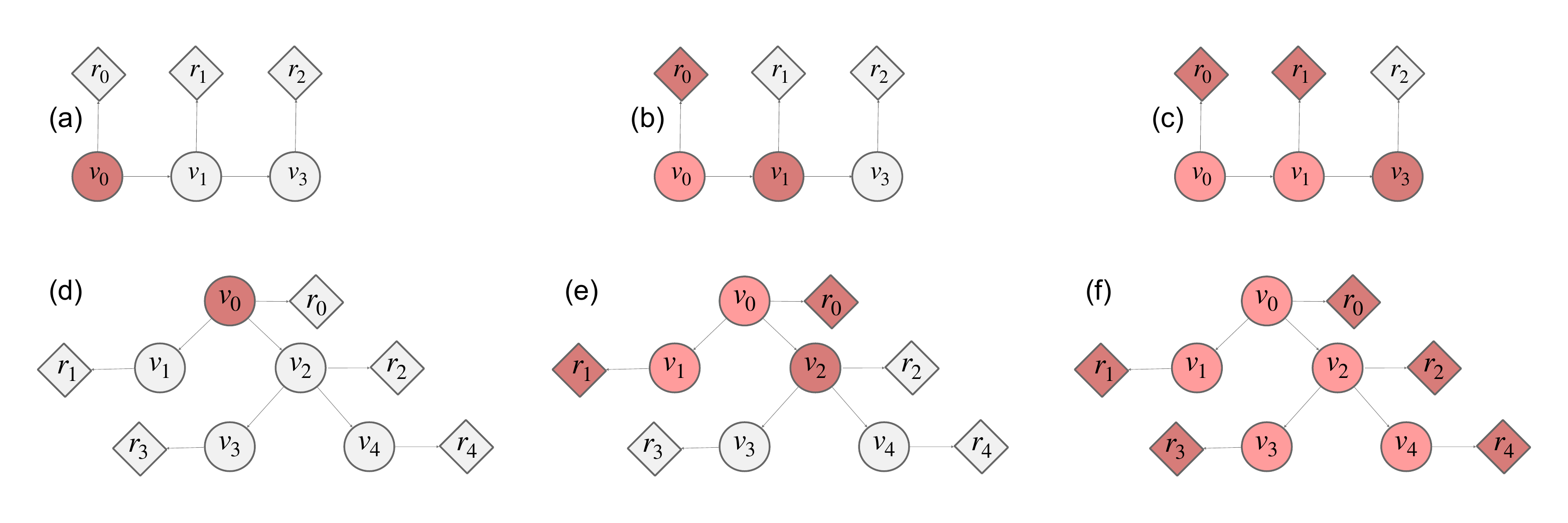}
    \end{minipage}\hfill
    \begin{minipage}[c]{0.35\textwidth}
        \captionof{figure}{
        \emph{Partial averages.}
        Examples of partial averages for a chain-structured and tree-structured graph respectively. In both cases the picture shows critics for node, computing the conditional expectations of $R=r_0+r_1+r_2$. Conditioning nodes are in red; dark red nodes contain information, but light nodes do not contain information not already contained in a dark node.
        The corresponding value functions are:
        a) $V(v_0)$, b) $r_0+V(v_1)$, c) $r_0+r_1+V(v_3)$, d) $V(v_0)$, e) $r_0+r_1+V(v_2)$, f) $r_0+r_1+r_2+r_3+r_4$.
        }
        \label{fig:value_partial averages}
\end{minipage}
\end{figure}

\subsection{Approximate Value functions}

\label{sec:comp-val-crit}
In practice, value functions often cannot be computed exactly.  In such cases, one can resort to learning parametric approximations.  For node $v$, conditioning set $\cX$, we will consider an approximate value function $\hV(\cX)$ as an approximation (with parameters $\phi$) to the value function $V(\cX)=\E_{\cG\setminus \cX_v|\cX_v}\left[L(v)\right]$.

Following corollary~\ref{corr:sq-min}, we know that for a possible assignment $\bx$ of variables $\cX$, $V(\bx)$ minimizes $\E_{\cG\setminus \cX|\cX}\left[(L(v)-v_{\bx})^2\right]$ over $v_{\bx}$. We therefore elect to optimize $\phi$ by considering the following weighted average, called a \emph{regression on return} in reinforcement learning:
\begin{eqnarray}
\mathcal{L}^\phi&=& \E_{\cX}\left[\E_{\cG\setminus \cX|\cX}\left[(L(v)-\hV(\cX))^2\right]\right]\nonumber\\ 
&=& \E_\cG\left[(L(v)-\hV(\cX))^2\right], \nonumber
\end{eqnarray}
from which we obtain (note that $\phi$ does not affect the distribution of any variable in the graph, and therefore exchange of derivative and integration follows under common regularity conditions):
\begin{eqnarray}
\DxbyDy {\mathcal{L}^\phi}{\phi} &= &\E_\cG\left[\DbyDp {\hV(\cX)} \left(\hV(\cX)-L(v)\right)\right], \label{eq:ROR}
\end{eqnarray}
which can easily be computed by forward sampling from $\cG$, even if conditional sampling given $\cX$ is difficult. This is possible because of the use of $\E_\cX$ as a particular weighting on the collection of problems of the type $\E_{\cG\setminus \cX|\cX}[(L(v)-\hV(\cX))^2]$. 

We now leverage the recursion methods from the previous sections in two different ways.
The first is to use the combination of approximate value functions and partial averages to define other value functions.
For a partial average as defined in theorem \ref{corr:partial-avg} and family of approximate value functions $\hV_{v_i}(\cX_{v_i})$, we can define an approximate value function through the bootstrap equation: $\sum_{\ell \in V_0} \ell_v+\sum_i \hV(\cX_{v_i})$. In other words, using the bootstrap equations, approximating value functions for certain sets automatically defines other approximate value functions for other sets.

In general, we can trade bias and variance by making $V_0$ larger (which will typically result in lower bias, higher variance) or not, i.e.\ by shifting the boundary at which variables are integrated out. 
An extreme case of a partial average is not an average at all, where $\cX=\cG$, in which case the value function is the empirical return $L(v)$. 
K-step returns in reinforcement learning (see section \ref{sec:Appendix:Examples:MDP}) are an example of trading bias and variance by choosing the integration boundary to be all nodes at a distance greater than $K$, and $V_0$ all costs at a distance less than $K$. $\lambda$-weighted returns in the RL literature (Section \ref{sec:Appendix:Examples:MDP}) are convex combinations of partial averages. $\lambda$ similarly controls a bias-variance tradeoff.

The second way to use the bootstrap equation is to provide a different, lower variance target to the value function regression. By combining theorem \ref{thm:bootstrap} and equation \ref{eq:ROR}, we obtain:
\begin{align}\label{eq:Bellman}
\DxbyDy {\mathcal{L}_\phi}{\phi}= \E_\cG\Bigg[\DbyDp &\hV(\cX) \Bigg(\hV(\cX)-
\left(\sum_{\ell \in V_0} \ell_v +\sum_i \hV(\cX_{v_i})\right)\Bigg)\Bigg].
\end{align}

By following this gradient, the value function $\hV(\cX_v)$ will tend towards the bootstrap value $\sum_{\ell \in V_0} \ell_v +\sum_i \hV(\cX_{v_i})$ instead of the return $L(v)$. 
Because the former has averaged out stochastic nodes, it is a lower variance target, and should in practice provide a stronger learning signal. Furthermore, as it can be evaluated as soon as $\cX_{v_i}$ is evaluated, it provides a local or online learning rule for the value at $v$; by this we mean the corresponding gradient update can be computed as soon as all sets $\cX_{v_i}$ are evaluated. 
In RL, this local learning property can be found in actor-critic schemes: when taking action $a_t$ in state $s_t$, as soon as the immediate reward $r_t$ is computed and next state $s_{t+1}$ is evaluated, the value function $V(s_t)$ (which is a baseline for $a_t$) can be regressed against low-variance target $r_t+V(s_{t+1})$ (which is also a critic for $a_t$), and the temporal difference error (or advantage) can be used to update the policy by following $\DxbyDy{\log \pi(a_t|s_t,\theta)}{\theta}(r_t+V(s_{t+1})-V(s_t))$.

\section{Gradient-based methods}
\label{sec:gradientCritic}

In the previous section, we developed techniques to lower the variance of the score-function terms $\E\left[\Big(\DbyDt \log p(v) \Big) \Big(Q(\cC)-B(\cB_v)\Big)\right]$ in the gradient estimate. This led to the construction of a surrogate loss $L^s$ which satisfies $\DxbyDy{}{\theta}\E\left[\cJ(\theta)\right]=\E\left[\DxbyDy{L^s}{\theta}\right]$.

In this section, we develop corresponding techniques to lower the variance estimates of the gradients of surrogate cost $\DxbyDy{}{\theta}L^s$. 
To this end, we will again make use of conditional expectations to partially average out variability from stochastic nodes.
This leads to the idea of a \emph{gradient-critic}, the equivalent of the value critic for gradient-based approaches. 

\subsection{Gradient-Critic}

\begin{definition}[Value-gradient] 
The \textbf{value-gradient} for $v$ with set $\cC$ is the following function of $\cC$: 
\begin{align*}
    g(\cC) =  \E_{\cG \setminus \cC|\cC}\left[ \DxbyDy{L^s}{v}\right].
\end{align*}
\end{definition}

Value-gradients are not directly useful in our derivations but we will see later that certain value-gradients can reduce the variance of our estimators. We call these value-gradient \emph{gradient-critics}.

\begin{definition}[Gradient-critic]
Consider two nodes $u$ and $v$, and a value-gradient $g_v$ for node $v$ with set $\cC$. If $\DxbyDy{v}{u}$ and $\DxbyDy{L^s}{v}$ are conditionally independent given $\cC$\footnote{See lemma \ref{lem:cond-ind-grads} in Appendix for a characterization of conditional independence between total derivatives.}, then we say the value-gradient is a \textbf{gradient-critic} for $v$ with respect to $u$.
\end{definition}

\begin{corollary}
If $\DxbyDy{v}{u}$ is deterministically computable from $\cC$, then $g_v(\cC)$ is a gradient-critic for $v$ with respect to $u$.
\end{corollary}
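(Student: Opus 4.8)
The plan is to reduce the claim to the elementary fact that a quantity which is constant given the conditioning set is conditionally independent of everything else given that set. Recall from the discussion following Definition~\ref{def:cg} in Section~\ref{sec:SCG} that ``$x$ is deterministically computable from $\cC$'' means precisely that, conditional on $\cC$, the quantity $x$ is a constant, i.e.\ it is (almost surely) a fixed deterministic function of the values taken by the variables in $\cC$. I would therefore first apply this characterization to the hypothesis, obtaining that $\DxbyDy{v}{u}$ is constant given $\cC$.

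Second, I would invoke the general probabilistic principle that a random variable $A$ which is a deterministic function of the conditioning variables is conditionally independent of any other random variable $B$ given those variables: conditioned on $\cC$ the law of $A$ is a point mass, so the joint conditional law factorizes, $p(A,B\mid \cC)=p(A\mid\cC)\,p(B\mid\cC)$, trivially. Taking $A=\DxbyDy{v}{u}$ and $B=\DxbyDy{L^s}{v}$ then gives that $\DxbyDy{v}{u}$ and $\DxbyDy{L^s}{v}$ are conditionally independent given $\cC$. This is exactly the hypothesis required by the definition of a gradient-critic, so $g_v(\cC)$ --- the value-gradient of $v$ with set $\cC$ --- is a gradient-critic for $v$ with respect to $u$, completing the argument.

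The step that requires the most care, and the one I expect to be the main obstacle, is making rigorous the notion of conditional independence \emph{between total derivatives}. The quantities $\DxbyDy{v}{u}$ and $\DxbyDy{L^s}{v}$ are not primitive nodes of the SCG but derived quantities, so one must first say in what sense they are random variables in the graph and what their conditional independence means; this is the content of the deferred Lemma~\ref{lem:cond-ind-grads} in the appendix, which characterizes such independence by regarding each total derivative as a deterministic function of the graph variables on an augmented (backward) graph. Once that characterization is granted, the hypothesis of deterministic computability does all the work: it forces the first derivative to be degenerate given $\cC$, so one never needs to reason about the joint distribution of the two derivatives, and the factorization above is immediate.
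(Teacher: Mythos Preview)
Your argument is correct and is precisely the intended one-line justification: the paper states this as a corollary without proof, and your reduction---deterministic computability from $\cC$ makes $\DxbyDy{v}{u}$ a constant given $\cC$, hence trivially conditionally independent of $\DxbyDy{L^s}{v}$ given $\cC$---is exactly the immediate reasoning being invoked. Your caveat about interpreting conditional independence for derived quantities is well-placed, but as you note, the degeneracy hypothesis sidesteps any need for the finer characterization in Lemma~\ref{lem:cond-ind-grads}.
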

We can use gradient-critics in the backpropagation equation. First, we recall the equation for backpropagation and stochastic backpropagation. Let $u$ be an arbitrary node of $\cH$, and $\{v_1,\ldots, v_d\}$ be the children of $u$ in $\cG$. The backpropagation equations state that:
\begin{eqnarray}
    \label{eq:backprop}
    \DxbyDy{L^s}{u}=\sum_i \DxbyDy{L^s}{v_i} \dxbydy{v_i}{u}
\end{eqnarray}
From this we obtain the stochastic backpropagation equations:
\begin{align*}
    \E_{\cG}\left[\DxbyDy{L^s}{u}\right]=\E_{\cG}\left[\sum_i \DxbyDy{L^s}{v_i} \dxbydy{v_i}{u}\right]
\end{align*}

Gradient-critics allow for replacing these stochastic estimates by conditional expectations, potentially achieving lower variance:
\begin{theorem}\label{thm:grad-critic}
For each child $v_i$ of $v$, let $g_{v_i}$ be a gradient-critic for $v_i$ with respect to $u$. We then have:
\begin{align*}
    \E_{\cG}\left[\DxbyDy{L^s}{u}\right]=\E_{\cG}\left[\sum_i g_{v_i} \dxbydy{v_i}{u} \right].
\end{align*}
\end{theorem}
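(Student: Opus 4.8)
The plan is to start from the stochastic backpropagation identity already recorded in the text, $\E_\cG[\DxbyDy{L^s}{u}] = \E_\cG[\sum_i \DxbyDy{L^s}{v_i}\,\dxbydy{v_i}{u}]$, and to show, one child at a time, that the total derivative $\DxbyDy{L^s}{v_i}$ may be swapped for its gradient-critic $g_{v_i}$ inside the expectation. By linearity of expectation it is enough to prove, for each child $v_i$ of $u$ with associated gradient-critic $g_{v_i}$ and conditioning set $\cC_i$, the identity $\E_\cG[\DxbyDy{L^s}{v_i}\,\dxbydy{v_i}{u}] = \E_\cG[g_{v_i}\,\dxbydy{v_i}{u}]$; summing these over $i$ and comparing with the backpropagation identity then gives the theorem.

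For a single term I would first apply the tower property, conditioning on $\cC_i$, to write $\E_\cG[\DxbyDy{L^s}{v_i}\,\dxbydy{v_i}{u}] = \E_{\cC_i}\big[\E_{\cG\setminus\cC_i|\cC_i}[\DxbyDy{L^s}{v_i}\,\dxbydy{v_i}{u}]\big]$. The crucial step is then the defining property of a gradient-critic: because $g_{v_i}$ is a gradient-critic for $v_i$ with respect to $u$, the downstream gradient $\DxbyDy{L^s}{v_i}$ and the edge derivative are conditionally independent given $\cC_i$, so the conditional expectation of their product factorizes as $\E_{\cG\setminus\cC_i|\cC_i}[\DxbyDy{L^s}{v_i}]\cdot\E_{\cG\setminus\cC_i|\cC_i}[\dxbydy{v_i}{u}]$. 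By the definition of the value-gradient, the first factor is exactly $g_{v_i}$.

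To finish, I would use that $g_{v_i}$ is a function of $\cC_i$ alone, hence measurable with respect to the conditioning set, so it can be reabsorbed into the inner conditional expectation: $g_{v_i}\cdot\E_{\cG\setminus\cC_i|\cC_i}[\dxbydy{v_i}{u}] = \E_{\cG\setminus\cC_i|\cC_i}[g_{v_i}\,\dxbydy{v_i}{u}]$. Collapsing the iterated expectation back with the tower property yields $\E_\cG[g_{v_i}\,\dxbydy{v_i}{u}]$, completing the single-term identity. This argument is structurally the gradient-space analogue of the critic computation underlying Theorem \ref{thm:value-critic}, where a critic could replace $L(v)$ because the score function and cost-to-go were conditionally independent given the critic set.

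The main obstacle I expect is the bookkeeping around the conditional-independence factorization rather than any deep difficulty. Two points need care: aligning the edge (direct) derivative $\dxbydy{v_i}{u}$ that appears in the backpropagation equation with the total derivative $\DxbyDy{v_i}{u}$ named in the definition of gradient-critic, since for a direct child these play the same role; and justifying that, under the convention that sampling operations have zero gradient, these derivatives are genuine $\cG$-measurable random variables to which the graph's conditional-independence (d-separation) statements apply --- this is precisely what the appendix lemma characterizing conditional independence of total derivatives supplies. Once the factorization $\E[XY\mid\cC_i]=\E[X\mid\cC_i]\,\E[Y\mid\cC_i]$ is licensed, everything else is routine manipulation of iterated expectations.
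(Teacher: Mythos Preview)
Your proposal is correct and follows essentially the same route as the paper's proof: apply the tower property to condition on $\cC_{v_i}$, factor the inner expectation using the conditional independence that defines a gradient-critic, identify the first factor as $g_{v_i}$ by the value-gradient definition, pull $g_{v_i}$ back inside as $\cC_{v_i}$-measurable, and collapse. Your added remarks on aligning $\dxbydy{v_i}{u}$ with $\DxbyDy{v_i}{u}$ for direct children and on invoking the appendix lemma for conditional independence of derivatives are apt but not elaborated in the paper's own (terse) proof.
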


Note a similar intuition as the idea of critic defined in the previous section. In both cases, we want to evaluate the expectation of a product of two correlated random variables, and replace one by its expectation given a set which makes the variables conditionally independent.

\subsection{Horizon gradient-critic}

More generally, we do not have to limit ourselves to $\lbrace v_1, v_2, \ldots, v_d\rbrace$ being children of $u$. 
We define a \emph{separator set} for $u$ in $\cH$ to be a set $\lbrace v_1, v_2, \ldots, v_d \rbrace$ such that every deterministic path from $u$ to the loss $L^s$ is blocked by a $v_i\in\cH$. 
For simplicity, we further require the separator set to be \emph{unordered}, which means that for any $i \not = j$, $v_j$ cannot be an ancestor to $v_i$; we drop this assumption for a generalized result in the appendix~\ref{sec:extension}.
Under these assumptions, the backpropagation rule can be rewritten (see \citep{naumann2008optimal, TotalSCG}, also appendix~\ref{sec:extension}):
\begin{eqnarray}
    \label{eq:horizon-backprop}
    \E_{\cG}\left[\DxbyDy{L^s}{u}\right]=\E_{\cG}\left[\sum_i \DxbyDy{L^s}{v_i} \DxbyDy{v_i}{u}\right].
\end{eqnarray}

\begin{theorem}\label{thm:horizon-gradient}
Assume that for every $i$, $g_{v_i}$ is a gradient critic for $v_i$ with respect to $u$. We then have:
\begin{align*}
    \E_{\cG}\left[\DxbyDy{L^s}{u}\right]= \E_{\cG}\left[ \sum_i g_{v_i} \DxbyDy{v_i}{u}\right].
\end{align*}
\end{theorem}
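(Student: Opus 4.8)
The plan is to mirror the proof of Theorem~\ref{thm:grad-critic} almost verbatim, with the only structural change being the starting point: instead of the elementary backpropagation identity~\eqref{eq:backprop} over the children of $u$, I would start from the horizon backpropagation identity~\eqref{eq:horizon-backprop} over the unordered separator set $\{v_1,\ldots,v_d\}$. Equation~\eqref{eq:horizon-backprop} is itself the genuinely new ingredient, and I would take it as given, citing \citep{naumann2008optimal, TotalSCG} and the derivation in Appendix~\ref{sec:extension}; the unordered separator-set assumption is what guarantees that every deterministic path from $u$ to $L^s$ is counted exactly once, so that the total derivatives $\DxbyDy{v_i}{u}$ compose correctly without double-counting.

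Granting~\eqref{eq:horizon-backprop}, the core of the argument is to replace each factor $\DxbyDy{L^s}{v_i}$ by its gradient-critic $g_{v_i}$ term by term. Fix $i$ and let $\cC_i$ be the conditioning set of $g_{v_i}$. First I would apply the tower property to write $\E_{\cG}[\DxbyDy{L^s}{v_i}\DxbyDy{v_i}{u}] = \E_{\cC_i}[\E_{\cG\setminus\cC_i|\cC_i}[\DxbyDy{L^s}{v_i}\DxbyDy{v_i}{u}]]$. The gradient-critic hypothesis states precisely that $\DxbyDy{L^s}{v_i}$ and $\DxbyDy{v_i}{u}$ are conditionally independent given $\cC_i$, so the inner conditional expectation of the product factors into the product of conditional expectations. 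By the definition of the value-gradient, the first factor is exactly $g_{v_i}=\E_{\cG\setminus\cC_i|\cC_i}[\DxbyDy{L^s}{v_i}]$, which is a function of $\cC_i$; I would then reabsorb it into the inner expectation and collapse the tower back to $\E_{\cG}[g_{v_i}\DxbyDy{v_i}{u}]$. Summing over $i$ and using linearity of expectation yields the claim.

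The step that carries the real content is the factorization of the conditional expectation, which relies on genuine conditional independence of the two total derivatives given $\cC_i$. The characterization of this independence (via d-separation of the relevant deterministic paths) is the one supplied by Lemma~\ref{lem:cond-ind-grads} in the appendix, and I would invoke it to certify that the gradient-critic property is exactly what licenses splitting $\E[XY\mid\cC_i]=\E[X\mid\cC_i]\E[Y\mid\cC_i]$. Everything else---the two applications of the tower property and pulling the $\cC_i$-measurable factor $g_{v_i}$ in and out of the inner expectation---is routine. I expect the main obstacle to lie entirely in establishing~\eqref{eq:horizon-backprop}: once the separator set is unordered and the horizon rule holds, the gradient-critic substitution is structurally identical to the single-edge case of Theorem~\ref{thm:grad-critic}.
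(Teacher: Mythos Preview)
Your proposal is correct and matches the paper's approach: the paper does not spell out a separate proof of Theorem~\ref{thm:horizon-gradient}, but its structure makes clear that the intended argument is exactly the one you describe---replay the proof of Theorem~\ref{thm:grad-critic} verbatim, substituting the horizon backpropagation identity~\eqref{eq:horizon-backprop} (with total derivatives $\DxbyDy{v_i}{u}$) for the single-step identity~\eqref{eq:backprop} (with partials $\dxbydy{v_i}{u}$). Your identification of the conditional-independence factorization as the only substantive step, and of~\eqref{eq:horizon-backprop} as the external ingredient to be taken as given, is exactly right.
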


This theorem allows us to `push' the horizon after which we start using gradient-critics.
It constitutes  the gradient equivalent of partial averaging, since it combines stochastic backpropagation (the terms $\DxbyDy{v_i}{u}$) and gradient critics $g_{v_i}$.

\subsection{The gradient-critic bootstrap}
\label{sec:gradientCritic:boostrapping}

We now show how the result from the previous section allows to derive a generic notion of bootstrapping for gradient-critics. 
\begin{theorem}[Gradient-critic bootstrap]\label{thm:gradientCritic:bootstrapping}
Consider a node $u$, unordered separator set $\lbrace v_1,\ldots, v_d\rbrace$. Consider value-gradient $g_u$ with set $\cC_u$ for node $u$, and $(g_{v_1}, g_{v_2}, \ldots, g_{v_d})$ with Markov sets $(\cC_{v_1},\ldots, \cC_{v_d})$ critics for $v_i$ with respect to $u$. Suppose that for all $i$, $\cC_{u}\subset \cC_{v_i}$. Then,
\begin{align}
    g_u =\sum_i \E_{\cC_{v_i}|\cC_u}\left[g_{v_i} \: \DxbyDy{v_i}{u} \right].
\end{align}
\end{theorem}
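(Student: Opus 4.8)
The plan is to treat this as the total-derivative analogue of the value bootstrap (Theorem~\ref{thm:bootstrap}): I would expand $\DxbyDy{L^s}{u}$ through the separator set using horizon backpropagation, and then use the law of iterated expectations together with the defining conditional independence of gradient-critics to replace each downstream gradient $\DxbyDy{L^s}{v_i}$ by its critic $g_{v_i}$, exactly as a value is replaced by its critic value function in the value case.

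First I would start from the definition $g_u = \E_{\cG\setminus\cC_u|\cC_u}\!\left[\DxbyDy{L^s}{u}\right]$ and decompose the integrand across the unordered separator set. Because $\lbrace v_1,\ldots,v_d\rbrace$ blocks every deterministic path from $u$ to $L^s$ and is unordered (so each such path crosses exactly one $v_i$, with no double counting), the multivariate chain rule for total derivatives gives the pointwise identity $\DxbyDy{L^s}{u} = \sum_i \DxbyDy{L^s}{v_i}\,\DxbyDy{v_i}{u}$, which is the pointwise statement underlying eq.~\ref{eq:horizon-backprop}. By linearity of conditional expectation this yields $g_u = \sum_i \E_{\cG\setminus\cC_u|\cC_u}\!\left[\DxbyDy{L^s}{v_i}\,\DxbyDy{v_i}{u}\right]$. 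I would then handle each summand separately: writing $X = \DxbyDy{L^s}{v_i}\DxbyDy{v_i}{u}$, the inclusion $\cC_u\subset\cC_{v_i}$ lets the tower property insert an inner conditioning, $\E[X\mid\cC_u] = \E\big[\E[X\mid\cC_{v_i}]\mid\cC_u\big]$. On the inner expectation I invoke the gradient-critic hypothesis: since $\DxbyDy{v_i}{u}$ and $\DxbyDy{L^s}{v_i}$ are conditionally independent given $\cC_{v_i}$, their product factorizes, and $\E[\DxbyDy{L^s}{v_i}\mid\cC_{v_i}] = g_{v_i}$ by the definition of the value-gradient, so $\E[X\mid\cC_{v_i}] = g_{v_i}\,\E[\DxbyDy{v_i}{u}\mid\cC_{v_i}]$. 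Refolding the outer expectation, and using that $g_{v_i}$ is a function of $\cC_{v_i}$, recovers $\E[g_{v_i}\DxbyDy{v_i}{u}\mid\cC_u]$, i.e.\ the claimed summand $\E_{\cC_{v_i}|\cC_u}[g_{v_i}\DxbyDy{v_i}{u}]$; summing over $i$ gives the result. The Markov hypothesis on each $\cC_{v_i}$ plays the same role here as in the Bellman lemma (Lemma~\ref{lem:Bellman}), guaranteeing that each value-gradient $g_{v_i}$ is the consistent conditional expectation under the graph's own factorization.

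The step I expect to be the main obstacle is the careful bookkeeping of conditioning sets — specifically, justifying that the horizon-backprop decomposition may be applied \emph{inside} the conditional expectation $\E[\cdot\mid\cC_u]$ rather than only under a full expectation, and pinning down the meaning of the summand $\E_{\cC_{v_i}|\cC_u}[g_{v_i}\DxbyDy{v_i}{u}]$. Since $\DxbyDy{v_i}{u}$ is in general not a deterministic function of $\cC_{v_i}$, this expression should be read as the full conditional expectation $\E_{\cG\setminus\cC_u|\cC_u}[g_{v_i}\DxbyDy{v_i}{u}]$ (equivalently, with $\DxbyDy{v_i}{u}$ replaced by its $\cC_{v_i}$-conditional mean). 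Once the pointwise chain-rule identity is established and the tower/conditional-independence manipulations are verified with these nested sets, the factorization — and hence the bootstrap identity — follows directly from the gradient-critic definition.
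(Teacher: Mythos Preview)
Your proposal is correct and follows essentially the same approach the paper uses for the closely related Theorem~\ref{thm:grad-critic}: expand $\DxbyDy{L^s}{u}$ via the (horizon) backpropagation identity, then use the tower property with $\cC_u\subset\cC_{v_i}$ and the conditional-independence definition of a gradient-critic to replace $\DxbyDy{L^s}{v_i}$ by $g_{v_i}$. The paper does not actually include a separate proof of Theorem~\ref{thm:gradientCritic:bootstrapping} in the appendix (only of Theorems~\ref{thm:grad-critic} and~\ref{thm:gradientCritic:criticGradient}), so your derivation is exactly the intended one; your caveat about reading $\E_{\cC_{v_i}|\cC_u}[\,\cdot\,]$ as the full conditional expectation $\E_{\cG\setminus\cC_u|\cC_u}[\,\cdot\,]$ is well-placed, and note that the Markov assumption on $\cC_{v_i}$ is not actually used in the algebraic argument (the extended version in Appendix~\ref{sec:extension} drops it).
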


\begin{figure}
    \centering
    \includegraphics[width=1\textwidth]{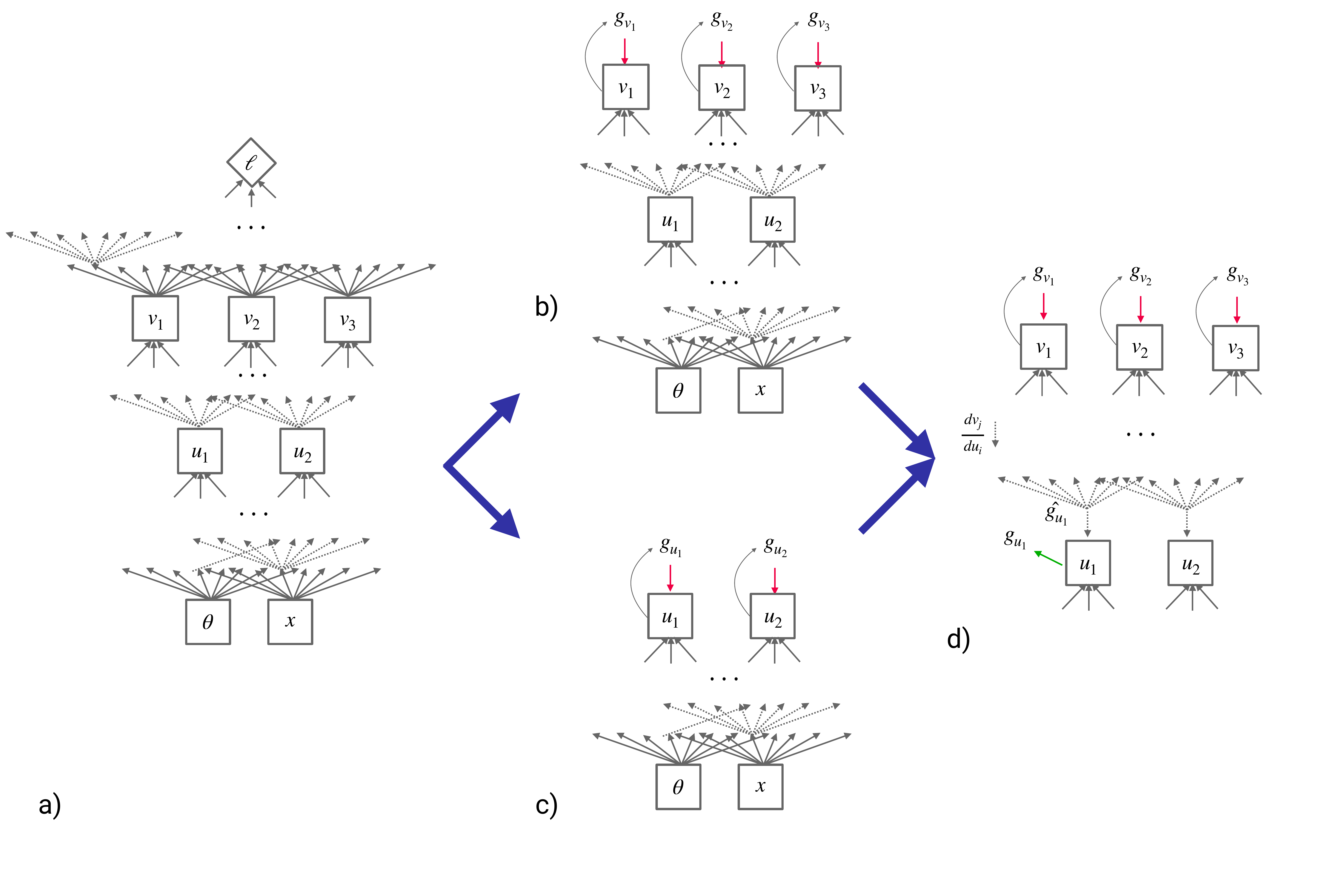}
    \caption{\emph{Gradient-critics.}\\
    a) A computation graph where $\lbrace u_1, u_2 \rbrace$ and $\lbrace v_1, v_2, v_3 \rbrace$ are both unordered separator sets. For the latter for instance, every path from $\theta$ to $\ell$ goes through $\lbrace v_1, v_2, v_3 \rbrace$, and given the set of all parents of $\lbrace v_1, v_2, v_3 \rbrace$, they can be computed in any order. \\
    b) The (horizon) gradient-critic for set $\lbrace v_1, v_2, v_3 \rbrace$:
    the forward and backward computations occurring after the separator set can be replaced by a gradient-critic:$\DxbyDy{\ell}{\theta}=\sum_i g_{v_i} \DxbyDy{v_i}{\theta}$.\\
    c) The gradient-critic for set $\lbrace u_1, u_2 \rbrace$.\\
    d) The gradient-critic bootstrap: for node $u_1$, one can use the separator set $\lbrace v_1, v_2, v_3 \rbrace$ to estimate the gradient of the loss with respect to $u_1$ as the `partial gradient-critic' $\hat g_{u_1}=\sum_i g_{v_i} \DxbyDy{v_i}{u_1}$. The gradient critic $g_{u_1}$ can be regressed against either the empirical gradient $\DxbyDy{L}{u_1}$ or the partially averaged gradient $\hat g_{u_1}$.
    \label{fig:horizon-grad}}
\end{figure}

\subsection{Gradient-critic and gradient of critic}
\label{sec:gradient_critic:gradient_of_critic}

The section above proposes an operational definition of a gradient critic, in that one can replace the sampled gradient $\DxbyDy{L^s}{u}$ by the expectation of the gradient $g_u$. A natural question follows -- is a value-gradient the gradient of a value function? Similarly, is a gradient-critic the gradient of a critic function? 

It is in general not true that the value-gradient must be the gradient of a value function. However, if the critic set is Markov, the gradient-critic is the gradient of the critic.
\begin{theorem}\label{thm:gradientCritic:criticGradient}
Consider a node $v$ and critic set $\cC$, and corresponding critic value function $Q(\cC)$ and gradient-critic $g_v(\cC)$. If $\cC$ is Markov for $v$, then we have:
\begin{align*}
    \DxbyDy{Q(\cC)}{v}=g_v.
\end{align*}
\end{theorem}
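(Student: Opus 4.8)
The plan is to recognise the identity as a local, conditional instance of the surrogate-loss property behind Theorem~\ref{thm:value-critic}, with the node $v$ playing the role of the input we differentiate. Writing $z=\cG\setminus\cC$ for the averaged-out variables, the two quantities to compare are $Q(\cC)=\E_{z|\cC}[L(v)]$ and $g_v(\cC)=\E_{z|\cC}\left[\DxbyDy{L^s}{v}\right]$; since $v\in\cC$, the claim $\DxbyDy{Q(\cC)}{v}=g_v$ is exactly the statement that one may differentiate this conditional expectation and exchange $\DxbyDy{}{v}$ with $\E_{z|\cC}$, up to the reinforce correction produced by $v$'s presence in the conditioning set.

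First I would differentiate $Q(\cC)=\int L(v)\,p(z\mid\cC)\,dz$ with respect to $v$. Because $v$ appears both in the integrand and in the conditional density, the product rule and the log-derivative identity give a pathwise term $\E_{z|\cC}\left[\DxbyDy{L(v)}{v}\right]$ and a score term $\E_{z|\cC}\left[L(v)\,\DxbyDy{\log p(z\mid\cC)}{v}\right]$. The pathwise term is precisely the differentiable-path contribution to $\DxbyDy{L^s}{v}$, so everything reduces to matching the score term against the reinforce part of $\DxbyDy{L^s}{v}$, namely $\sum_w \DxbyDy{\log p(w)}{v}\,(\sg{Q_w}-\sg{B_w})$ summed over stochastic descendants $w$ of $v$.

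The decisive step is to evaluate $\DxbyDy{\log p(z\mid\cC)}{v}$ using the Markov hypothesis. I would write $\log p(z\mid\cC)=\sum_{w\in\cS}\log p(w)-\log p(\cC)$ and argue that, because $\cC$ is Markov for $L(v)$, conditioning on $\cC$ acts on the distribution of $L(v)$ exactly like fixing the values in $\cC$ and propagating the graph forward: any node with an unblocked path to $L(v)$ has no descendant in $\cC$, so no backward inference through ancestors of $\cC$ reaches $L(v)$, and in particular the normaliser $\log p(\cC)$ and the upstream factors contribute no $v$-dependence that survives under $\E_{z|\cC}[L(v)\,\cdot\,]$. What remains is the sum of forward scores over stochastic descendants of $v$, so the score term becomes $\E_{z|\cC}\left[L(v)\sum_w \DxbyDy{\log p(w)}{v}\right]$. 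Applying the (conditional) critic property to replace the cost-to-go by $\sg{Q_w}$, and the baseline property to subtract $\sg{B_w}$ for free, then delivers exactly the reinforce part of $g_v$. Equivalently, and perhaps more cleanly, the whole argument can be packaged by noting that the Markov property turns the conditional law $p(\,\cdot\mid\cC)$ into a bona fide forward SCG in which $v$ is an input node with downstream cost $L(v)$, so that Theorem~\ref{thm:value-critic} applies verbatim and yields $\DxbyDy{}{v}\E_{z|\cC}[L(v)]=\E_{z|\cC}\left[\DxbyDy{L^s}{v}\right]$.

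I expect the Markov step to be the main obstacle: one has to show precisely that the Markov condition is what removes the $v$-dependence of the conditional normaliser and the upstream factors, so that conditioning may be treated as intervention and no spurious score terms attached to ancestors of $\cC$ appear. This is exactly the failure mode (implicit marginalisation over shared ancestors) illustrated for non-Markov sets in Fig.~\ref{fig:markov_examples} and Fig.~\ref{fig:bootstrapping_examples}, and it is why the hypothesis cannot be dropped. The remaining exchange of derivative and integral, and the critic/baseline manipulations, are routine under the same regularity conditions as Theorem~\ref{thm:main}.
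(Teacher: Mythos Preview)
Your approach is essentially the same as the paper's. Both proofs differentiate $Q(\cC)=\int L(v)\,p(z\mid\cC)\,dz$ under the integral and use the Markov hypothesis to argue that the conditional density factors as a product of forward conditionals, so that the score term reduces to $\sum_w \DxbyDy{\log p(w|h_w)}{v}$ over downstream stochastic nodes and combines with the pathwise term into $\DxbyDy{L^s}{v}$. The paper formalises your ``conditioning acts like intervention'' step through two auxiliary results (Lemma~\ref{lem:Decomp} and Property~\ref{prop:MarkovDecomp}), which together show that $L(v)$ is a deterministic function of a set $\cV$ whose stochastic ancestors $\cW$ satisfy $p(\cW\mid\cC)=\prod_{w\in\cW}p(w\mid h_w)$; this is exactly the statement you would need to make your argument about the normaliser $\log p(\cC)$ rigorous, and your alternative packaging via Theorem~\ref{thm:value-critic} is the same idea at a higher level.
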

This characterization of the gradient-critic as gradient of a critic plays a key role in using reparametrization techniques when gradients are not computable. For instance, in a continuous control application of reinforcement learning, the state of the environment can be assumed to be an unknown but differentiable function of the previous state and of the action. In this context, a critic can readily be learned by predicting total costs. By the argument above, the gradient of this critic actually corresponds to the gradient-critic of the unknown environment dynamics. This technique is at the heart of differentiable policy gradients \citep{lillicrap2015continuous} and stochastic value gradients \citep{heess2015learning}.

When estimating the gradient critic from the critic, one needs to make sure that the conditional distribution on $\cC$ conditional on $\cG\setminus\cC$ has `full density' (i.e. that the loss function can be evaluated in a neighborhood of the values of $\cC$), otherwise the resulting gradient estimate will be incorrect. This is an issue for instance if variables in $\cC$ are deterministic function of one another. To address this issue, one can sample $\cC$ from a different distribution than $p(\cC|\cG_v\setminus \cC)$, for instance by injecting additional noise in the variables. One may have to use bootstrap equation instead of regression on return, since other we would be estimating the critic of a different graph (with added noise). See for instance \citep{silver2014deterministic, lillicrap2015continuous}.

\subsection{Gradient-critic approximation and computation}

Following the arguments regarding conditional expectation and square minimization from section~\ref{subsec:Values}, we know that $g_v$ satisfies the following minimization problem:
\begin{align*}
g_v(\cC)=& \text{argmin}_{g_{c_v}}\: \E_{\cG\setminus \cC|\cC}\left[\left(g_{c_v}-\DxbyDy{L^s}{v}\right)^2\right]
\end{align*}
For a parametric approximation $g^\phi_v$, and using the same weighting scheme as section~\ref{sec:comp-val-crit}, it follows that:
\begin{eqnarray}
\mathcal{L}^\phi&=& \E_{\cC}\left[\E_{\cG\setminus \cC|\cC}\left[\left(g^\phi_v(\cC)-\DxbyDy{L^s}{v}\right)^2\right]\right]\nonumber\\ 
&=& \E_\cG\left[\left(g^\phi_v(\cC)-\DxbyDy{L^s}{v}\right)^2\right] \nonumber\\
\DxbyDy {\mathcal{L}^\phi}{\phi} &= &\E_\cG\left[\DbyDp {g^\phi_v(\cC)} \left(g^\phi_v(\cC)-\DxbyDy{L^s}{v}\right)\right] \label{eq:ROR2}
\end{eqnarray}

Finally, if $\cC$ is Markovian for $v$, from Theorem~\ref{thm:gradientCritic:criticGradient}, the gradient-critic $g^\phi_v$ can be defined in two ways: first, as the critic of a gradient ($\E[\DxbyDy{L^s}{v}|\cC])$, and second, as the gradient of a critic ($\DxbyDy{}{v}\E[L|\cC]=\DxbyDy{}{v}Q(\cC)$).

In this case, it makes sense to parameterize $g^\phi_v$ as the derivative of a function $Q^\phi(\cC)$, where $v\in \cC$, i.e.\ define $g^\phi_v(\cC)=\DxbyDy{Q^\phi(\cC)}{v}$.
The gradient-critic can therefore defined directly by the gradient-critic loss $\E_\cG\left[\left(\DxbyDy{Q^\phi(\cC)}{v}-\DxbyDy{L^s}{v}\right)^2\right]$, and indirectly by the critic loss $\E_\cG\left[\big(Q^\phi(\cC)-L\big)^2\right]$\footnote{For the critic loss, note however that approximating a function well does not imply the corresponding gradients are close.}.
It may therefore makes sense to combine them:
\begin{eqnarray}
\label{eq:Sobolev}
\mathcal{L}^\phi = \E_\cG\Big[\alpha\big(Q^\phi(\cC)-L\big)^2+\beta \left(\DxbyDy{Q^\phi(\cC)}{v}-\DxbyDy{L^s}{v}\right)^2\Big] 
\end{eqnarray}
where $\alpha, \beta$ are relative weights for each norm. This is called a \emph{Sobolev} norm, see also \citep{czarnecki2017sobolev}.

\section{Applications in the RL literature}
\label{sec:Examples}
In this section and next we discuss multiple examples of models from the RL and probabilistic modeling literature. We present the corresponding SCGs and associate surrogate losses.

\subsection{Markov decision processes and partially observed Markov decision processes}
\label{sec:Appendix:Examples:MDP}
Figure \ref{fig:scg_examples:MDP} shows several examples of decision processes from the reinforcement learning literature. For simplicity we focus on the undiscounted, finite horizon case but note that  generalizations to the discounted, infinite horizon case are straightforward.\footnote{
In fact, the finite horizon requires particular care in that policies and value functions become time-indexed but we ignore this here in favor of a simple notation.
}

\paragraph{Markov decision processes (MDPs)} 
As explained in in Fig.~\ref{fig:example_models} the MDP objective is the (in our case undiscounted) return $\E_\tau [ \sum_{t=1}^T r(s_t, a_t) ]$ where trajectories $\tau=(\ldots, s_t, a_t, \ldots)$ are drawn from the distribution obtained from the composition of policy $\pi_\theta$ and system dynamics: $\tau \sim p(\tau, \theta) = p(s_0) \prod_t \pi(a_t|s_t, \theta) p(s_{t+1}|s_t, a_t)$.

Fig.\ \ref{fig:scg_examples:MDP}(a) shows a vanilla, undiscounted MDP with a policy $\pi_\theta(a | s)$ parameterized by $\theta$.
A large number of different estimators have been proposed for this model using a variety of different critics including Monte-Carlo returns, $k$-step returns, $\lambda$-weighted returns etc.:
\begin{align}
    Q_{a_t}^{MC} &= \sum_{t'\geq t} r(s_{t'},a_{t'}) \\
    Q_{a_t}^0 &= Q_{a_t}(s_t, a_t)\\
    Q_{a_t}^{k} &= \sum_{t'=t}^{t+k} r(s_{t'},a_{t'}) + V(s_{t+k+1}), \ \ \ \ \ \forall k>0 \\
    Q_{a_t}^{\lambda} &= (1-\lambda) \sum_{k=0}^\infty \lambda^k Q_{a_t}^{k}.    
\end{align}
The $\lambda$-weighted returns are an instance of a convex combination of a different set of critics, in this case of $k$-step returns critics. $K$-step returns are examples of partial averages. These critics can be used both to estimate the advantage for a policy update using the policy gradient theorem.  Furthermore, they can be valuable as bootstrap targets for learning value functions. In general, the critic used for estimating the advantage can be different from the one used to construct a target for the value function update.

Fig.\ \ref{fig:scg_examples:MDP}(c) shows an example of an MDP where independence between the components of a multi-dimensional action is assumed,
corresponding to a factorized policies with $\pi_{\theta}(a_t \vert s_t) = \prod_i \pi_\theta(a_t^i \vert s_t)$.\footnote{
This is the predominant case in practice, especially in continuous action spaces where policies are frequently chosen to be factorized Gaussian distributions.
}
This motivates the use of action-conditional baselines \cite[e.g.][]{wu2018variance} or marginalized critics. For instance, the action-conditional baseline for action dimension $i$, in state $s_t$ is given by $V_{a_t^i}(s_t,a_t^{-i}) = \mathbb{E}_{a_t^i\vert s_t, a_t^{-i}}\left[  Q(s_t,a_t) \right]$, where $a_t^{-i}=(a_t^1,\ldots, a_t^{i-1}, a_t^{i_1}, \ldots a_t^N)$.
This is a valid baseline value function according to our Def.\ \ref{def:Baseline} as the remaining $a_t^{-i}$ are non-descendants of the action $a_t^i$.

\paragraph{Partially Observable Markov Decision Processes (POMDPs)}
Fig.\ \ref{fig:scg_examples:MDP}(c,d,e) show two examples of POMDPs. (c) shows the standard setup where the state $s_t$ is unobserved. Information about the state of the environment are available to the agent only via observations $o_t \sim p(\cdot | s_t)$. Observations typically provide only partial information about the state. To act optimally (or to predict the value in some state) the agent therefore needs to infer (the distribution over) the state at timestep $t$ given the interaction history $h_t = (o_0, a_0, \dots, o_t)$: $p(s_t | h_t)$. The aggregated interaction history is often referred to the internal agent state or belief state  $b_t$. Internal state and action choice are usually trained from return (prediction - when training value functions - and maximization - when optimizing the policy). Note that when training the internal state from returns only, there is no guarantee that $b_t$ will correspond to a true `belief state', e.g. the sufficient statistics of the filtering distribution $p(s_t | o_0, a_0, \dots o_t)$; for a discussion of differences between internal and belief states, see for instance~\citep{igl2018deep, gregor2018temporal,moreno2018nb}.

The Markov structure of the model naturally suggests that value functions for time step $t$ should be conditioned on the entire observation history up to time-step $t$. Since the policy shown in the figure is dependent on the entire observation history, a critic has to be conditioned on the entire observation history (through $b_t$) too in order to satisfy Definition \ref{def:critic}. Furthermore, the Markov property of the model also requires conditioning of the value function on the entire observation history for bootstrapping to be valid, independent of the dependency structure chosen for the policy.

But the theorems presented in this paper suggest interesting, less explored alternatives, in particular when the state $s_t$ is available at training time (but not at testing time, so that the agent policy cannot depend on $s_t$). For instance, since $s_t$ is a non-descendendant of the action $a_t$, the baseline for action $a_t$ may be trained to depend on the full state, for instance by using a value function $V(s_t, b_t)$. This baseline is likely to be significantly more accurate since it has access to information which may be very predictive of the return. It can also be used to help train the internal state $b_t$ of the agent better, since $V(s_t, b_t)$ is a valid, lower variance bootstrap target for training the value function $V(b_t)$, which in turn will affect the representation $b_t$ learned by the agent. $s_t$ may also be used for critics, for instance by using Q functions which depend on both the environment and agent state: $Q(s_t, b_t, a_t)$.

The example in Fig.\ \ref{fig:scg_examples:MDP}(d) is a special case of the general POMDP. Shown is a multi-task MDP with shared transition dynamics but with reward function that depends on a goal $g$ (which varies across tasks). The Markov structure suggests conditioning both policy and value functions on the goal variable $g$ if observed (in which case the model is a MDP with $g$ being part of the state), or the entire interaction history when $g$ is unobserved. As for a general POMDP setting, conditioning on $g$ or on the state-history is optional for the policy but required for bootstrapping of the value function (of course performance will suffer when the policy does not have access to sufficient information). 

Figure \ref{fig:scg_examples:MDP}(e) shows a similar setup but with the transition dynamics dependent on an unobserved variable $d$ affecting the dynamics. The same arguments as for (d) and (b) apply. The option of conditioning the value function but not the policy on the system dynamics $d$ has been exploited e.g.\ in the sim-to-real work in \citep{peng2017sim}. The setup gives better baselines and allows bootstrapping of the value function, while the  policy learns to act robustly without knowledge of the true dynamics $d$.

\begin{figure}[H]
    \centering
    \includegraphics[width=0.9\textwidth]{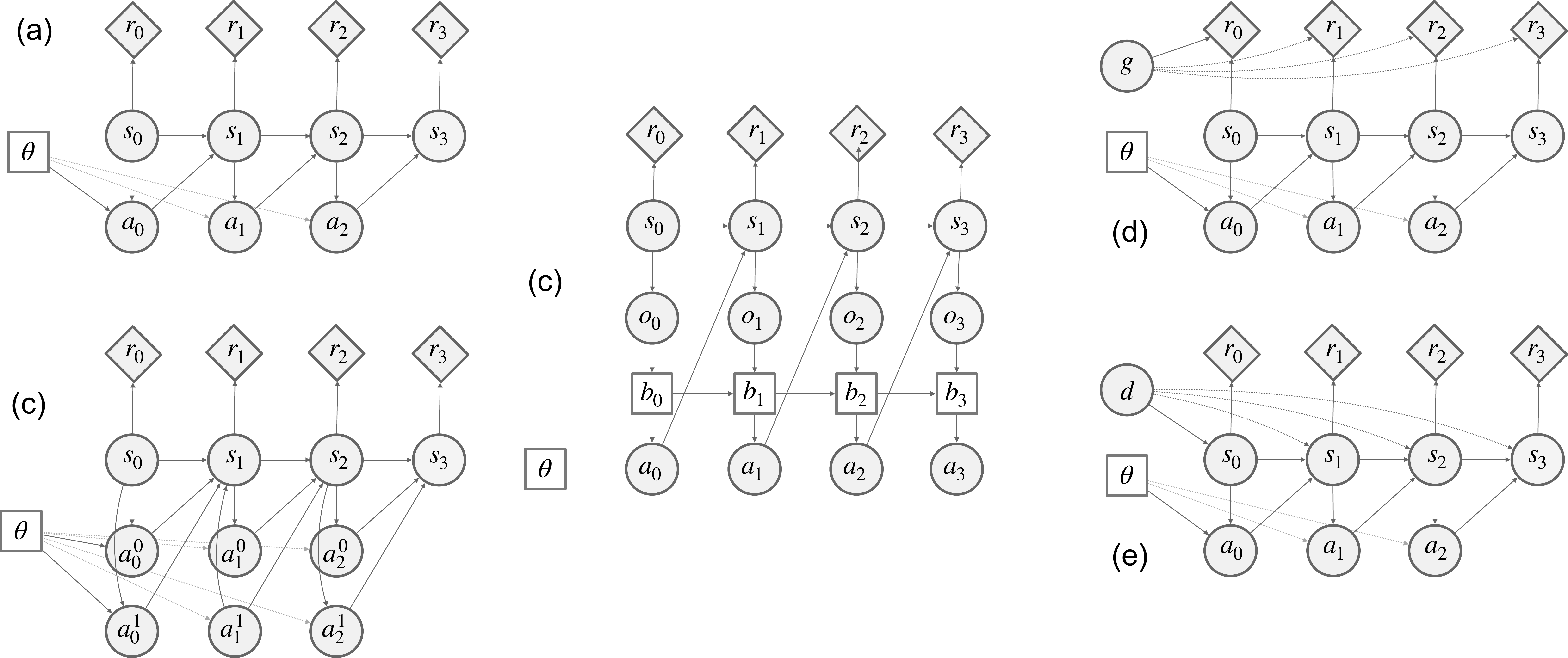}
    \caption{Example for SCG representations of decision processes. 
    (a) Vanilla MDP. 
    (b) MDP with two-dimensioal, independent  (factored) actions $a^0_t$ and $a^1_t$.
    (c) General POMDP.
    (d) Multi-task MDP with goal variable $g$.
    (e) MDP with unobserved system dynamics $d$.
    }
    
    \label{fig:scg_examples:MDP}
\end{figure}

\subsection{Reparameterized MDPs, value-gradients, and black-box policy search}

\paragraph{Reparameterization} The method of reparameterization is heavily exploited in the probabilistic modeling literature, but it can also be useful in RL by applying it to MDPs. Fig.\ \ref{fig:scg_examples:svg_black} (a) shows again the regular MDP from Fig.\ \ref{fig:scg_examples:MDP}a. Fig.\ \ref{fig:scg_examples:svg_black} (b) shows the fully reparameterized version of \cite{heess2015learning} where $\pi(a | s)$ and $p(s_t | s_{t-1}, a_{t-1})$ are replaced by deterministic functions of independent noise: $a_t = \pi(s_t, \epsilon_t)$ and $s_t = f(s_{t-1}, a_{t-1}, \xi_{t-1})$ respectively. (Deterministic policies and deterministic system dynamics can be treated as simple special cases of this general setup.)
If the (differentiable) system dynamics and distributions of the noise sources $\xi_t$ are know, we can use the standard backpropagation algorithms to compute policy gradients, as all stochastic nodes are root nodes in the graph (cf.\ e.g.\ Eqs.\ 6-8 in \citealt{heess2015learning}). 
When the system dynamics are unknown or non-differentiable, gradients of learned critics $Q(s,a)$ w.r.t.\ the actions can be used to obtain gradients both for deterministic and stochastic policies, e.g.
\begin{align}
    \frac{\partial J}{\partial \theta} = 
    \sum_t \mathbb{E} \left [ \frac{\partial Q_{a_t}}{\partial a_t} \frac{\partial \pi_\theta}{\partial \theta} \right ],
\end{align}
as discussed in Section \ref{sec:gradient_critic:gradient_of_critic}. For a deterministic policy this corresponds to the \emph{Deterministic Policy Gradients} (DPG) algorithm \citep{
silver2014deterministic,lillicrap2015continuous};  for stochastic policies it is a special case of the stochastic value gradients (SVG; \citealt{heess2015learning}) family, SVG(0).
The SVG(K) family also contains the analogue of partial averages. For instance, the policy gradient computed from 1-step MC returns (SVG(1)) is given by 
\begin{align}
    \frac{\partial J}{\partial \theta} = 
    \sum_t \mathbb{E} \left [ \left (\frac{\partial V_{s_{t+1}}}{\partial s_{t+1}} \frac{\partial s_{t+1}}{\partial a_t} + \frac{\partial r_t}{\partial a_t} \right ) \frac{\partial \pi_\theta}{\partial \theta} \right ].
\end{align}

We can further construct convex combinations (e.g.\ similar to $\lambda$-weighted returns) of such partial averages. These ideas have also been studied in the literature e.g.\ by \cite{werbos1982applications,fairbank2012value}. 

\paragraph{Black-box policy search} These methods ignore the temporal structure of the MDP and instead perform search directly at the level of the parameter $\theta$. 
A variety of different algorithms exist, with a particular simple form arising from representing an MDP in the equivalent from shown in Fig.\ \ref{fig:scg_examples:svg_black}(c).  The standard score-function estimator is used to learn a distribution over policy parameters $\theta$, which is parameterized by $\theta_0$, such as mean and standard deviation of $\theta$:
\begin{align}
    J(\theta_0) = \mathbb{E}_{\theta \vert \theta_0}\left[ 
    \mathbb{E}_{\cG\vert\theta} \left [ 
    \sum_t r(s_t, a_t)
    \right]
    \right]
\end{align}

The reparameterized version of this model is shown Fig.\ \ref{fig:scg_examples:svg_black}(d); it is closely related to a variety of recent proposals in the literature \cite{fortunato2017noisy, plappert2017parameter}.

\begin{figure}[H]
    \centering
    \includegraphics[width=0.9\textwidth]{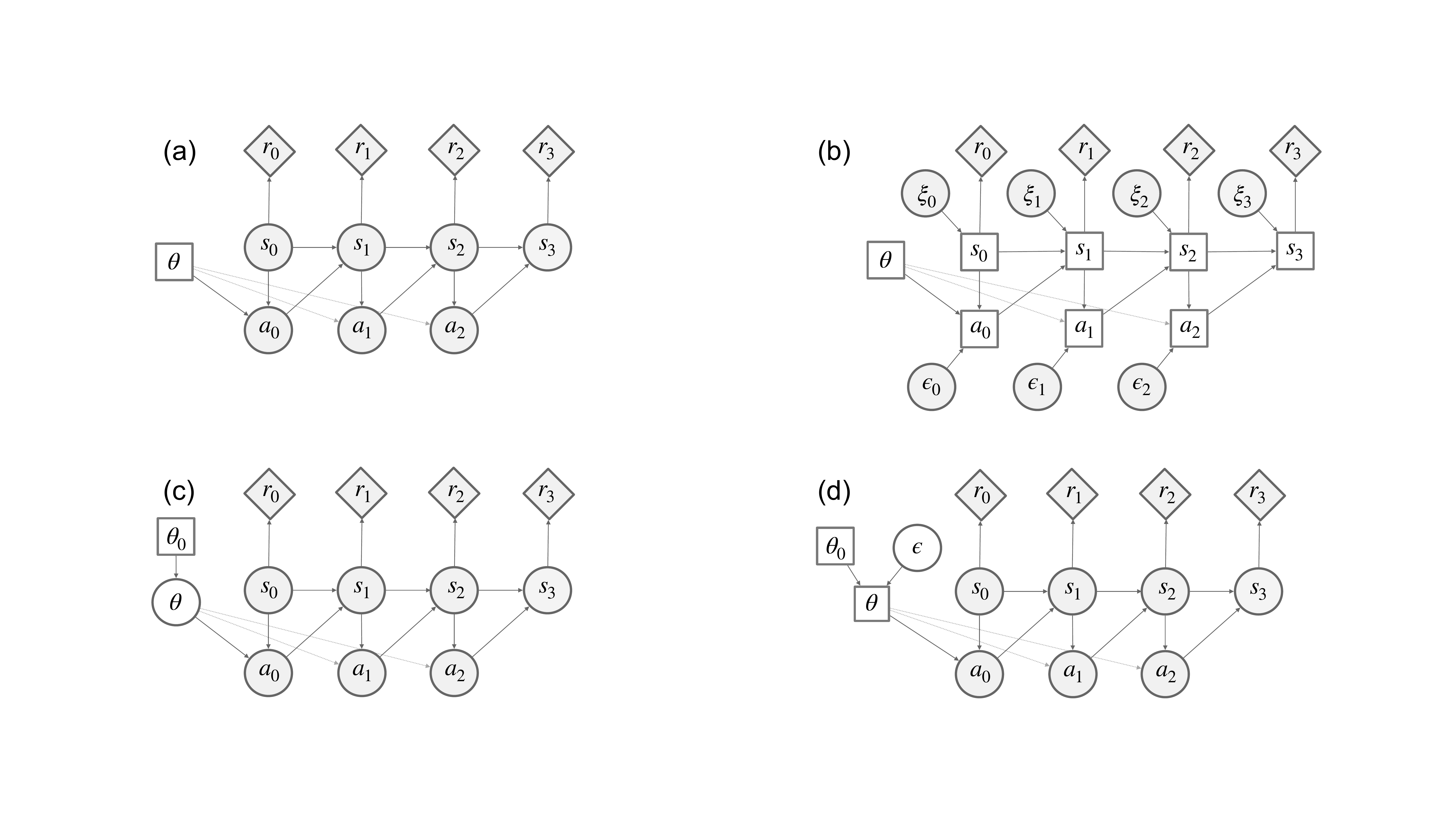}
    \caption{(a) MDP from Fig.\ \ref{fig:scg_examples:MDP} (for reference). (b) Fully reparameterized MDP as discussed in \cite{heess2015learning}. The stochastic action and state nodes have been replaced by deterministic nodes and independent noise variables. (c) Evolution strategies learn the parameters $\theta_0$ of a distribution over parameters $\theta$. (d) Same as (c) but with the distribution over $\theta$ reparameterized, similar to \emph{noisy networks}.}
    \label{fig:scg_examples:svg_black}
\end{figure}

\subsection{Hierarchical RL and Hierarchical policies}
\label{sec:Appendix:Examples:HRL}
\begin{figure}[H]
    \centering
    \includegraphics[width=0.9\textwidth]{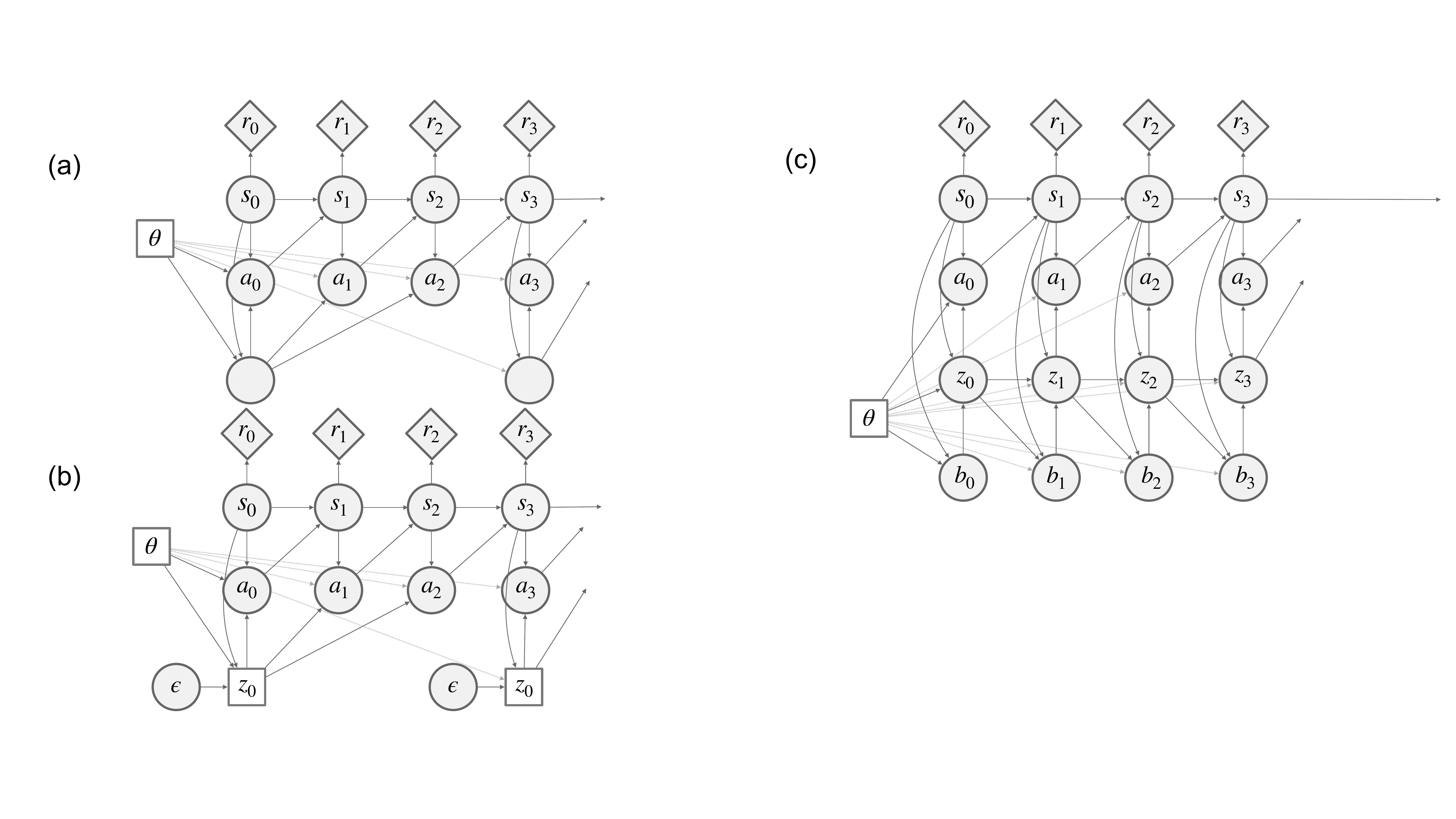}
    \caption{\emph{Examples of ``hierarchical'' policies: a) Policy with latent variable (``option'') that is fixed for $K=3$ steps. b) similar to (a) but the option is reparameterized (as e.g.\ in \cite{heess2016learning}). c) Options with variable duration as in the \emph{Option Critic} \citep{bacon2017option}.}}
    \label{fig:scg_examples:hrl}
\end{figure}

Figure \ref{fig:scg_examples:hrl} shows several simple examples of MDPs in which the policies have been augmented with latent variables. Such latent variables can, for instance, be seen as implementing the notion of options. For the discussion we assume that the objective remains unchanged (i.e.\ that no additional loss terms are introduced and we aim to optimize the full architecture to maximize expected reward).

Fig.~\ref{fig:scg_examples:hrl}(a) shows a simple example of a policy with ``options'' that have a fixed duration of three steps. For options of duration $M$ ($M=3$ in the example) the trajectory distribution is drawn from
$\tau \sim p(\tau; \theta) = p(s_0) \prod_{t=0,M,2M,\dots}\pi(z_t | s_t; \theta) \prod_{t'=0}^{M-1}\pi(a_{t+t'}|s_{t+t'}, z_t; \theta) p(s_{t+t'+1}|s_{t+t'}, a_{t+t'})$. Below we also use $[\cdot]_M$ to denote the ``current'' option, i.e. $[t]_M = \lfloor t/M \rfloor M$.

A variety of gradient estimators can be constructed. The main change compared to the choices discussed in \ref{sec:Appendix:Examples:MDP} is that critic and baselines have to reflect the dependence of $\tau_{\geq t}$ on $z_{[t]_M}$. 
Valid baselines for $z_t$ and $a_t$ would, for instance, be $V_{z_t}(s_t) = \E_{\tau_{\geq t} | s_t} [ \sum_{t' \geq t} r(s_{t'},a_{t'}) ]$ for $t=0,M, \dots$ and $V_{a_t}(s_t, z_{[t]_M}) = \E_{\tau_{\geq t} | s_t, z_{[t]_M}} [ \sum_{t' \geq t} r(s_{t'},a_{t'}) ]$, and similarly for critics:
\begin{align}
Q_{a_t}^0 &= Q_{a_t}(s_t, a_t, z_{[t]_M})= \E_{\tau_{\geq t} | s_t, a_t, z_{[t]_M}} [ \sum_{t' \geq t} r(s_{t'},a_{t'}) ]\\
Q_{a_t}^1 &= 
\begin{cases}
r(s_t,a_t) + V_{t+1}(s_{t+1},{z_{[t]_M}})& \textrm{if}~~[t+1]_M = [t]_M\\
r(s_t,a_t) + V_{t+1}(s_{t+1})&\mathrm{otherwise}\\
\end{cases}\\
Q_{z_t}^0 &= Q_{z_t}(s_t,z_t)= \E_{\tau_{\geq t} | s_t, z_t} [ \sum_{t' \geq t} r(s_{t'},a_{t'}) ] ~~~ \mathrm{for}~t=0,M,2M, \dots,
\end{align}
where we have made explicit that value functions may depend directly on the time step (due to the fixed duration of the option).

Fig.~\ref{fig:scg_examples:hrl}(b) shows the same model but with the random variables $z_t$ being replaced by a deterministic function of independent noise $\epsilon_t$. This allows the gradient with respect to $\theta$ at $z_t$ to be computed by backpropagation, e.g.\
\begin{align}
    \frac{\partial z_t}{\partial \theta} = 
    \sum_{t' = t}^{t+M-1} \E_{\tau_{\geq t} | s_t, z_t} \left [ \frac{\partial Q_{a_{t'}}}{\partial a_{t'}}
    \frac{\partial a_{t'}}{z_t}
    \frac{\partial z_t}{\partial \theta} \right ],
\end{align}
where $Q_{a_{t'}}$ may be a function of $z_t$ as discussed in the previous paragraph.

Fig.~\ref{fig:scg_examples:hrl}(c) shows a more complex which captures the essential features e.g.\ of the \emph{Option Critic} architecture of \cite{bacon2017option}. Unlike in (a,b) the option duration is variable and option termination depends on the state. This can be modeled with a binary random variable $b_t$ which controls whether $z$ remains unchanged compared to the previous timestep. The full trajectory distribution is given by
$\tau \sim p(\tau; \theta) = p(s_0)p(z_0|s_0; \theta)p(a_0 | s_0,z_0; \theta) \prod_{t > 0} p(s_{t+1} | s_t,a_t) \pi(b_t | s_t) \pi(z_t | s_t, b_t, z_{t-1}; \theta) \pi(a_t|s_t, z_t; \theta)$ where $p(z_t | s_t, b_t, z_{t-1}; \theta)= \delta(z_t - z_{t-1})^{b_t} \pi(z_t | s_t; \theta)^{b_t-1}$. 

Value functions of interest are, for instance,
$Q_{a_t}(s_t,a_t,z_t) = 
\E_{\tau_{\geq t}|s_t,z_t}[ \sum_t r(s_t,a_t)]$ (note the dependence on $z_t$ due to the dependence of future time steps on that value); 
$Q_{z_t}(s_t,z_t) = V_{s_t,z_t}(s_t,z_t) = \E_{a_t,s_{t+1}|s_t,z_t}[ r(s_t,a_t) + V_{s_{t+1},z_t}(s_{t+1},z_t) ] =  \E_{\tau_{\geq t} | z_t,s_t }[ \sum_{t'\geq t} r(s_t,a_t) ]] = \E_{a_t|s_t,z_t}[  Q_{a_t}(s_t,a_t,z_t)]$,
as well as
$V_{s_t}(s_t,z_{t-1}, b_t=1) = V_{s_t}(s_t) = \E_{z_t|s_t}[ \E_{\tau_{\geq t} | z_t,s_t }[ \sum_{t'\geq t} r(s_t,a_t) ]]$,
$V_{s_t,t_{t-1}}(s_t,z_{t-1}) = \pi(B_t = 1 | s_t) V(s_t) + (1-\pi(B_t = 1)) V_{s_t,z_t}(s_t, z_{t-1})$. Whereas the former two value functions are of interest as critics, the latter two are primarily interesting for bootstrapping purposes.

\subsection{Multi-agent MDPs}
Multi-agent MDPs can be seen as special cases of MDPs with a particular factorization. Two examples are shown in Fig.\ \ref{fig:scg_examples:MA} for the fully and partially observed case respectively. 
The factored structure of the MDP suggests particular choices for value function and critic, including conditioning the baseline on other agents' actions in the same vein as for action conditional baselines \citep{foerster2017counterfactual}. A valid bootstrap target requires access to the full state.
For a collection of agents $a\in \mathcal A=(A,B,\ldots)$, denote $u^a_t$ the action of agent $a$ at time $t$. The tuple of all actions is $u=(a^A, a^B, \ldots)$; the tuple of all actions except that of agent $a$ is $u^{-a}$.
The shared state is $s_t$. The value function $Q(s_t, u_t)$ is a valid critic for action $u^a_t$ (Def.~\ref{def:BCVF}).
Since all actions other than $u^a_t$ are non-descendents of $u^a_t$, they can be used to form an improved, valid baseline $V(s_t, u_t^{-a})$. From the Bellman principle (Lemma.~\ref{lem:Bellman}), we also have $V(s_t, u_t^{-a})=\E[Q(s_t, u_t)]=\sum \pi(u_t^a|s_t) Q(s_t, u_t)$

Note that in the fully observed case (all agents observe their own as well as the other agents' states), the above essentially takes the same form as the factored action model from Fig.\ \ref{fig:scg_examples:MDP}, due to identical structure in the graphical model.

An actor-critic formulation that employs state-action critic to derive action-value gradients based has been considered by \cite{lowe2017multi}. Note that the use of action-value gradients means that no baseline is needed.

Fig.\ \ref{fig:scg_examples:MDP}b shows the POMDP formulation of the same problem in which each agent has access to only a partial observation of the full system. While centralized baselines and critics may still be desirable, ``factored'' baselines and critics conditioned on the full history of observations accessible to each agents policy are also valid according to the definition of a critic (Def.\ \ref{def:critic}) and admit bootstrapping according to Theorem \ref{theorem:boostrap}.

\begin{figure}[H]
    \centering
    \includegraphics[width=0.9\textwidth]{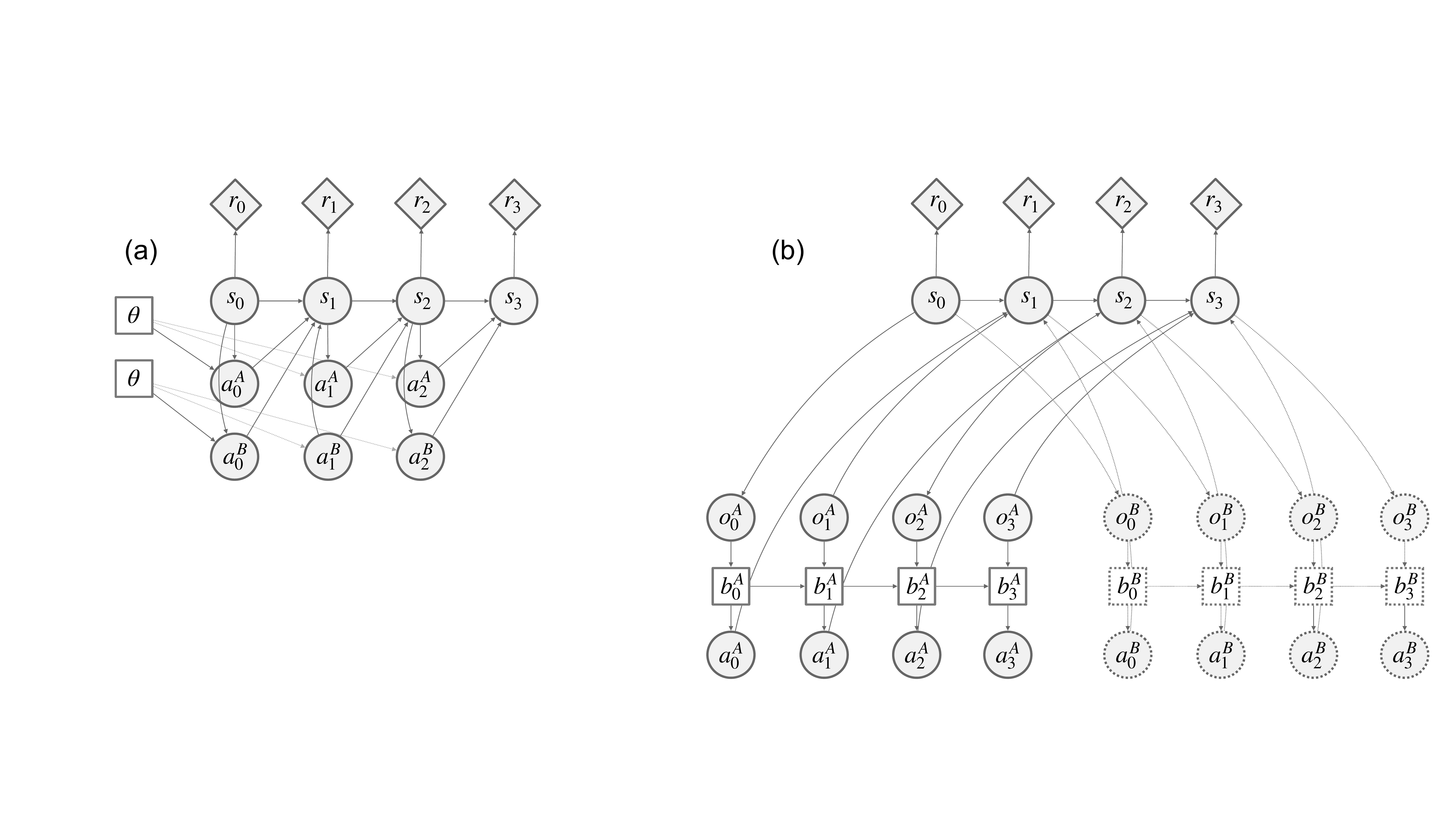}
    \caption{\emph{Examples: (a) Multiagent MDP. (b) Multiagent POMDP}}
    \label{fig:scg_examples:MA}
\end{figure}

\section{Applications in Probabilistic Modeling}

\subsection{Mapping inference in probabilistic models to stochastic computation graphs}

\label{sec:VIRL}

In this section, we briefly explicit a general technique to turn probabilistic inference problems into stochastic computation graphs, in particular in the framework of variational inference. This section closely follows \citep{weber2015reinforced}.

Let us consider an arbitrary latent variable model $p(\mb z,\mb x; \Theta)$, where $\mb x=(x_1,\ldots, x_M)$ represents the observations, $\mb z=(z_1,\ldots, z_K)$ the latent variables, and $\Theta=(\theta_1,\ldots, \theta_D)$ the parameters of $p$. 
Because $\mb x$, $\mb z$ and $\Theta$ play very similar roles in what follows, for simplicity of notation, let $y_j$ be a variable that represents a single latent $z_{j'}$ or observed variable $x_{j'}$ for some corresponding $j'$ (for instance we could have $y_j=z_j$ for $j=1,\ldots K$ and $y_{K+j}=x_j$ for $j=1,\ldots,M$)
Assume that the joint distribution of $(\mb z, \mb x)$ is that of a directed graphical model with directed acyclic graph $\cG^p$; for any variable $v \in \cG^p$, let $h^p_v$ be the parents of $v$ in $\cG^p$ (this includes parameters in $\Theta$). We then have:
$$p(\mb z, \mb x)=\prod_k p(z_k|h^p_{z_k}) \prod_{m} p(x_m|h^p_{x_{m}})=\prod_j p(y_j|h^p_{y_j}).$$
% It follows that $\log p(\mb z, \mb x)$ can be decomposed as a sum of functions $r_j(s_j)=\log p(y_j|h_{y_j})$ where $s_j=(y_j, h_{y_j})$.
Let us consider a posterior distribution $q(z|x; \phi)$ which is also a directed graphical model $\cG^q$ ($\cG^p$ and $\cG^q$ may have different topology). For a node $v$ in $\cG^q$, let $h_v$ be the parents of $v$ in $\cG^q$ (again, this includes parameters in $\Theta$ or $\phi$). Then,
$$q(\mb z| \mb x)=\prod_k q(z_k|h_{z_k}).$$
The variational objective is $\E_{q}\left[\log p(x,z)-\log q(z|x)\right]$, which can be decomposed into a sum $\sum_j r_j(s_j)$, where for each $j$, we either have $r_j(s_j)=\log p(y_j|h^p_{y_j})$ with $s_j=(y_j, h_{y_j})$ or $r_j(s_j)=-\log q(z_j|h_{z_j})$ with $s_j=(z_j, h_{z_j})$.

The stochastic computation graph composed of $\cG^q$ and costs $r_j$ implements variational inference for $p$ using variational distribution $q$.

\subsection{Variational auto-encoders and Neural variational inference}

Simple stochastic computation graphs can be obtained from the combination of a latent variable model $p(z,x)$ and an amortized posterior network $q(z|x)$ \citep{gershman2014amortized}.
In the most common implementations, the latent variable model maps a latent variable $z$ with known prior (typically a normally distributed random vector, or a vector of Bernoulli random variables) to the distribution parameters (typically a Gaussian or Bernoulli) of the observation $x$. The model is in effect a mixture model (with an infinite number of components in the continuous case) where mixture components parameters are functions of the mixture `index'.

When the network is not reparametrized, the score function estimator is commonly known as NVIL (neural variational inference and learning) or automated/black-box VI and has been developed in various works \citep{paisley2012variational, wingate2013automated, ranganath2013black, mnih2014neural}. Baseline functions are commonly used.
When the distribution are reparametrizable, the model is often known as a VAE (variational autoencoder) \citep{kingma2013auto, rezende2014stochastic}; many variants of the VAE were since developed, often differing on the form of the posterior network, decoder, or variational bound.

Because of the simplicity and lack of structure of the model, it is not trivial to leverage critic or gradient-critics in these models. One could in principle learn a gradient-critic for NVIL and avoid the high-variance issue due to the score function estimator (see action parameter critics in section~\ref{sec:madness}); to our knowledge this has not been explicitly attempted. 
Multiple sample techniques \citep{mnih2016variational} can be used to lower variance of the estimators by estimating value function as the empirical average of independent samples (the validity of using other samples as baseline for one sample automatically follows from value functions from non-descendent sets of variables).

\begin{figure}[H]
    \centering
    \includegraphics[width=0.8\textwidth]{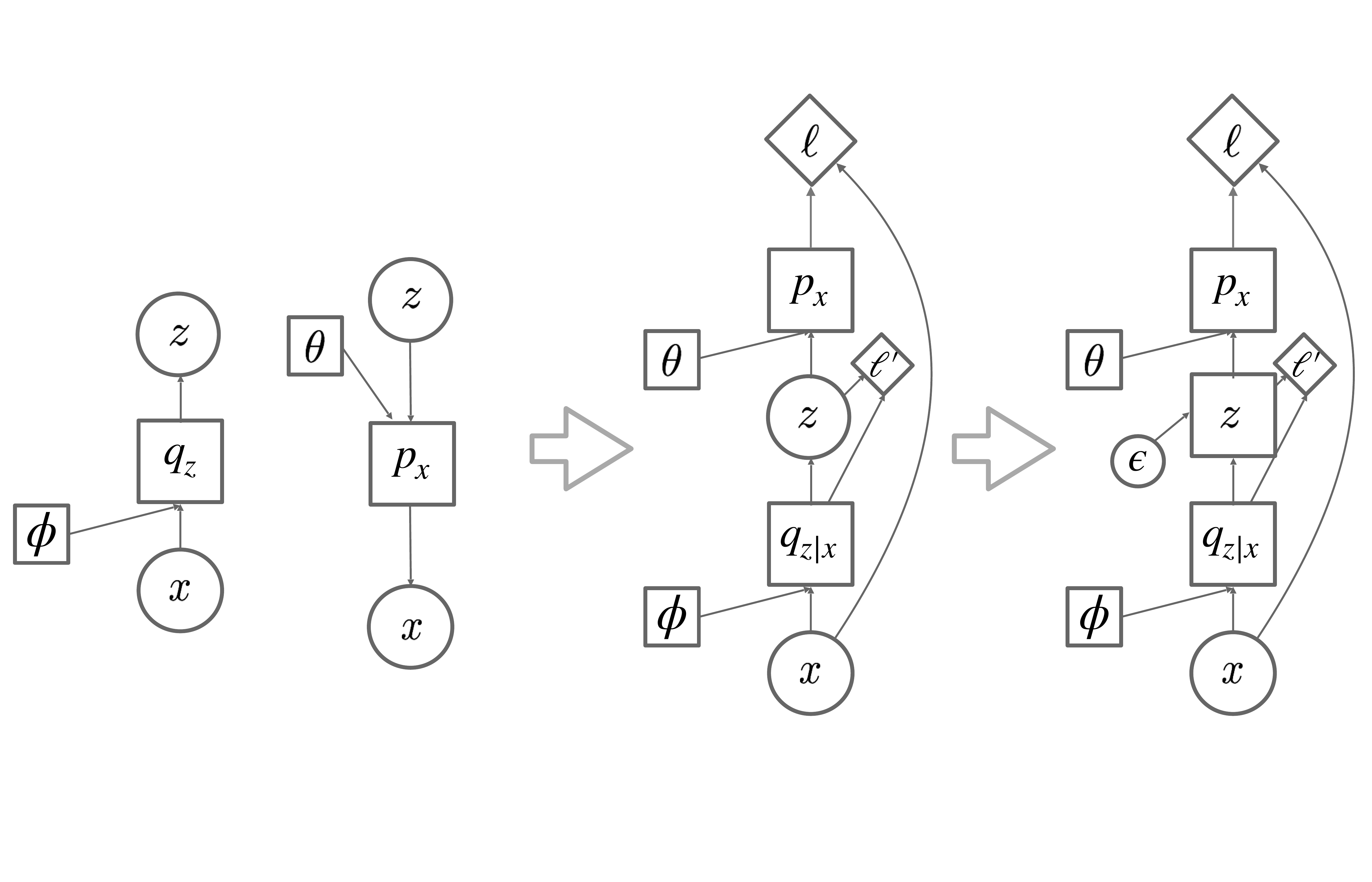}
    \caption{\emph{Variational Autoencoder and Neural Variational Inference.}\\
    a) Graphical model of neural latent variable model $p(x,z)$ and amortized posterior $q(z|x)$\\
    b) Computation graph for Neural Variational Inference\\
    c) Reparametrized computation graph for Variational Autoencoder.
    \label{fig:scg_examples:nvil-vae}}
\end{figure}

\subsection{State-space models}

\emph{State-space models} are powerful models of sequential data $\mb x=(x_1,\ldots, x_T)$, which capture dependencies between variables by way of a sequence of \emph{states} $z_t$, 

There is a large amount of recent work on these type of models, which differ in the precise details of model components \citep{bayer2014learning,chung2015recurrent,krishnan2015deep,archer2015black,fraccaro2016sequential,liu2017efficient,buesing2018learning}.

They generally consist of decoder or prior networks, which detail the generative process of states and observations,
and encoder or posterior networks, which estimate the distribution of latents given the observed data.  

Let $\mathbf{z}=(z_1,\ldots, z_T)$ be a state sequence and $\mathbf{x}=(x_1,\ldots, x_T)$ an observation sequence. We assume a general form of state-space model, where the joint state and observation likelihood can be written as $p(\mathbf{x}, \mathbf{z}) = \prod_t p(z_t\,|\,z_{t-1}) p(x_t\,|\,z_t)$.\footnote{For notational simplicity, $p(z_1\,|\,z_0)=p(z_1)$.} 
These models are commonly trained with a VAE-inspired bound, by computing a posterior $q(\mathbf{z}\,|\,\mathbf{x})$ over the states given the observations. Often, the posterior is decomposed autoregressively: $q(\mathbf{z}\,|\,\mathbf{x})=\prod_t q(z_t\,|\,z_{t-1}, \phi_t(\mathbf{x}))$, where $\phi_t$ is a function of $(x_1,\ldots, x_t)$ for filtering posteriors or the entire sequence $\mathbf{x}$ for smoothing posteriors. This leads to the following lower bound:
\begin{align}
\log p(\mathbf{x}) \geq \mathbb{E}_{\mathbf{z} \sim q(\mathbf{z}\,|\,\mathbf{x})} \left[{\sum}_t \log p(x_t\,|\,z_t) +\log p(z_t\,|\,z_{t-1})-\log q(z_t\,|\,z_{t-1}, \phi_t(\mathbf{x})) \right].
\label{eq:stardard_ssm_elbo}
\end{align}

Continuous state-space models are often reparametrized (see \citep{krishnan2015deep,buesing2018learning}). 
Comparatively less work investigates state-space model with discrete variables (see \citep{gan2015deep} for a recurrent sigmoid network, and \citep{eslami2016attend,kosiorek2018sequential} for hybrid models with both discrete and continuous variables). As with NVIL, discrete models suffer from higher variance estimates and could be combined with critics and gradient-critics, though it has not been extensively investigated.

\begin{figure}[H]
    \centering
    \includegraphics[width=0.9\textwidth]{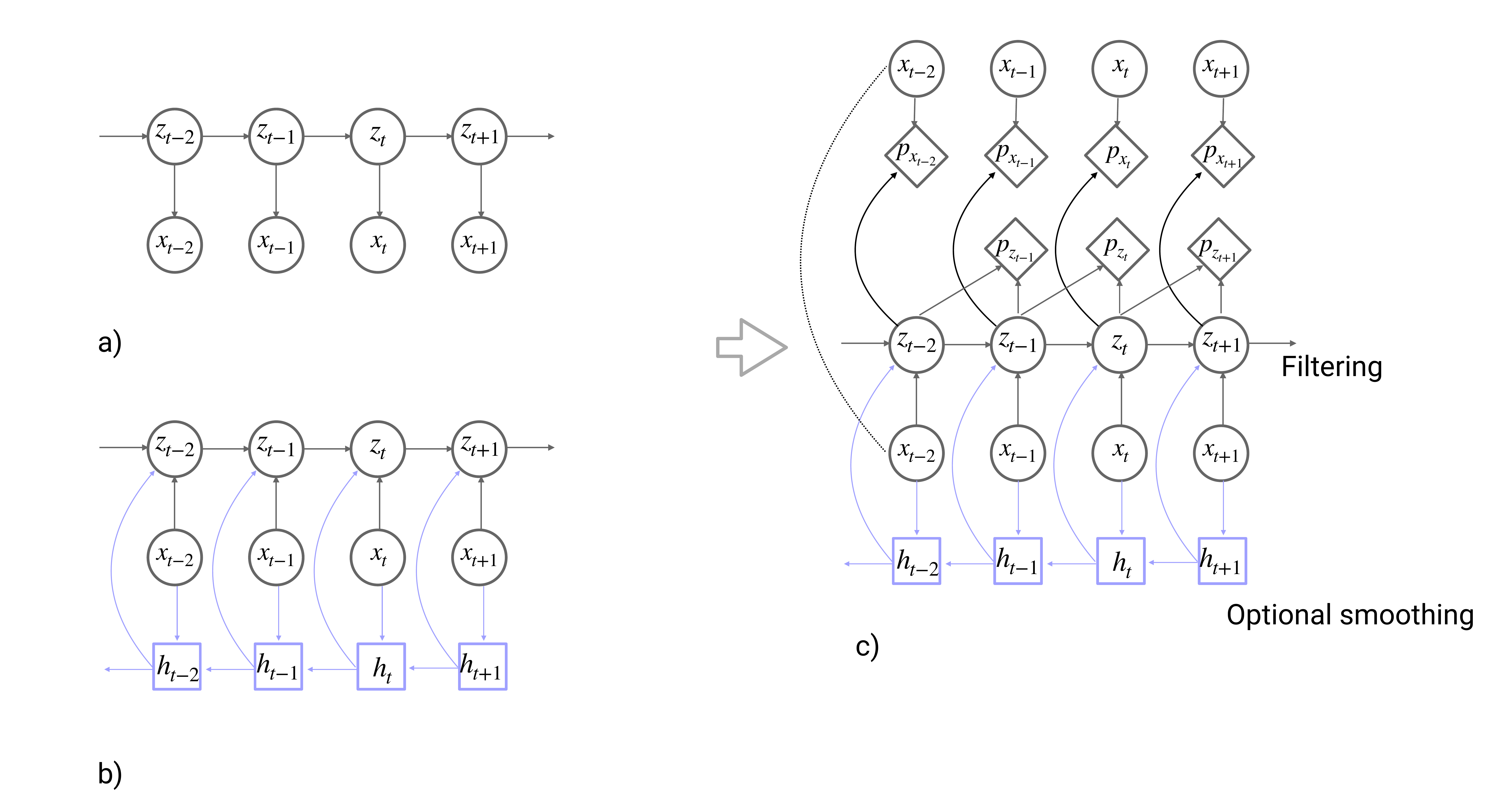}
    \caption{\emph{State-space model.}\\
    a) Forward model or decoder $p(z,zx)$\\
    b) Inverse model or encoder $q(z|x)$ based on filtering or smoothing (with purple arrows) posterior\\
    c) Stochastic computation graph
    \label{fig:scg_examples:ssm}}
\end{figure}

\subsubsection{More general inference schemes; RL as Inference}

The previous section addresses inference from a variational point of view, which maps closely to the RL problem\footnote{Note that an important distinction between RL and inference is that in RL, costs typically depend only on states and actions, while in inference, the cost depends on the parametrization of the random variables as well as on their samples; this is the reason why optimal posterior distributions are still entropic, as opposed to deterministic optimal policies in MDPs when not using entropy bonus.}. More general inference schemes can be used, from message-passing to sequential Monte-Carlo. While we do not extensively cover the connections, many of the notions developed in the previous sections can be extended to the general case. The main difference is that value-function, which conserve the intuition of `summarizing the future', are now defined in a `soft' fashion, by way of log-sum-exp operators (so do the corresponding bootstrap equations). For instance, for a state-space model, we can define a value function $V(z_t)$ as $V(z_t)=\log p(x_{t+1},\ldots, x_T|z_t)$ and $Q(z_{t}, z_{t+1})=\log p(x_{t+1},\ldots, x_T|z_t, z_{t+1});$ they are connected through a soft-Q update: $$V(z_t)=\log \E\left[ \exp(Q(z_{t}, z_{t+1}))|z_t\right].$$
The connection between inference and reinforcement learning is explored in details in \citep{levine2018reinforcement}, see also soft-actor critic methods \citep{haarnoja2018soft}. Just as value functions can be used to improve the quality of variational inference scheme (an idea proposed in \citep{weber2015reinforced}), soft-value functions can be used to improve the quality of other inference schemes, for instance sequential Monte-Carlo \citep{heng2017controlled, PlanningSMC}.

\section{Combination of estimators and critics}
\label{sec:Menu}

The techniques outlined above suggest a ``menu'' of choices for constructing gradient estimators for stochastic computation graphs. We lay out a few of these choices, highlighting how are results strictly generalize known methods from the literature. 

\paragraph{Reparameterization; use of score function and pathwise derivative estimators.}
Many distributions can be reparameterized, including discrete random variables \citep{maddison2016concrete,jang2016categorical}. This opens a choice between SF and PD estimator. The latter allows gradients to flow through the graph. Where exact gradients are not available (e.g.~in MDPs or in probabilistic programs and approximate Bayes computation \citep{meeds2014gps,ong2018variational}) gradients of critics can under certain conditions be used in combination with reparameterized distributions (see e.g.~\citep{heess2015learning}.

\paragraph{Grouping of random variables.}
For many graphs there is a natural grouping of random variables  that suggests obvious baseline and critic choices. Taking into account the detailed Markov structure of the computation graph may however reveal interesting alternatives. For instance, with an appropriate use of critics it could be beneficial to compute separate updates for each action dimension, marginalizing over the other action dimensions. Alternatively, independent action dimensions allow updates in which baselines are conditioned on the values chosen for other action dimensions. Such ideas have been exploited e.g.\ in work on action-dependent baselines \citep{wu2018variance}, and in multi-agent domains \citep{foerster2017counterfactual}. 
Other applications have been found in hierarchical RL, for instance in \citep{bacon2017option}, where the relation between options and actions directly informs the computation graph structure, in turn defining the correct value bootstrap equations and corresponding policy gradient theorem. The main results of these works can be rederived from the theorems outlined in this paper.

\paragraph{Use of value critics and baselines.}
The discussion in  \ref{sec:value:baseline_critic} highlights there are typically many different choices for constructing baselines and critics even beyond the choice of a particular variable grouping (e.g.\ the use of K-step returns or generalized advantage estimates in RL, \cite{schulman2015gradient}) even for simple graphs (such as chains). Which of the admissible estimators will be most appropriate in any given situation will be highly application specific. The use of critics for stochastic computation graphs was introduced in \citep{weber2015reinforced}, also independently explored in \citep{xu2018backprop}.

\paragraph{Use of gradient-critics.}
For reparameterized variables or general deterministic pathways through a computation graph gradient critics can be used. Gradient critics for a given node in the graph can be obtained by either directly approximating  the (expected) gradient of the downstream loss, or by approximating the value of the future loss terms (as for value critics) and then using the gradient of this approximation. Gradient-critics allow to conceptualize the links between related notions of value-gradient found in \citep{fairbank2012value, fairbank2014value}, stochastic value-gradients \citep{heess2015learning}, and synthetic gradients \citep{jaderberg2016decoupled,czarnecki2017sobolev}. 

\paragraph{Debiasing of estimators through policy-gradient correction.}
The use of critics or gradient-critics results in biased estimators. When using gradient-critic, it is often possible to debias the use of the gradient-critic by adding a correction term corresponding to the critic error. The resulting scheme is unbiased, but may or may not have lower variance than `naive' estimators which do not use critics at all. This is sometimes known as 'action-conditional' baselines in the literature \citep{tucker2018mirage}, and is also strongly related to Stein variational gradient \citep{liu2016stein}.
See App.~\ref{sec:debiasing} for more details.

\paragraph{Bootstrapping.} Targets for baselines and critics can be obtained in a variety of ways: For instance, they can be regressed directly onto empirical sums of downstream losses (``Monte Carlo'' returns in reinforcement learning). But targets can also constructed from other, downstream value or gradient approximations (e.g.\ ``K-step returns'' or ``$\lambda$-weighted returns'' in reinforcement learning), an idea discussed above under the name \emph{bootstrapping} (sections \ref{sec:valueCritic:boostrapping} and \ref{sec:gradientCritic:boostrapping}). The appropriate choice here will again be highly application specific.

\paragraph{Decoupled updates.} In its original form Theorem \ref{thm:main} requires a full and backward pass through the entire computation graph to compute a single sample approximation to its gradient. Through appropriate combination of surrogate signals and bootstrapping, however, updates for different parts of the graph can be decoupled to different extents. For instance, in reinforcement learning actor-critic algorithms compute updates to the policy parameters from single transitions $s_t, a_t, r_t, s_{t+1}$. The same ideas can be applied to general computation graphs where additional freedom (e.g.\ to set intermediate states; full access to all parts of the model) can allow even more flexible schemes (e.g.\ individual parts of the graph can be updated more frequently than others).

\section{A worked example}\label{sec:madness}
In this section, we go in more details through a simple chain graph example. This will allow us to present the menu of estimators discussed in the previous section in a concrete situation, without having to deal with the complexities arising from more structured graphs.

\begin{figure}[H]
    \centering
    \includegraphics[width=0.7\textwidth]{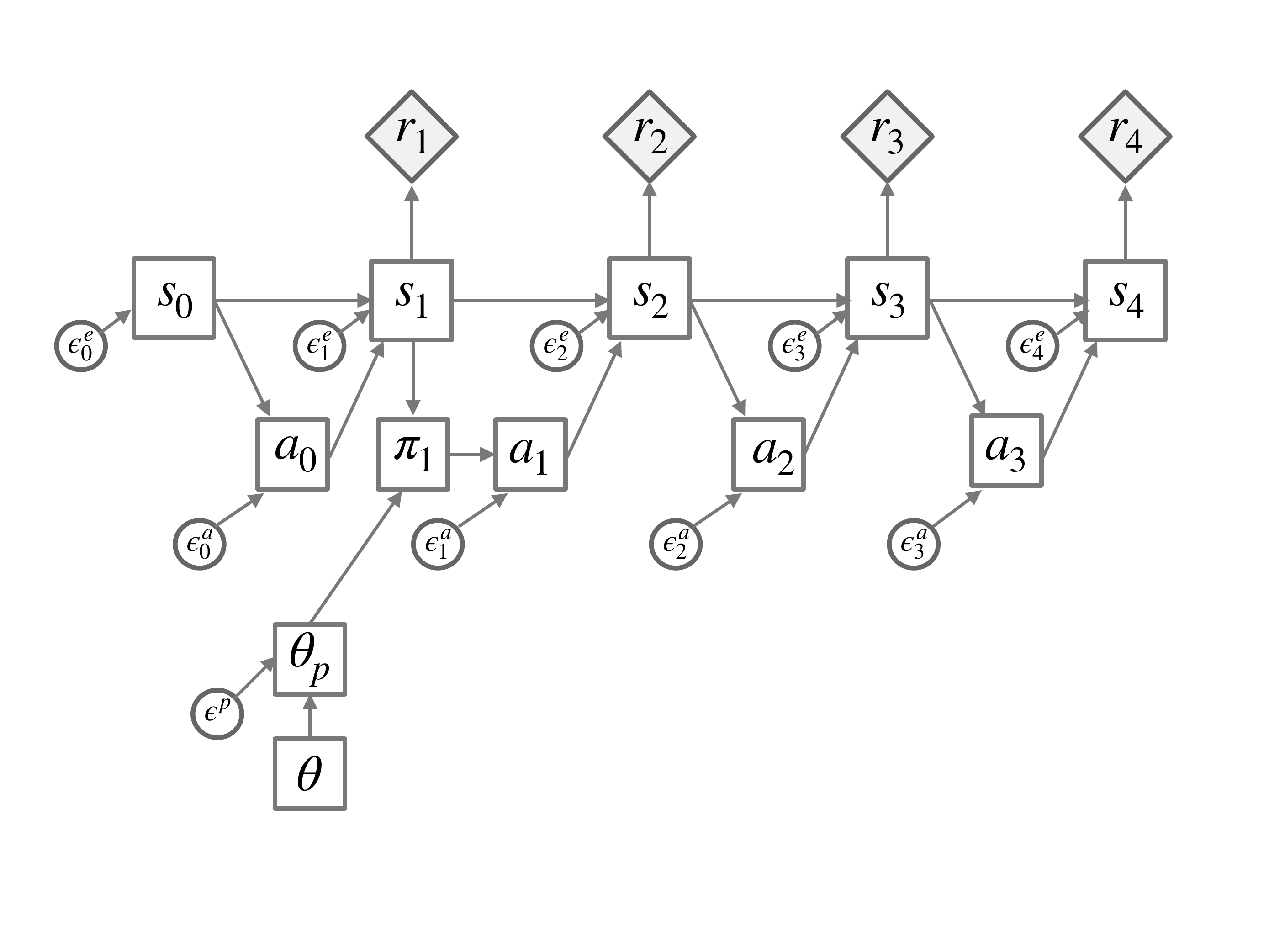}
    \caption{A simple MDP example, here shown fully reparametrized. Only parameters of the policy at time-step $t=1$ are shown.}
    \label{fig:worked_rl}
\end{figure}

We will use reinforcement learning terminology in order to connect to the wide literature on policy gradient and value-based methods, however the example is more general, and can be mapped to inference and learning in state-space models, time-series analysis, etc.

In this example, the state is fully observed by the agent, and we assume every function in the graph is known and differentiable.
We assume the graph can be arbitrarily reparametrized (with environment noise variables $\epsilon^e$, action noise $\epsilon^a$, parameter noise $\epsilon^p$). We assume a distribution over the parameters $\theta_p$ used to compute the policy; this distribution has hyperparameter $\theta$. That distribution may be a Dirac at $\theta$ (i.e.\ $\theta=\theta_p$).

The aim is to compute the parameters of the policy for the second action $a_1$ (this is done for simplicity but can easily be extended to the case where policy parameters are shared across time-steps); for reasons which will be apparent later, we make the parameters $\pi_1$ of the distribution of action $a_1$ a node of the graph (for other actions, those parameters are implicit in the state to action mappings); the mapping from $\pi_1$ and $\epsilon^a_1$ to $a_1$ is parameterless.
We detail different estimators of the gradient of the expected loss $\E[R]=\E[\sum_i r_i]$ with respect to $\theta$.

\paragraph{Black-box estimators: no reparametrization.\\}
The first and simplest estimator is the black-box or `evolution strategies' estimator, obtained by not reparametrizing $\theta_p$; it is given by $\DxbyDy{\log p(\theta_p|\theta)}{\theta} (R-V(\theta))$. It is hard to compute a value baseline or value critic for $R$ here. A critic that could be used in principle is an estimator $V(\theta_p)$ which averages over the entire environment and policy, and produces the expected return as a function of the sampled parameter $\theta^p$. The corresponding estimator would be $\DxbyDy{\log p(\theta_p|\theta)}{\theta} (V(\theta_p)-V(\theta))$. It is of course highly unlikely to be accurate in practice. 

\paragraph{Reparametrizing policy parameters: policy gradient algorithms.\\}
The next family of estimators is obtained by reparametrizing $\theta_p$, but not the action or the environment. If the conditional distribution of $\theta$ is a Dirac (i.e.\ noiseless), we have classical version of the estimators; if not, we have `noisy' or Bayesian estimators\citep{blundell2015weight,fortunato2017noisy, plappert2017parameter}.

We obtain `model-free' estimators, which differ by the choice of the critic.
One choice averages the entire future with a critic $Q(s_1,a_1)$, which leads to the update $$\dxbydy{\theta}{\theta_p}\DxbyDy{\log \pi_1(a_1)}{\theta}(Q(s_1,a_1)-V(s_1)).$$
Other choices involve the use of k-step returns, e.g. $$\dxbydy{\theta}{\theta_p}\DxbyDy{\log \pi_1(a_1)}{\theta}(r_1+V(s_2)-V(s_1)),$$ $$\dxbydy{\theta}{\theta_p}\DxbyDy{\log \pi_1(a_1)}{\theta}(r_1+r_2+V(s_3)-V(s_1)),$$ $$\ldots,$$
the use of TD$(\lambda)$ estimators, of empirical returns $$\dxbydy{\theta}{\theta_p}\DxbyDy{\log \pi_1(a_1)}{\theta}(R-V(s_1)).$$
These methods are all viable and used in practice. 

Another more unusual option is to use a parameter gradient-critic $g(\theta)$ which will 'average out' the entire environment and policy. Using a parameter gradient-critic leads to the update $\dxbydy{\theta}{\theta_p}g(\theta)$. The gradient critic $g(\theta)$ can be learned in two ways. First, by learning a critic $Q(\theta)$ which approximates the return $R$, and take the derivative with respect to $\theta$, i.\ e.\ $g(\theta)=\DxbyDy{Q(\theta)}{\theta}$. Second, by directly learning a gradient critic by regressing against a valid gradient update, for instance 
\begin{align*}
    g(\theta)=&\E\left[\DxbyDy{\log \pi_1(a_1)}{\theta}(R-V(s_1))\Big|\theta\right],\\
    \text{or}\:\:\:\:g(\theta)=&\E\left[\DxbyDy{\log \pi_1(a_1)}{\theta}(r_1+V(s_2)-V(s_1))\Big|\theta\right],
\end{align*} the latter of which learns a gradient-critic from a value critic. But no method which tries to approximates the entire actor-environment loop by a gradient $g(\theta)$ estimated solely from the parameter is likely to work in practice.

\paragraph{Reparametrizing actions.\\}
The next estimator involves reparametrizing the parameters, actions, but not the environment. In this situation, only one value critic is possible, the critic $Q(s_1,a_1)$ which averages the entire future. The corresponding estimator is the DDPG/SVG$(0)$ estimator, which is $$\dxbydy{\theta}{\theta_p}\dxbydy{\pi_1(a_1)}{\theta}\dxbydy{a_1}{\pi_1}\DxbyDy{Q(s_1,a_1)}{a_1}.$$
The term $\DxbyDy{Q(s_1,a_1)}{a_1}$ can be replaced by a gradient critic as we see next. 

\paragraph{Full reparametrization: differential dynamic programming and trajectory optimization.\\}
Finally, when reparametrizing the environment as well, we open the door to trajectory optimization type estimators. 
The general form is given by $$\dxbydy{\theta}{\theta_p}\dxbydy{\pi_1(a_1)}{\theta}\dxbydy{a_1}{\pi_1}g(s_1,a_1,\cdot),$$ where $g(s_1,a_1,\cdot)$ is some form of a gradient-critic for the future given $(s_1,a_1)$. The most classical involves using sampled environment gradients, and corresponds to the following expression, also known as SVG($\infty$):
$$\dxbydy{s_2}{a_1}(\dxbydy{r_2}{s_2}+\dxbydy{s_3}{s_2}(\dxbydy{r_3}{s_3}+\dxbydy{s_4}{s_3}\dxbydy{r_4}{s_4})).$$ 

But by using partially averaged gradient-critic, for instance a one-step gradient critic: $$\dxbydy{s_2}{a_1}(\dxbydy{r_2}{s_2}+\dxbydy{s_3}{s_2} g(s_3,a_3)),$$ where $g(s_3,a_3)$ either deterministically approximates the stochastic gradient $(\dxbydy{r_3}{s_3}+\dxbydy{s_4}{s_3}\dxbydy{r_4}{s_4}))$, or is the gradient of $Q(s_3,a_3)$. The most aggressive averaging would involve a gradient-critic $g(s_1,a_1)$ which averages the entire future, and which can be computed either by differentiating $Q(s_1,a_1)$, or by regression directly against a valid gradient target. 
\paragraph{Action parameter critics.\\}
Finally, let us consider a final, slightly more unusual gradient-critic estimator, explored in ~\citep{wierstra2007policy}, related to the DDP/SVG(0) operators, but where the critic is not conditioned on $(s_1,a_1)$ but on $(s_1,\pi_1)$ instead, resulting in the following: $$\dxbydy{\theta}{\theta_p}\dxbydy{\pi_1(a_1)}{\theta} g(s_1,\pi_1).$$
The gradient-critic $g(s_1,\pi_1)$ can be estimated in many ways. First, by regressing against a correct gradient estimate, for instance a quantity like $\dxbydy{a_1}{\pi_1}\DxbyDy{Q(s_1,a_1)}{a_1}$; this is only possible if $a_1$ can be differentiable reparametrized as a function of $\pi_1$. 

Second, as a gradient $\DxbyDy{V}{\pi_1}$ of the value $V(s_1, \pi_1)$ (note again this depends on the parameters $\pi_1$ and not the sample $a_1$); when using function approximators, this requires (during training at least) $\pi_1$ to be a stochastic function of $s_1$ (for the same reasons as highlighted in section~\ref{sec:gradient_critic:gradient_of_critic}).

Third, it can regressed against an estimate of the gradient where $a_1$ is not reparametrized, for instance $\DxbyDy{\log \pi_1(a_1)}{\pi_1}(R-V(s_1))$ or $\DxbyDy{\log \pi_1(a_1)}{\pi_1}(Q(s_1,a_1)-V(s_1))$. 

What is peculiar and particularly interesting about the second and third option is that they apply to non-differentiable actions, and that the gradient critic implicitly sums over all actions. This results in a potentially significantly lower variance gradient estimator for non-differentiable actions; note this does not require any relaxation to the discrete sampling. This could for instance be applied in inference and learning in discrete generative models, for instance as an alternative to the high variance estimators score function estimators.
\\
In this entire section, any bias introduced by the use of a critic or gradient-critic can be removed by using combined operators, as detailed in Appendix~\ref{sec:debiasing}.

Here, we don't discuss in depth the different options for bootstrap targets or linear combinations of critic, gradient critics and value functions, and yet, by making different choices on what quantities to condition on and which to average over, investigated a rich number of options available to us even in this very simple graph.

\section{Conclusion}

In this paper, we have provided a detailed discussion and mathematical analysis of credit assignment techniques for stochastic computation graphs. Our discussion explains and unifies existing algorithms, practices, and results obtained in a number of particular models and different fields of the ML literature. They also provide insights about the particular form of algorithms, highlighting how they naturally result from the constraints imposed by the computation graph structure, instead of ad-hoc solutions to particular problems.

The conceptual understanding and tools developed in this work do not just allow the derivation of existing solutions as special cases. Instead, they also highlight the fact that for any given model there typically is a menu of choices, each of which gives rise to a different gradient estimator with different advantages and disadvantages. For new models, these tools provide methodological guidance for the development of appropriate algorithms. In that sense our work emphasizes a similar separation of model and algorithm that has been proven fruitful in other domains, for instance in the probabilistic modeling and inference literature.

We believe that this separation as well as a good understanding of the underlying principles will become increasingly important as both models and training schemes become more complex and the distinction between different model classes blurs.

\section*{Acknowledgments}
We’d like to thank the many people for useful discussions and feedback on the research and the manuscript, including Ziyu Wang, S\'ebastien Racani\`ere, Yori Zwols, Chris Maddison, Arthur Guez and Andriy Mnih.

\clearpage
\newpage

\bibliographystyle{bibstyle}
\bibliography{bib}

\appendix
\newpage

\section{Gradient-critic extensions}

\label{sec:extension}

For variables $x$, $y$, and a set of variables $v_1, \ldots, v_d$, $\frac{dy}{dx}\dgiven_{(v_1,\ldots,v_d)}$  is the total gradient of $y$ with respect to $x$, but keeping the values of $v_i$ constant (i.\ e.\ the nodes $v_i$ are not back-propagated through when computing $\frac{dy}{dx}$).

\subsection{General form of the `horizon' backpropagation}

We first extend equation \eqref{eq:horizon-backprop} to the more general case where the separator set $(v_1, v_2, \ldots, v_d)$ is not necessarily unordered. We only require the nodes to be topologically ordered (i.e. if $v_i$ is an ancestor of $v_j$, $i\leq j$).
The more general form of `horizon' backpropagation is given by the following:
\begin{eqnarray}
    \DxbyDy{L^s}{v}=\sum_i \DxbyDy{L^s}{v_i} \DxbyDy{v_i}{v}\bigdgiven_{v_1,\ldots,v_{i-1}}
\end{eqnarray}

From which the corresponding horizon stochastic backpropagation immediately follows:
\begin{eqnarray}
    \label{eq:horizon-backprop-extended}
    \E_{\cG}\left[\DxbyDy{L^s}{v}\right]=\E_{\cG}\left[\sum_i \DxbyDy{L^s}{v_i} \DxbyDy{v_i}{v}\bigdgiven_{v_1,\ldots,v_{i-1}}\right]
\end{eqnarray}

\subsection{Notational hazards for gradients}
\label{sec:grad-notation}

Before going further, we work out through a few examples to highlight subtleties regarding the notation used for gradients, as well as the backpropagation equations given in equation~\eqref{eq:backprop} and in particular the version found in ~\eqref{eq:horizon-backprop-extended}. Note the latter is equivalent to backpropagation, but written in a way which tries to decouple the backpropagation updates which happen `upstream' and `downstream' of a group of nodes which are not necessarily the children of $v$.
\begin{figure}[H]
    \begin{minipage}[c]{0.5\textwidth}
        \includegraphics[width=1\textwidth]{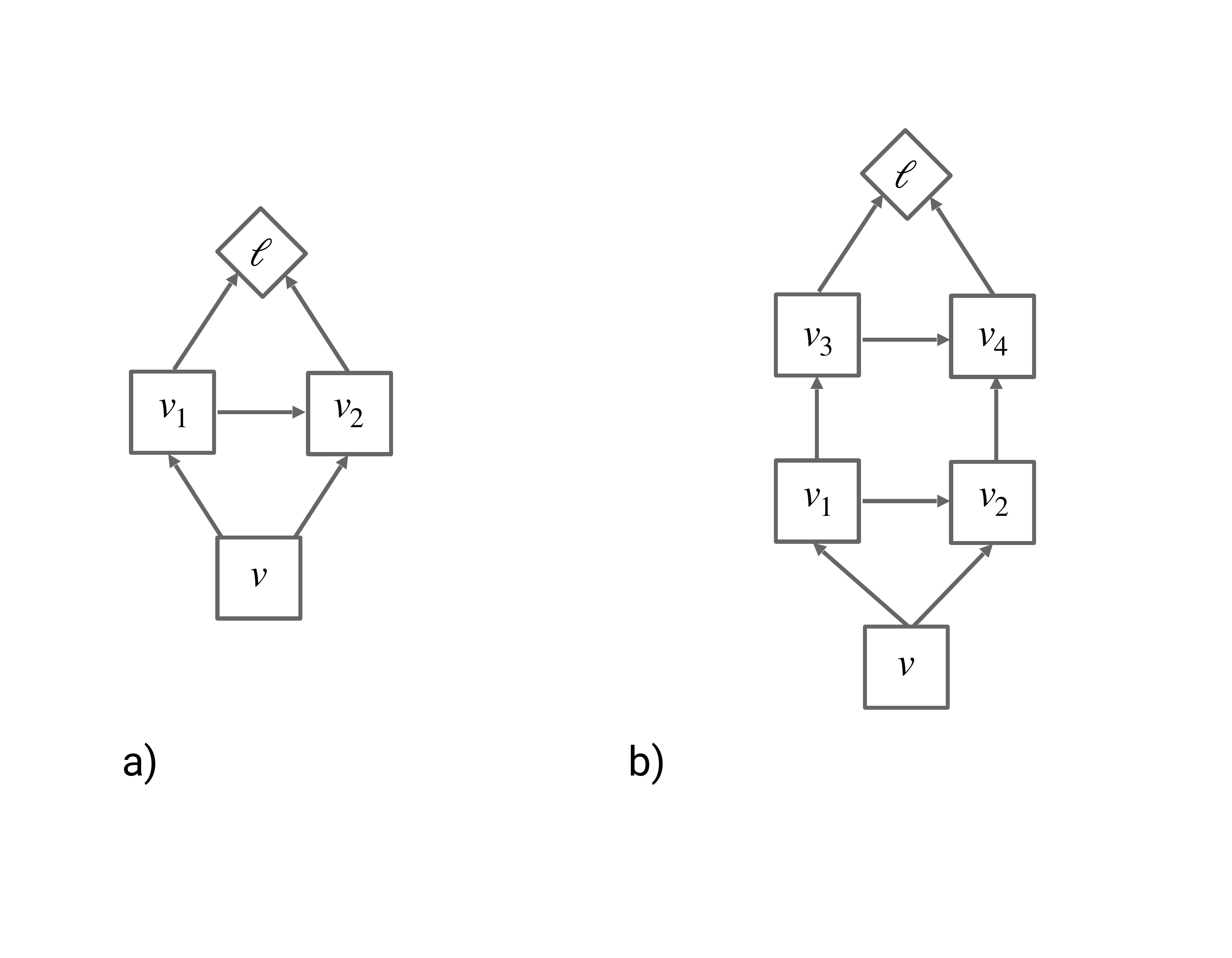}
    \end{minipage}\hfill
    \begin{minipage}[c]{0.45\textwidth}
        \captionof{figure}{
        \emph{Examples for backpropagation equations.}
        a) In this example, we have $v_1=f_1(v)$, $v_2=f_2(v,v_1)$ and $\ell=f_\ell(v_1,v_2)$.\\
        b) An example for the total gradient version of backpropagation. In this example, we have the following equations:\\
        At the first layer, $v_1=f_1(x)$ and $v_2=f_2(v_1,x)$\\
        At the second layer, $v_3=f_3(v_1)$ and $v_4=f_4(v_2,v_3)$\\
        Finally, $\ell=f_\ell(v_3,v_4)$.}
        \label{fig:gradients}
\end{minipage}
\end{figure}
We wish to compute $\DxbyDy{\ell}{v}$.
We first work through the first example. Following the chain rule, we have
\begin{eqnarray}
\DxbyDy{\ell}{v}=\dxbydy{\ell}{v_1}\dxbydy{v_1}{v}+\dxbydy{\ell}{v_2}\left(\dxbydy{v_2}{v_1}\dxbydy{v_1}{v}+\dxbydy{v_2}{v}\right)
\end{eqnarray}
We also have the following equations:
\begin{equation}
    \DxbyDy{\ell}{v_1}=\dxbydy{\ell}{v_1}+\dxbydy{\ell}{v_2}\dxbydy{v_2}{v_1} \hspace{0.5cm};\hspace{0.5cm} \DxbyDy{\ell}{v_2}=\dxbydy{\ell}{v_2} \hspace{0.5cm};\hspace{0.5cm}
    \DxbyDy{v_1}{v}=\dxbydy{v_1}{v} \hspace{0.5cm};\hspace{0.5cm}
    \DxbyDy{v_2}{v}=\dxbydy{v_2}{v}+\dxbydy{v_2}{v_1}\dxbydy{v_1}{v}
\end{equation}
From which we have the two equivalent forms of the chain rule. The first is equation~\eqref{eq:backprop}, and is the last computation performed by backpropagation: 
\begin{eqnarray}
\DxbyDy{\ell}{v}=\DxbyDy{\ell}{v_1}\dxbydy{v_1}{v}+\DxbyDy{\ell}{v_2}\dxbydy{v_2}{v}    
\end{eqnarray}
The second is related to forward differentiation:
\begin{eqnarray}
\label{eq:fwd-diff}
    \DxbyDy{\ell}{v}=\dxbydy{\ell}{v_1}\DxbyDy{v_1}{v}+\dxbydy{\ell}{v_2}\DxbyDy{v_2}{v}
\end{eqnarray}
We now move on to the second example. Again, fully expanding the chain rule, and summing over all paths, we have:
\begin{eqnarray}
\DxbyDy{\ell}{v}=\dxbydy{\ell}{v_3}\dxbydy{v_3}{v_1}\dxbydy{v_1}{v}+\dxbydy{\ell}{v_4}\dxbydy{v_4}{v_3}\dxbydy{v_3}{v_1}\dxbydy{v_1}{v}+\dxbydy{\ell}{v_4}\dxbydy{v_4}{v_2}\dxbydy{v_2}{v_1}\dxbydy{v_1}{v}+\dxbydy{\ell}{v_4}\dxbydy{v_4}{v_2}\dxbydy{v_2}{v}
\end{eqnarray}

Let us first consider the horizon set $\{v_2,v_3\}$. Note this set is unordered, we could compute $v_2$ first, then $v_3$, or vice-versa.  Equation~\eqref{eq:horizon-backprop} applies, which takes the form: $\DxbyDy{\ell}{v}=\DxbyDy{\ell}{v_3}\DxbyDy{v_3}{v}+\DxbyDy{\ell}{v_2}\DxbyDy{v_2}{v}$ follows, which can be verified from the following equations:
\begin{equation}
    \DxbyDy{\ell}{v_3}=\dxbydy{\ell}{v_3}+\dxbydy{\ell}{v_4}\dxbydy{v_4}{v_3}
    \hspace{0.25cm};\hspace{0.25cm}
    \DxbyDy{\ell}{v_2}=\dxbydy{\ell}{v_4}\dxbydy{v_4}{v_2} \hspace{0.25cm};\hspace{0.25cm}
    \DxbyDy{v_3}{v}=\dxbydy{v_3}{v_1}\dxbydy{v_1}{v} \hspace{0.25cm};\hspace{0.25cm}
    \DxbyDy{v_2}{v}=\dxbydy{v_2}{v_1}\dxbydy{v_1}{v}+\dxbydy{v_2}{v}
\end{equation}

Next, consider the horizon set $\{v_3,v_4\}$. Note that $v_3$ has to be computed before $v_4$.
First note we have $\DxbyDy{\ell}{v_3}=\dxbydy{\ell}{v_3}+\dxbydy{\ell}{v_4}\dxbydy{v_4}{v_3}$ and $\DxbyDy{\ell}{v_4}=\dxbydy{\ell}{v_4}$.
We can compute the total gradients of $v_3$ and $v_4$ with respect to $v$:
\begin{eqnarray}
\DxbyDy{v_3}{v}&=&\dxbydy{v_3}{v_1}\dxbydy{v_1}{v}\\
\DxbyDy{v_4}{v}&=&\dxbydy{v_4}{v_3}\dxbydy{v_3}{v_1}\dxbydy{v_1}{v}+\dxbydy{v_4}{v_2}\dxbydy{v_2}{v_1}\dxbydy{v_1}{v}+\dxbydy{v_4}{v_2}\dxbydy{v_2}{v}
\end{eqnarray}
A naive application of horizon backprop (equation~\eqref{eq:horizon-backprop}), i.e. $\DxbyDy{\ell}{v}=\DxbyDy{\ell}{v_3}\DxbyDy{v_3}{v}+\DxbyDy{\ell}{v_4}\DxbyDy{v_4}{v}$ is \textbf{incorrect}, because the term $\dxbydy{\ell}{v_4}\dxbydy{v_4}{v_3}\dxbydy{v_3}{v_1}\dxbydy{v_1}{v}$ is double-counted.
Following equation~\eqref{eq:horizon-backprop-extended} instead, we keep $v_3$ constant when computing the gradient of $v_4$ with respect to $v$, and obtain:
\begin{eqnarray}
\DxbyDy{v_4}{v}\bigdgiven_ {v_3}=\dxbydy{v_4}{v_2}\dxbydy{v_2}{v_1}\dxbydy{v_1}{v}+\dxbydy{v_4}{v_2}\dxbydy{v_2}{v}
\end{eqnarray}
and we correctly have $\DxbyDy{\ell}{v}=\DxbyDy{\ell}{v_3}\DxbyDy{v_3}{v}+\DxbyDy{\ell}{v_4}\DxbyDy{v_4}{v}\dgiven_{v_3}$, as per equation~\eqref{eq:horizon-backprop}. Note we also have $\DxbyDy{\ell}{v}=\DxbyDy{\ell}{v_3}\dgiven_{v_4}\DxbyDy{v_3}{v}+\DxbyDy{\ell}{v_4}\DxbyDy{v_4}{v}$, which is closer to equation~\eqref{eq:fwd-diff}.

\subsection{Extension to the gradient-critic theorem in the general case}

In this section, we provide an extension to the gradient-critic theorem \ref{thm:horizon-gradient} to the more general case.

\begin{theorem}\label{thm:horizon-gradient-extended}
If for each $i$, $\DxbyDy{v_i}{v}\bigdgiven_{v_1,\ldots,v_{i-1}}$ and $\DxbyDy{L^s}{v_i}$ are conditionally independent given $\cC_{v_i}$, 
\begin{align*}
    \E_{\cG}\left[\DxbyDy{L^s}{v}\right]= \E_{\cG}\left[ \sum_i g_{v_i} \DxbyDy{v_i}{v}\bigdgiven_{v_1,\ldots,v_{i-1}}\right]
\end{align*}
\end{theorem}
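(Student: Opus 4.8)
The plan is to start from the extended horizon backpropagation identity \eqref{eq:horizon-backprop-extended}, which already expresses the expected total gradient as
$$\E_{\cG}\left[\DxbyDy{L^s}{v}\right]=\E_{\cG}\left[\sum_i \DxbyDy{L^s}{v_i}\,\DxbyDy{v_i}{v}\bigdgiven_{v_1,\ldots,v_{i-1}}\right],$$
and then to replace, term by term, the sampled downstream gradient $\DxbyDy{L^s}{v_i}$ by its value-gradient $g_{v_i}=\E_{\cG\setminus\cC_{v_i}\mid\cC_{v_i}}[\DxbyDy{L^s}{v_i}]$. Since \eqref{eq:horizon-backprop-extended} is merely the chain rule rearranged (and is established independently of any probabilistic assumption), all the probabilistic content of the theorem is concentrated in this replacement step, which is exactly where the conditional-independence hypothesis is used.

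The key step is a single-term identity. Fix $i$ and write $A_i=\DxbyDy{v_i}{v}\bigdgiven_{v_1,\ldots,v_{i-1}}$ and $B_i=\DxbyDy{L^s}{v_i}$. I would condition on $\cC_{v_i}$ and use the tower property: $\E[A_i B_i]=\E\big[\E[A_iB_i\mid\cC_{v_i}]\big]$. By the hypothesis that $A_i$ and $B_i$ are conditionally independent given $\cC_{v_i}$, the inner conditional expectation factorizes as $\E[A_i\mid\cC_{v_i}]\,\E[B_i\mid\cC_{v_i}]$. By definition of the value-gradient, $\E[B_i\mid\cC_{v_i}]=g_{v_i}$, and since $g_{v_i}$ is a measurable function of $\cC_{v_i}$ it can be pulled back inside a conditional expectation, giving $\E[A_iB_i]=\E\big[g_{v_i}\,\E[A_i\mid\cC_{v_i}]\big]=\E[g_{v_i}A_i]$, where the last equality is again the tower property applied to $g_{v_i}A_i$. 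This is exactly the claimed per-term identity
$$\E_{\cG}\left[\DxbyDy{L^s}{v_i}\,\DxbyDy{v_i}{v}\bigdgiven_{v_1,\ldots,v_{i-1}}\right]=\E_{\cG}\left[g_{v_i}\,\DxbyDy{v_i}{v}\bigdgiven_{v_1,\ldots,v_{i-1}}\right].$$
The argument is structurally identical to that underlying Theorems~\ref{thm:grad-critic} and~\ref{thm:horizon-gradient}; the only difference is that the total derivative carried along is the held-out one $\DxbyDy{v_i}{v}\bigdgiven_{v_1,\ldots,v_{i-1}}$ rather than the full $\DxbyDy{v_i}{v}$.

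Summing the per-term identity over $i$ by linearity of expectation and substituting into \eqref{eq:horizon-backprop-extended} then yields the claim directly. The one point that requires care — and the main obstacle I anticipate — is ensuring the conditional-independence hypothesis is applied to the correct random variable, namely the held-out total derivative $\DxbyDy{v_i}{v}\bigdgiven_{v_1,\ldots,v_{i-1}}$ and not the unmodified $\DxbyDy{v_i}{v}$; this is precisely the distinction that the worked examples in Appendix~\ref{sec:grad-notation} are designed to clarify, since a naive application to the unmodified derivatives double-counts paths and would factorize the conditional expectation of the wrong quantity. Granting both that the hypothesis refers to the held-out derivative (as stated) and that \eqref{eq:horizon-backprop-extended} holds for arbitrary topologically ordered separator sets, the remaining replacement argument is routine.
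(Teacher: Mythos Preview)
Your proposal is correct and follows exactly the approach the paper takes: start from the extended horizon backpropagation identity \eqref{eq:horizon-backprop-extended}, then for each term condition on $\cC_{v_i}$, factor the conditional expectation using the assumed conditional independence, replace $\E[\DxbyDy{L^s}{v_i}\mid\cC_{v_i}]$ by $g_{v_i}$, and undo the tower property. The paper does not give a separate proof for this extended theorem, but your argument is the direct transposition of its proof of Theorem~\ref{thm:grad-critic} with $\dxbydy{v_i}{v}$ replaced by the held-out total derivative $\DxbyDy{v_i}{v}\bigdgiven_{v_1,\ldots,v_{i-1}}$, which is precisely what is intended.
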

Similarly:
\begin{theorem}
Consider a node $v$, separator set $(v_1,\ldots, v_d)$. Consider critics $g_v$ with set $\cC_v$ and $(g_{v_1}, g_{v_2}, \ldots, g_{v_d})$ with sets $(\cC_{v_1},\ldots, \cC_{v_d})$. Suppose that for all $i$, $\cC_{v_i}\supset \cC_v$ and let $\Delta_i=\cC_{v_i}\setminus \cC_v$. Then,
\begin{align}
    g_v =\sum_i \E_{\Delta_i|\cC_v}\left[g_{v_i} \: \DxbyDy{v_i}{v}\bigdgiven_{{(v_{j})}_{j \leq i}} \right]      
\end{align}
\end{theorem}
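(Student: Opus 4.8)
The plan is to compute $g_v$ directly from its definition as a value-gradient and reduce it term by term to the gradient-critics $g_{v_i}$, mirroring the value-function bootstrap (Lemma~\ref{lem:average}, Theorem~\ref{thm:bootstrap}) but at the level of gradients. Concretely, I would start from the definition
\begin{align*}
g_v = \E_{\cG\setminus\cC_v\,|\,\cC_v}\left[\DxbyDy{L^s}{v}\right],
\end{align*}
and substitute the general (topologically ordered) horizon backpropagation identity preceding \eqref{eq:horizon-backprop-extended}, namely $\DxbyDy{L^s}{v}=\sum_i \DxbyDy{L^s}{v_i}\,\DxbyDy{v_i}{v}\bigdgiven_{v_1,\ldots,v_{i-1}}$. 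This identity holds pointwise (it is just the chain rule arranged so as to avoid double-counting shared sub-paths), hence also inside the expectation. Linearity of expectation then splits $g_v$ into a sum of terms of the form $\E_{\cG\setminus\cC_v|\cC_v}\big[\,\DxbyDy{L^s}{v_i}\,\DxbyDy{v_i}{v}\bigdgiven_{v_1,\ldots,v_{i-1}}\,\big]$.

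For each term I would invoke the tower property, which is legitimate because $\cC_v\subset\cC_{v_i}$ (only this direct inclusion, not a Markov condition, is needed for this step): conditioning first on $\cC_{v_i}$,
\begin{align*}
\E_{\cG\setminus\cC_v|\cC_v}\!\left[\DxbyDy{L^s}{v_i}\DxbyDy{v_i}{v}\bigdgiven_{v_1,\ldots,v_{i-1}}\right]
= \E_{\Delta_i|\cC_v}\!\left[\E_{\cG\setminus\cC_{v_i}|\cC_{v_i}}\!\left[\DxbyDy{L^s}{v_i}\DxbyDy{v_i}{v}\bigdgiven_{v_1,\ldots,v_{i-1}}\right]\right].
\end{align*}
The inner conditional expectation is where the gradient-critic hypothesis enters: by definition $g_{v_i}$ is a gradient-critic for $v_i$ with respect to $v$, which means that $\DxbyDy{v_i}{v}\bigdgiven_{v_1,\ldots,v_{i-1}}$ and $\DxbyDy{L^s}{v_i}$ are conditionally independent given $\cC_{v_i}$. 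This lets the inner expectation factor, and its first factor is exactly $g_{v_i}=\E_{\cG\setminus\cC_{v_i}|\cC_{v_i}}[\DxbyDy{L^s}{v_i}]$. Since $g_{v_i}$ is $\cC_{v_i}$-measurable I can pull it back inside the outer conditional expectation, recovering $\E_{\Delta_i|\cC_v}\big[\,g_{v_i}\,\DxbyDy{v_i}{v}\bigdgiven_{(v_j)_{j\le i}}\,\big]$ (with the understanding, as elsewhere in the paper, that the held-constant derivative is taken under its conditional expectation given $\cC_{v_i}$). Summing over $i$ gives the claim.

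The main obstacle is the bookkeeping around the held-constant total derivatives $\DxbyDy{v_i}{v}\bigdgiven_{v_1,\ldots,v_{i-1}}$. One must use the ordered horizon identity \eqref{eq:horizon-backprop-extended} rather than the naive $\sum_i \DxbyDy{L^s}{v_i}\DxbyDy{v_i}{v}$, since for a merely topologically ordered (not unordered) separator set the naive expression double-counts paths that reach $v_i$ through an earlier $v_j$ (cf.\ the $\{v_3,v_4\}$ example in Appendix~\ref{sec:grad-notation}). Correspondingly, the conditional-independence hypothesis that justifies the factoring must be the one attached to these \emph{held-constant} derivatives; matching the correct identity with the correct independence condition is the delicate step. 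Once these two ingredients are aligned, the remaining manipulations --- linearity, the tower property, and factoring under conditional independence --- are routine and structurally identical to the value-function bootstrap.
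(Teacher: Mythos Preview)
Your proposal is correct and follows precisely the approach the paper uses for its gradient-critic results. The paper does not actually write out a separate proof for this extended bootstrap theorem; it only proves the closely related Theorem~\ref{thm:grad-critic} (and Theorem~\ref{thm:gradientCritic:criticGradient}), leaving the bootstrap versions to the reader. Your argument---substitute the ordered horizon identity \eqref{eq:horizon-backprop-extended}, apply the tower property using $\cC_v\subset\cC_{v_i}$, factor via the conditional-independence hypothesis, and identify $g_{v_i}$---is exactly the computation in the proof of Theorem~\ref{thm:grad-critic} specialized to the inner conditioning $\cC_{v_i}$ and then wrapped in the extra outer expectation over $\Delta_i$. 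Your explicit caution about using the held-constant derivatives $\DxbyDy{v_i}{v}\big|_{v_1,\ldots,v_{i-1}}$ (rather than the naive total derivatives) is precisely the point of the ordered extension, and your parenthetical about the derivative being taken under its conditional expectation given $\cC_{v_i}$ correctly flags the same notational shortcut the paper makes when writing $\E_{\Delta_i|\cC_v}[\cdot]$.
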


Note that while the quantities $\DxbyDy{v_i}{v}\dgiven_{v_1,\ldots,v_{i-1}}$ may appear intimidating, they are implicitly the coefficients applied by backpropagation to the loss sensitivity $\DxbyDy{L^s}{v_i}$ at a node $v_i$ when computing $\DxbyDy{L^s}{v}$. In other words, if one replaces or initializes $\DxbyDy{L^s}{v_i}$ by $g_{v_i}$ in the backward graph and proceeds with backpropagation as usual, the resulting quantity compute at the root node will be effectively $\sum g_{v_i} \: \DxbyDy{v_i}{v}\dgiven_{{(v_{j})}_{j \leq i}}$; this implies that the bias in the gradient estimator will only come from the bias in critics themselves (not how they are used), and that for asymptotically perfect critics, the gradient estimator will be unbiased.

\section{Discussions}

\subsection{Optimal baseline and value function baseline}
\label{sec:opt-baseline}

The estimator of theorem \ref{thm:value-critic} is unbiased regardless of the choice of the baseline $B(\cB)$, which only affects the variance. 
A well-chosen baseline will reduce the variance of the gradient estimator. 
It is in general difficult to estimate or minimize the variance of the full estimator directly, however, we can derive a baseline which minimizes the variance of each $q_v$.

\begin{definition}
We say that a baseline set $\cB$ is \emph{congruent} with a critic set $\cC$ if it subset of $\cC$.
\end{definition}

\begin{theorem}\label{thm:optimal-baseline}
Let $s(v,\theta)$ be the score function $ \DbyDt \log p_v$, and consider a fixed critic set $\cC$ and congruent baseline set $\cB$. The following $B^\ast(\cB)$ is the baseline for $\cB$ which minimizes the variance of $G_v$:
\begin{align*}
    B^*(\cB) = \frac{\E_{\cG\setminus \cB|\cB}\left[s(v,\theta)^2 Q(\cC) \right]}{\E_{\cG\setminus \cB|\cB}\left[s(v,\theta)^2\right]}= \frac{\E_{\cG\setminus \cB|\cB}\left[s(v,\theta)^2 L(v) \right]}{\E_{\cG\setminus \cB|\cB}\left[s(v,\theta)^2\right]} .
\end{align*}
Furthermore, for two congruent sets $\cB_1 \subset \cB_2$, the variance of $G_v$ using $B^*(\cB_1)$ is 
larger or equal than when using $B^*(\cB_2)$.
\end{theorem}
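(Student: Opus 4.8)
The plan is to reframe the entire statement as a constrained least-squares problem in which the baseline enters only through a conditional expectation. First I would note that, by Property~1, every scalar function of the baseline set $\cB$ is a baseline, so $\E[s(v,\theta)B]=0$ for all candidates; hence $\E[G_v]=\E[s(v,\theta)Q]$ does not depend on $B$, and minimizing $\Var(G_v)$ is the same as minimizing the second moment $\E[s(v,\theta)^2(Q-B)^2]$. Because $B$ is a function of $\cB$, conditioning on $\cB$ reduces this to the pointwise minimization, for each value of $\cB$, of the scalar quadratic $b \mapsto b^2\,\E[s^2\mid \cB]-2b\,\E[s^2 Q\mid \cB]$ (the $B$-independent term $\E[s^2Q^2]$ drops out). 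Its minimizer is $b^\ast = \E[s^2 Q\mid \cB]/\E[s^2\mid\cB]$, which is $\cB$-measurable and therefore a valid baseline, giving the first expression for $B^\ast(\cB)$.

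For the second expression I would use that $Q=Q(\cC)$ is a critic value function, so $Q(\cC)=\E[L(v)\mid\cC]$, and that the score $s(v,\theta)$ is a deterministic function of the same variables $(v,h_v)$ that determine $\log p(v)$; the conditional independence of $\log p(v)$ and $L(v)$ given $\cC$ therefore extends to $s^2$ and $L(v)$. Congruence $\cB\subset\cC$ then lets me insert $\cC$ by the law of iterated expectations: $\E[s^2 L(v)\mid\cB]=\E[\,\E[s^2 L(v)\mid\cC]\mid\cB]=\E[\,\E[s^2\mid\cC]\,Q(\cC)\mid\cB]=\E[s^2 Q\mid\cB]$, so the two numerators coincide and the formulas agree.

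For the monotonicity claim I would first substitute $b^\ast$ to record the optimal objective value: the minimal second moment for baseline set $\cB$ equals $\E[s^2 Q^2]-\E_\cB\!\big[(\E[s^2 Q\mid\cB])^2/\E[s^2\mid\cB]\big]$. Since the mean of $G_v$ is fixed, comparing the two variances for $\cB_1\subset\cB_2$ reduces to proving
\[
  \E\!\left[\frac{(\E[Y\mid\cB_2])^2}{\E[X\mid\cB_2]}\right]\ \geq\ \E\!\left[\frac{(\E[Y\mid\cB_1])^2}{\E[X\mid\cB_1]}\right],
\]
with $X=s^2\geq 0$ and $Y=s^2 Q$. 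The key ingredient is that the quadratic-over-linear map $\phi(a,b)=a^2/b$ (the perspective of $t\mapsto t^2$) is jointly convex on $\R\times\R_{>0}$, which I would verify from its Hessian being positive semidefinite. Applying conditional Jensen to $\phi$ at the $\cB_2$-measurable pair $(\E[Y\mid\cB_2],\E[X\mid\cB_2])$ and using the tower property $\E[\E[\cdot\mid\cB_2]\mid\cB_1]=\E[\cdot\mid\cB_1]$ gives $\E[\phi(\E[Y\mid\cB_2],\E[X\mid\cB_2])\mid\cB_1]\geq\phi(\E[Y\mid\cB_1],\E[X\mid\cB_1])$; taking the outer expectation yields the displayed inequality and hence the claim.

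The main obstacle is this final step: recognizing that the reduced inequality is precisely conditional Jensen for the jointly convex perspective function $\phi(a,b)=a^2/b$, together with handling the degenerate slices where $\E[s^2\mid\cB]=0$ (which contribute nothing and can be excluded under a mild nondegeneracy assumption on the score). The preceding parts are routine once variance minimization is recast as pointwise quadratic optimization and the critic identity $Q(\cC)=\E[L(v)\mid\cC]$ is used together with the conditional independence.
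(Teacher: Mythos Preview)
Your derivation of the formula for $B^\ast(\cB)$ and of the second equality is essentially the same as the paper's: rewrite the variance as a second moment, condition on $\cB$, minimize a pointwise quadratic, and then use the tower property together with $\cB\subset\cC$ to pass between $Q(\cC)$ and $L(v)$. Your handling of the second equality via the conditional independence of $s^2$ and $L(v)$ given $\cC$ is in fact a bit more careful than the paper's, which simply pulls $s_\theta^2$ outside the inner conditional expectation.

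Where you diverge is in the monotonicity claim. You compute the optimal second moment explicitly and reduce the comparison to a conditional Jensen inequality for the perspective function $\phi(a,b)=a^2/b$. This is correct, but the paper's argument is far more elementary: since $\cB_1\subset\cB_2$, any $\cB_1$-measurable function is also $\cB_2$-measurable, so $B^\ast(\cB_1)$ is a feasible baseline for the set $\cB_2$; optimality of $B^\ast(\cB_2)$ over all $\cB_2$-measurable baselines then gives the inequality immediately. Your route buys an explicit quantitative expression for the gap and exercises the convexity of $a^2/b$, but the paper's one-line ``enlarge the feasible set'' argument avoids computing the minimum altogether and sidesteps any concern about degenerate slices with $\E[s^2\mid\cB]=0$.
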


This form of the optimal baseline is folklore, and can be found (in the context of RL) in \citep{greensmith2004variance} and \citep{ranganath2013black} (in the context of VI).

\subsection{Choosing conditioning sets of value functions and critics}

From our main theorem~\ref{thm:value-critic}, we know that valid critic and baseline sets together define a new estimate for the gradient of a stochastic computation graph. But how should one choose the critic and baseline sets? We discuss here the tradeoffs made when choosing different sets.

\begin{addmargin}[1em]{2em}% 1em left, 2em right
$\bullet\:\:$ Given a critic with conditioning set $\cC$, how to choose an appropriate set $\cB$ for the value?\end{addmargin}

For congruent optimal baselines, theorem \ref{thm:optimal-baseline} states that variance is always reduced by increasing the conditioning set $\cB$. For the optimal baseline, this implies it is optimal to choose $\cB$ to be the intersection of $\cC$ and the set of non-descendants of $v$. 
A related statement happens to be true for value baselines:
\begin{lemma}\label{lem:baseline-variance}
Consider two congruent baseline sets $\cB_1\subset \cB_2\subset \cC$. Then the variance of the advantage using $\cB_1$ is higher than that using $\cB_2$:
\begin{align*}
    \E\left[\left(Q(\cC)-V(\cB_2)\right)^2\right]\leq \E\left[\left(Q(\cC)-V(\cB_1)\right)^2\right]
\end{align*}
\end{lemma}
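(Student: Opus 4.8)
The plan is to recognize this as the classical monotonicity of mean-squared error under refinement of the conditioning set, once we identify each baseline value function as a conditional expectation of the critic. Since $\cB_1 \subset \cB_2 \subset \cC$ and all three value functions are taken with respect to the \emph{same} scalar $S=L(v)$, Lemma~\ref{lem:average} (the law of iterated expectations) gives $\E[Q(\cC)|\cB_2]=V(\cB_2)$ and $\E[Q(\cC)|\cB_1]=V(\cB_1)$, as well as $\E[V(\cB_2)|\cB_1]=V(\cB_1)$. In other words, $V(\cB_i)$ is exactly the minimum-mean-squared-error predictor of $Q(\cC)$ given the information in $\cB_i$ (cf.\ Lemma~\ref{corr:sq-min}).

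The key step is then an orthogonal (Pythagorean) decomposition. I would write
\begin{align*}
Q(\cC)-V(\cB_1) = \big(Q(\cC)-V(\cB_2)\big) + \big(V(\cB_2)-V(\cB_1)\big),
\end{align*}
square both sides, and take the total expectation. The cross term is $2\,\E\!\left[(Q(\cC)-V(\cB_2))(V(\cB_2)-V(\cB_1))\right]$, and I would show it vanishes by conditioning on $\cB_2$: the factor $V(\cB_2)-V(\cB_1)$ is a deterministic function of the variables in $\cB_2$ (because $\cB_1\subset\cB_2$), while $\E[Q(\cC)-V(\cB_2)|\cB_2]=0$ by the identity above. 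Pulling the $\cB_2$-measurable factor out of the inner conditional expectation makes the cross term collapse to zero, leaving
\begin{align*}
\E\!\left[(Q(\cC)-V(\cB_1))^2\right] = \E\!\left[(Q(\cC)-V(\cB_2))^2\right] + \E\!\left[(V(\cB_2)-V(\cB_1))^2\right].
\end{align*}
Since the last term is non-negative, the claimed inequality follows immediately.

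The only real subtlety — and the step I would take most care over — is the first one: verifying that the baseline value functions really are nested conditional expectations of the critic. This needs both the set inclusions $\cB_1\subset\cB_2\subset\cC$ (so that Lemma~\ref{lem:average} applies in the right direction) and the fact that all quantities are value functions of the identical scalar $L(v)$; if the critic and baselines predicted different downstream costs, the tower identity would not close and the cross term would not vanish. Once that identification is in place, the remainder is the standard variance-reduction-by-conditioning argument and requires no graph-specific reasoning.
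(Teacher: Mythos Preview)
Your proof is correct and takes essentially the same approach as the paper: both are the standard Pythagorean/variance-reduction-by-conditioning argument, using the tower identity $\E[Q(\cC)\mid\cB_2]=V(\cB_2)$ (which follows from $\cB_2\subset\cC$) to control the cross term. The only cosmetic difference is the direction of the decomposition---the paper writes $Q(\cC)-V(\cB_2)=(Q(\cC)-V(\cB_1))+(V(\cB_1)-V(\cB_2))$ so its cross term is $-(V(\cB_1)-V(\cB_2))^2$ rather than zero---but both routes yield the identical identity $\E[(Q-V(\cB_1))^2]=\E[(Q-V(\cB_2))^2]+\E[(V(\cB_2)-V(\cB_1))^2]$.
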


\begin{addmargin}[1em]{2em}% 1em left, 2em right
$\bullet\:\:$ Is there any point in using non-congruent value functions (i.e. using a conditioning set $\cB$ which is not a subset of the critic set $\cC$)? \end{addmargin}

The previous lemma suggests, but does not prove, that value functions should also be conditioned on the maximal congruent set $\cB^*$, intersection of $\cC$ with the non-descendants of $v$. One may wonder the usefulness of non-congruent value functions. Intuitively, variables in $\cB \setminus \cC$ have already been averaged in the critic, so it would be tempting to disregard non-congruent baselines.
The answer is here more complicated; non-congruent baselines may increasing or decrease variance, depending on correlations between value and critics, as we see in the following two examples.

\begin{figure}[h!]
    \centering
    \includegraphics[width=0.45\textwidth]{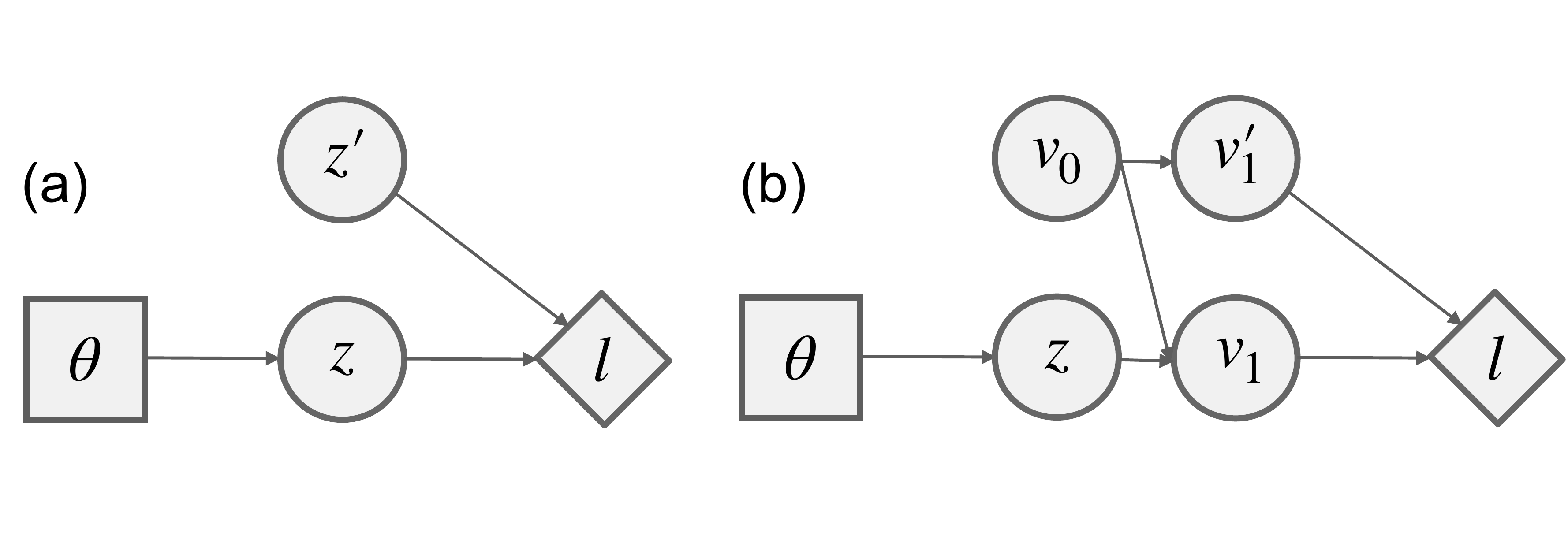}
    \caption{\emph{Constructing value sets}.
    In (a), non-congruent baselines increase variance, in (b), they decrease it.
    \label{fig:conditioning-sets}}
\end{figure}

\begin{example}
Consider the example of figure~\ref{fig:conditioning-sets}(a), where we consider two variables $z\sim p_\theta(z), z'\sim \mathcal{N}(0,\sigma)$, a loss function $\ell(z,z')=\ell(z)+z'$. Assume that the variance $\sigma$ of $z'$ is large, that all variables are observed. Because $z'$ has large variance, it tends to dominate the loss, even if it is not controllable by changing $\theta$. The gradient estimate for $\DbyDt \E_{z,z'} \left[ \ell(z,z')\right]$ is given, per theorem \ref{thm:value-critic}, by $\DbyDt \log p_\theta(z) (Q(\cC)-V_z(\cB)$.

The `naive' estimate consists in using the empirical loss $(\ell(z)+z')$ as critic (which is equivalent to choosing $\cC=\{z,z'\}$) and a  baseline equal to $\E[\ell(z)]$ (obtained by choosing the value set $\cB=\varnothing$); the resulting advantage $\ell(z)-\E[\ell(z)]+z'$ has high variance due to $z'$. A lower variance estimate can be obtained by choosing either $\cC=\{z\}$ and maximally congruent $\cB=\varnothing$, or $\cC=\{z,z'\}$ and maximally congruent $\cB=\{z'\}$; both lead to the same low-variance estimate $\ell(z)-\E[\ell(z)]$ of the advantage.

However, if we use $\cC=\{z\}$ and non-congruent $\cB=\{z'\}$, we obtain the high variance estimate again.
\end{example}

\begin{example}
Consider next the example in figure~\ref{fig:conditioning-sets}(b), with $z\sim p_\theta(z)$, $v_0\sim \mathcal{N}(0,\sigma)$;  $v_1$ and $v_1'$ are conditionally independent with respective distributions $\mathcal{N}(\ell(z)+v_0, \sigma')$ and $\mathcal{N}(v_0, \sigma')$, with $\sigma'<<\sigma$.
Suppose that $\ell(v_1,v_1')=v_1+v_1'$ and that all variables but $v_0$ are observed.
Taking for instance critic set $\cC=\{z,v_1\}$ and congruent set $\cB=\varnothing$, we have $Q(\cC)-V(\cB)=(\ell(z)-\E[\ell(z)])+2(v_1-\ell(z))$, which is high variance because $(v_1-\ell(z))$ has variance at least as high as $v_0$. 

Note however that if we choose the non-congruent set $\cB=\{v_1'\}$, then $V(\cB)=\E[\ell(z)]+2v_1'$, so that $Q-V=(\ell(z)-\E[\ell(z)])+2(v_1-\ell(z)-v_1')$,
which is much lower variance since $(v_1-\ell(z)-v_1')$ only has variance $\sigma'$. 
\end{example}

\begin{addmargin}[1em]{2em}% 1em left, 2em right
$\bullet\:\:$ Having looking at the conditioning set of the value function, we look at the following question: how should we construct a critic conditioning set $\cC$? \end{addmargin}

A Rao-Blackwellization argument suggests that marginalizing out as many variables as possible leads to the lowest variance. However, this is assuming exact conditional expectations. However, in practice those are not easily computed and we will resort to function approximation (see section~\ref{sec:comp-val-crit}). When using function approximations to our estimates of value and functions, there will be a bias-variance trade-off between using approximate critic which marginalizes out random variable (which leads to bias) or using model samples of the losses (which leads to variance). Furthermore, the re-usability of a particular value function for multiple purposes (e.g.\ to serve as part of critics for multiple nodes, and / or as bootstrap target(s), see below) can motivate more limited marginalization).

\subsection{Critic correction when using gradient-critics}
\label{sec:debiasing}

A common pattern is to use critic to approximate a model that has no gradient, or whose gradients are unknown. In the case we use a Markovian set for the critic, we can learn the critic from predicted losses only (i.e. set $\alpha$ to $0$ in equation~\ref{eq:Sobolev}), but then use the corresponding gradient as a gradient-critic in the reparameterized graph. The use of the critic or gradient critic incurs a bias in the estimator. Here we see a technique which combines gradient-critics and critics in order to build an unbiased estimator which still leverages the gradient-critic (this technique can in fact be used anytime a gradient-critic is used)
Consider a stochastic node $v$ which can be reparametrized, let $L(v)$ be the sum of all downstream losses from $v$. Consider a Markovian critic set $\cC_v$ for $v$ and let $Q(\cC_v)$ be the corresponding value critic, and $\hat Q(\cC_v)$ an approximation of it. Let $\theta_v$ be an arbitrary parent of $v$.
From theorem~\ref{thm:gradientCritic:criticGradient}, $\DxbyDy{Q(\cC_v)}{v}$ is a valid gradient-critic for $v$, and we have $$\DxbyDy{\E[L(v)]}{\theta_v}=\E\left[\DxbyDy{\log p(v)}{\theta_v}L(v)\right]=\E\left[\DxbyDy{\log p(v)}{\theta_v}Q(\cC_v)\right]=\E\left[\DxbyDy{Q(\cC_v)}{v}\DxbyDy{v}{\theta_v}\right]\approx \E\left[\DxbyDy{\log p(v)}{\theta_v}\hat Q(\cC_v)\right]$$ 
The second equality is the regular score function estimator; the second, the score function estimator using critics; the third, a reparametrized estimator using gradient-critics. The last equality is the same as the second, but taking into account the bias induced by the use of an approximate critic.

By writing the loss $L(v)$ as the sum of two terms $(L(v)-Q(\cC_v))+Q(\cC_v)$, we can use the score function estimator on the first term,  reparameterize the second term, and therefore potentially leverage the lower variance of the reparameterized gradient while keeping unbiasedness thanks to the score function term. The resulting estimator will take the following form:
$$\DxbyDy{\E[L(v)]}{\theta_v}=\E\left[\DxbyDy{\log p(v)}{\theta_v}\left(L(v)-\hat Q(\cC_v)\right)+\DxbyDy{\hat Q(\cC_v)}{v}\DxbyDy{v}{\theta_v}\right].$$ 
This estimator is strongly related to Stein variational estimator, see e.g.\ \citep{liu2016stein}.
Note if the critic is exact, i.e.\ $Q=\hat Q$, the first term has zero expectation and may be excluded.
A common use pattern is when $v$ is an action $a$ (in RL) or sample $z$ (in generative models), the critic set is $(s,a)$, where $s$ is the state (in RL) or context/previous state (in generative models).
This technique sometimes called `action-conditional baselines' in the literature, see for instance \citep{tucker2017rebar,gu2015muprop,grathwohl2017backpropagation}. Is it however not clear that the bias correction does not increase the variance in such a way that the gradients from reparameterization become dominated by the noise of the score function \citep{tucker2018mirage}.
Interestingly, the action-conditional baseline literature takes an almost opposed interpretation to ours: while we see the $(L(v)-\hat Q(\cC))$ as a correction to the biased estimator resulting using an approximate critic $\hat Q(\cC)$, \citep{tucker2017rebar,grathwohl2017backpropagation} view instead the critic as an `invalid' baseline (as in, it biases the gradient of the score function estimator), which needs to be corrected with the reparametrized term $\DxbyDy{\hat Q(\cC_v)}{v}\DxbyDy{v}{\theta_v}$. In our view, this interpretation suffers from not having a natural way for trading off bias and variance, for instance weighting the correction term $\left(L(v)-\hat Q(\cC_v)\right))$ with a coefficient less than $1$ or leaving it out entirely. 

\section{Proofs}

For any node $v$, let $\bcG_v$ be the set of non-descendants of $v$.

\subsection{Computation lemmas}

We provide a couple of useful lemmas describing properties of the computation resulting from a stochastic computation graph. 

Our first lemma describes a set which deterministically computes $\ell$ given information in a set $\cC$, where $\cC$ is `as far away' from $\ell$ as possible.

\begin{lemma}\label{lem:Decomp}
Consider a cost $\ell$ and arbitrary set $\cC$ for which $\ell$ descends from $\cC$. Let a sequence of set $\cV_i$ be described as follows:$\cV_0=\{\ell\}$, and for any $i\geq 1$, we let $\cV_i$ be defined as follows:
\begin{itemize}
    \item Any stochastic or input node in $\cV_{i-1}$ is included in $\cV_i$.
    \item Any node in $\cC \cap \cV_{i-1}$ is included in $\cV_i$.
    \item The parents of any deterministic node in $\cV_{i-1} \setminus \cC$ are included in $\cV_i$.
\end{itemize}
Then, for any $i$, $\ell$ can be deterministically computed from $\cV_i$. Furthermore, the set converges to a set $\cV$ which only contains stochastic nodes, input nodes, and nodes in $\cC$. 
\end{lemma}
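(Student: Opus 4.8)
The plan is to prove the two assertions separately: first, by induction on $i$, that $\ell$ is deterministically computable from every $\cV_i$; second, by a topological potential argument, that the sequence $(\cV_i)$ is eventually constant with the stated composition. For the first assertion I would first isolate a transitivity property of deterministic computability: if $x$ is deterministically computable from $\cA$ and every node of $\cA$ is deterministically computable from $\cB$, then $x$ is deterministically computable from $\cB$. Operationally this is immediate (compute $\cA$ from $\cB$ without sampling, then $x$ from $\cA$); from the blocking definition I would take any stochastic-to-$x$ path $P$ unblocked by $\cB$, find a node $a\in\cA$ blocking it (one exists since $x$ is deterministically computable from $\cA$), and note that the initial segment of $P$ up to $a$ is a stochastic-to-$a$ path unblocked by $\cB$, contradicting deterministic computability of $a$ from $\cB$.

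With transitivity in hand, the induction is short. The base case $\cV_0=\{\ell\}$ holds because every path ending at $\ell$ contains $\ell$ and is therefore blocked by $\{\ell\}$. For the inductive step I would establish the key claim that \emph{every node of $\cV_{i-1}$ is deterministically computable from $\cV_i$}: stochastic, input, and $\cC$-nodes of $\cV_{i-1}$ survive into $\cV_i$ by rules 1 and 2, hence are trivially computable; and any deterministic non-input node $w\notin\cC$ has all of its parents $h_w$ placed into $\cV_i$ by rule 3, so since $w$ is a deterministic function of $h_w$ every stochastic-to-$w$ path enters $w$ through a parent in $\cV_i$ and is blocked, making $w$ computable. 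Applying transitivity to the induction hypothesis that $\ell$ is computable from $\cV_{i-1}$ then gives that $\ell$ is computable from $\cV_i$.

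For convergence I would introduce the rank $\rho(v)$, the length of the longest directed path ending at $v$, which is finite since $\cG$ is a finite DAG and satisfies $\rho(u)<\rho(v)$ for every parent $u$ of $v$. Writing $\cD_i=\cV_i\cap\{\text{deterministic, non-input nodes not in }\cC\}$ for the expandable nodes, the structural observation is $\cD_i\subseteq\bigcup_{w\in\cD_{i-1}}h_w$: expressing $\cV_i$ as the union of the retained stochastic/input nodes, the retained $\cC$-nodes, and the parents introduced by rule 3, the first two groups contain no expandable nodes, so each element of $\cD_i$ is a parent of some element of $\cD_{i-1}$ and thus has strictly smaller rank. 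Hence $\max_{\cD_i}\rho<\max_{\cD_{i-1}}\rho$ whenever $\cD_i\neq\emptyset$; as these maxima are non-negative integers, $\cD_N=\emptyset$ for some finite $N$. I would then check the recursion is stationary once $\cD_i=\emptyset$: the remaining nodes are all stochastic, input, or in $\cC$, each retained by rules 1 and 2 with no new parents added, so $\cV_i=\cV_N=:\cV$ for all $i\geq N$, and $\cV$ has exactly the claimed composition.

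The main obstacle is bookkeeping rather than any hard estimate: I must handle the overlapping node classifications carefully, since input nodes are themselves a special kind of parentless deterministic node and a node in $\cC$ may simultaneously be stochastic or an input, so I have to verify that the three construction rules partition $\cV_i$ as intended and that the potential genuinely strictly decreases. I would also make sure the transitivity argument covers the degenerate case where the blocking node $a$ coincides with the stochastic endpoint of $P$ (which forces that endpoint into $\cB$ and yields the same contradiction). Finally, note that the hypothesis that $\ell$ descends from $\cC$ is not used in either conclusion; it is presumably retained for the lemma's intended downstream application, and I would not invoke it.
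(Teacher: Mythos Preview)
Your proposal is correct and follows essentially the same two-part structure as the paper's own proof: an induction showing each $\cV_{i-1}$ is computable from $\cV_i$ (hence $\ell$ from every $\cV_i$), followed by a finite-depth argument that the expandable deterministic nodes eventually vanish. Your version is more careful---you formalize transitivity via the blocking definition and use a rank potential where the paper uses distance-to-$\ell$, and you correctly flag that the hypothesis ``$\ell$ descends from $\cC$'' is unused---but these are presentational refinements of the same argument rather than a different route.
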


\begin{proof}
The first part of the proof is a simple recursion. $\ell$ is clearly a deterministic function of $\ell$. Assume now that $\ell$ is a deterministic function of $\cV_{i}$, and consider the set $\cV_{i+1}$. For any variable $v$ in $\cV_{i}$, it is either in $\cV_{i+1}$, or a deterministic function of its parents, which are included in $\cV_{i+1}$. It follows that $\ell$ is a deterministic function of $\cV_{i+1}$. For proving convergence, the set of nodes which are are in $\cV_i$ and either in $\cC$, stochastic, or input nodes is non-decreasing (since nodes of that type stay in $\cV_i$ if they are in $\cV_{i-1})$. As for deterministic nodes, they can only be in $\cV_i$ if they are in $\cC$ or if they are in a directed path of distance $i$ to $\ell$. This means that for $j$ greater than the the longest directed path between a node and $\ell$, $\cV_j$ can only contain deterministic nodes if they are in $\cC$ (which proves the second part of the lemma). Together, these statements imply that for $j'\geq j$, the set $\cV_{j'}$ is non-increasing, and it is bounded, therefore it converges. 
\end{proof}

We provide a pictorial representation of the algorithm for a given graph as follows:

\begin{figure}[H]
    \centering
    \includegraphics[width=0.9\textwidth]{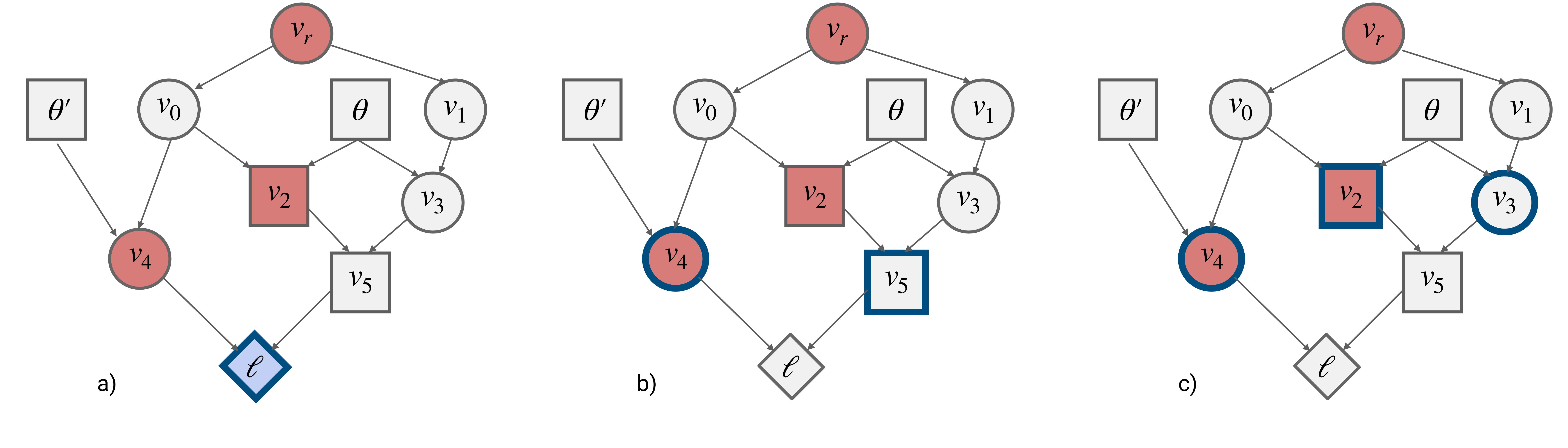}
    \caption{Illustration of the algorithm described in Lemma~\ref{lem:Decomp}. The set $\cC$ is indicated by nodes with dark red background, the sets $\cV_i$ by nodes with  blue frame. The algorithm finishes at the third step, since the resulting set is stable. $\ell$ can indeed be deterministically computed from $v_2, v_3$ and $v_4$.}
    \label{fig:root_algo}
\end{figure}

When $\cC$ is Markovian, the set $\cV$ has an important property which we will later use.

\begin{property}\label{prop:MarkovDecomp}
Let $\cW(\cV)$ be the set of stochastic ancestors of nodes in $\cV$, unblocked by $\cC$. We have:
\begin{eqnarray}
P(\cW|\cC)=\prod_{w\in cW}p(w|h_w(\cC))
\end{eqnarray}
where the notation $h_w(\cC)$ is simply to indicate that the simple forward computation of $h_w$ depends on the values of variables in $\cC$ which are ancestors of variables in $\cW$.
\end{property}

In other words, to sample all the variables in $\cW$, we can simply use the forward model; the conditioning set $\cC$ only plays a role in setting the value of ancestors for some of the variables in that set. 

In the example above, the set $\cW$ is $\{v_3, v_1\}$; this is because $v_3$ is in $\cV$ (but not in $\cC$) and $v_1$ is an ancestor of $v_3$, unblocked by $\cC$. The node $v_r$ is \emph{not} included in $\cW$ because it is in $\cC$.
Conditional on $\cC$, the distribution of $\cW$ is given by $p(v_1|v_r)p(v_3|v_1)$.

\begin{proof}[Proof of Prop.~\ref{prop:MarkovDecomp}]
By the definition of Markovianity, nodes in $\cW$ cannot have a descendent in $\cC$. The property follows from applying Bayes' rule.
Using a graph-compatible ordering of variables in $w_i$, we have: $P(\cW|\cC)=\prod_i p(w_i|\cC, w_{<i})=\prod_i \frac{p(w_i, \cC, w_{<i})}{p(\cC, w_{<i})}$. 
At this point we simply write numerator and denominators as product of probabilities of variables conditioned on parents; because none of the variables in $\cC$ descend from a variable in $\cW$, all terms cancel out except for $p(w_i|h_{w_i})$, which gives us the desired result.
\end{proof}
Our second lemma relates conditional independence between variables to conditional independence between derivatives.

\begin{lemma}\label{lem:cond-indep-grad}
Assume that $\log p_v$ is conditionally independent of cost $\ell$ given $\cC$. Then $\DxbyDy{\log p_v}{\theta}$ is conditionally independent of $\ell$ given $\cC$.
\end{lemma}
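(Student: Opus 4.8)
The plan is to reduce the statement to the elementary fact that conditional independence is preserved under deterministic transformations: if $W=\Phi(\cA)$ for a deterministic map $\Phi$ and $\cA$ is conditionally independent of $\ell$ given $\cC$, then $W$ is conditionally independent of $\ell$ given $\cC$. The task then splits into (i) identifying a set $\cA$ of graph variables such that $\DxbyDy{\log p_v}{\theta}$ is a deterministic function of $\cA$, and (ii) showing that this $\cA$ is itself conditionally independent of $\ell$ given $\cC$.

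For step (i), I would expand $\DxbyDy{\log p_v}{\theta}$ by the chain rule. Because the paper's convention assigns gradient $0$ to every sampling operation, the only paths that contribute are the differentiable (deterministic) directed paths from $\theta$ to $\log p_v$; the derivative is therefore a deterministic function of the values of the nodes lying on those paths, together with $v$ and $h_v$ (at which the partials of the conditional density are evaluated). Every such node is an ancestor of $\log p_v$, so $\DxbyDy{\log p_v}{\theta}=\Phi(\cA)$ with $\cA\subseteq\{\log p_v\}\cup\mathrm{anc}(\log p_v)$; crucially, no strict descendant of $\log p_v$ enters the computation.

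For step (ii), I would argue by d-separation, exploiting that every node $a\in\cA$ is joined to $\log p_v$ by a directed (differentiable) sub-path $a\to\cdots\to\log p_v$. Given an active path (conditioned on $\cC$) from some $a\in\cA$ to $\ell$, I would prepend the reversed directed sub-path to obtain a walk from $\log p_v$ to $\ell$, and check it remains active, contradicting the hypothesis that $\cC$ d-separates $\log p_v$ from $\ell$. This would show that $\cC$ d-separates $\cA$ from $\ell$, hence $\cA$ is conditionally independent of $\ell$ given $\cC$, and close the argument.

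The hard part is exactly this last concatenation. It goes through cleanly only if no intermediate node of the directed sub-path $a\to\cdots\to\log p_v$ lies in $\cC$, since such a node is a non-collider and would block the lifted walk. Yet the total derivative can legitimately depend on an ancestor $a$ whose influence on $\log p_v$ is routed through the $\DxbyDy{h_v}{\theta}$ factors and hence through a node of $\cC$, and then $a$ may remain d-connected to $\ell$ even though $\log p_v$ is not. I would therefore handle the clean case first --- in particular when $\theta$ is a direct parent of $v$, so that $\DxbyDy{\log p_v}{\theta}=\dxbydy{\log p_v}{\theta}$ is a function of $v$ and $h_v$ alone and the lift is immediate --- and for the general case lean on the companion characterization of conditional independence between total derivatives (Lemma~\ref{lem:cond-ind-grads}) together with the stated regularity conditions to control the derivative's dependence on ancestors reached through the conditioning set.
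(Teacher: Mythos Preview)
Your route differs from the paper's and is considerably longer. The paper gives a one-line structural argument via forward-mode differentiation: the stochastic computation that produces $\cC$, $\ell$ and $\log p_v$ can equally produce $\cC$, $\ell$ and the pair $(\log p_v,\DxbyDy{\log p_v}{\theta})$, by replacing every node $w$ with the tuple $(w,\DxbyDy{w}{\theta})$ while leaving the edge set unchanged. Because the hypothesis is a \emph{graph-induced} independence (d-separation) and d-separation depends only on the edges, the separation that holds for $\log p_v$ automatically holds for the augmented node carrying its derivative. The paper explicitly flags that the argument works only because attention is restricted to graph-induced conditional independences.

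Your plan instead aims at a strictly stronger intermediate claim: you write $\DxbyDy{\log p_v}{\theta}=\Phi(\cA)$ for a set $\cA$ of ancestors and then try to establish $\cA\perp\ell\mid\cC$ node by node via path-lifting. As you correctly diagnose, this fails in general --- an ancestor $a$ whose directed path to $\log p_v$ runs through some $c\in\cC$ can remain d-connected to $\ell$ even though $\log p_v$ is blocked --- so $\cA\perp\ell\mid\cC$ need not hold even when $\Phi(\cA)\perp\ell\mid\cC$ does. The forward-mode augmentation sidesteps this entirely by placing the derivative at the \emph{same} graph location as $\log p_v$, so one never has to reason about the d-separation of individual ancestors. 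Your fallback to Lemma~\ref{lem:cond-ind-grads} does not obviously rescue the argument either: that lemma concerns conditional independence between two total derivatives rather than between a derivative and a cost, and in the paper it is justified ``for similar reasons'', i.e.\ by the very forward-mode observation you would be leaning on.
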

\begin{proof}
By forward mode differentiation, the stochastic computation which describes the computation of $\cC$, $\ell$ and $\log \pi_v$ can also describe the computation of $\cC, \ell$ and $(\log p_v, \DxbyDy{\log p_v}{\theta})$. If the graph properties imply conditional independence of $\log p_v$ and $\ell$ given $\cC$, they therefore also imply conditional independence of $\ell$ and $\DxbyDy{\log p_v}{\theta}$ given $\cC$. Note this is only true because we only consider graph-induced conditional independence properties.
\end{proof}

For similar reasons, we have:
\begin{lemma}\label{lem:cond-ind-grads}
Let $u,v,w$ be nodes in a graph, and let $\cC$ be an arbitrary set. $\DxbyDy{v}{u}$ and $\DxbyDy{w}{v}$ are conditionally independent given $\cC$ if $\{u': u' \text{ is in a path between $u$ and $v$}\}$ and $\{v': v' \text{ is in a path between $v$ and $w$}\}$ are conditionally independent given $\cC$.
\end{lemma}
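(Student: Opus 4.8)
The plan is to mirror the forward-mode differentiation argument already used for Lemma~\ref{lem:cond-indep-grad}, thereby reducing the claim about total derivatives to the hypothesis about the two path-node sets. Write $A = \{u' : u' \text{ on a directed path from } u \text{ to } v\}$ and $B = \{v' : v' \text{ on a directed path from } v \text{ to } w\}$, so that the assumption is exactly that $A$ and $B$ are conditionally independent given $\cC$ in the graph-induced (d-separation) sense. The proof then splits into two parts: (i) $\DxbyDy{v}{u}$ is a deterministic function of the variables associated with $A$, and $\DxbyDy{w}{v}$ is a deterministic function of those associated with $B$; and (ii) deterministic functions of sets that are conditionally independent given $\cC$ are themselves conditionally independent given $\cC$.

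For step (i), I would invoke forward-mode differentiation. By the chain rule summed over paths, $\DxbyDy{v}{u} = \sum_{P : u \to v} \prod_{(a,b)\in P} \dxbydy{b}{a}$, where the sum ranges over directed paths $P$ from $u$ to $v$. Equivalently, propagating the forward sensitivity $\dot a = \DxbyDy{a}{u}$ through the graph via $\dot u = 1$ and $\dot a = \sum_{p \in h_a} \dxbydy{a}{p}\,\dot p$, one sees that $\DxbyDy{v}{u} = \dot v$ is computed entirely from quantities attached to nodes that lie on directed paths from $u$ to $v$, since $\dot a = 0$ whenever $a$ is not a descendant of $u$. Thus $\DxbyDy{v}{u}$ can be realized as an extra deterministic node in an augmented graph whose stochastic ancestors are contained in $A$, and symmetrically for $\DxbyDy{w}{v}$ and $B$.

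For step (ii), I would argue directly with d-separation. Append a deterministic node $S = \DxbyDy{v}{u}$ whose parents all lie in $A$, and a deterministic node $T = \DxbyDy{w}{v}$ whose parents all lie in $B$. Any path in the augmented graph between $S$ and $T$ must enter and leave through a node of $A$ and a node of $B$ respectively; hence if $\cC$ blocks every path between $A$ and $B$ it also blocks every path between $S$ and $T$. This is precisely the statement that functions of conditionally independent sets remain conditionally independent given $\cC$, and it is valid here for the same reason noted in Lemma~\ref{lem:cond-indep-grad}, namely that we reason only with graph-induced conditional independence.

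The main obstacle is the bookkeeping in step (i): each local factor $\dxbydy{b}{a}$ is a function of \emph{all} parents $h_b$ of $b$, so $\DxbyDy{v}{u}$ can in principle depend on variables that feed into path-nodes without themselves lying on a $u$-to-$v$ path (for example if $v = a\cdot b$ with $a$ downstream of $u$ and $b$ independent, then $\DxbyDy{v}{u} = b\,\DxbyDy{a}{u}$ depends on $b$). The careful version of the argument therefore requires the phrase ``on a path between $u$ and $v$'' to be read as the set of variables appearing in the forward-mode computation of the derivative, i.e.\ the on-path nodes together with the parents whose values enter the local Jacobians. Verifying that this enlarged dependency set is the one assumed conditionally independent of its $w$-side counterpart is the crux; once it is established, step (ii) closes the proof immediately.
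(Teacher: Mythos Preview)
Your approach is essentially the same as the paper's: the paper offers no standalone proof of this lemma, writing only ``For similar reasons, we have:'' after Lemma~\ref{lem:cond-indep-grad}, i.e.\ it relies on the same forward-mode-differentiation-augments-the-graph idea you spell out. Your treatment is in fact more careful than the paper's, and the bookkeeping subtlety you flag (that local Jacobians $\partial b/\partial a$ may depend on all parents of $b$, not just on-path nodes) is a genuine looseness in the lemma's phrasing that the paper does not address either.
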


\subsection{Proofs of results for value-based methods}\label{sec:appendix-value}

\begin{proof}[Proof of Theorem~\ref{thm:value-critic}]

We prove theorem~\ref{thm:value-critic} by proving that for any node $v$ in $\cS$, $\E\left[\DxbyDy{}{\theta}\log p_v L\right]=\E[q_v]$. To do so, we follow three steps:
1) show that $L$ can be replaced by $L(v)$, 2) show that any baseline $B$ can be subtracted from $L(v)$ (and hence, a value baseline can be subtracted), and 3) show that $L(v)$ can be replaced by a critic $Q(\cC)$.

For any node $v$ in $\cS$, consider any cost $\ell$ non-descendant from $v$, i $\ell \in \bcG_v$. the contribution of $v$ to the gradient estimate with respect to $\ell$ is $\E\left[\DbyDt \log p_v \ell \right]
$. Using $\cG=\bcG_v\cup (\cG\setminus \bcG_v)$ and the law of iterated expectations, we obtain:
\begin{align*}
\E\left[\DbyDt \log p_v \ell \right] =& \E_{\bcG_v}\left[\E_{\cG\setminus \bcG_v| \bcG_v}\left[\DbyDt \log p_v \ell\right]\right] \\
=&\E_{\bcG_v}\left[\ell\:\E_{\cG\setminus \bcG_v | \bcG_v} \left[ \DbyDt \log p_v \right]\right] = 0
\end{align*}
The second equality comes from the fact that this particular $\ell$ is a deterministic function of $\bcG_v$ (since it is a non-descendant of $v$). The third equality comes from the following:
\begin{align*}
    E_{\cG\setminus \bcG_v | \bcG_v} \left[ \DbyDt \log p_v \right]= 
   & \int_v p(v|h_v) \DbyDt \log p(v|h_v) \\=
   & \int_v p(v|h_v) \frac{\DbyDt p(v|h_v)}{p(v|h_v)} \\  
  = & \int_v {\DbyDt p(v|h_v)} = \DbyDt \int_v p(v|h_v) =\DbyDt 1 = 0
\end{align*}
It is important to note that conditional on the non-descendants $\bcG_v$ (which includes the parents $h_v$ of $v$, but no descendants) the distribution of $v$ is simply $p(v|h_v)$. 
Using the same idea, we show that subtracting a baseline does not bias the expectation:
\begin{align*}
 \E\left[\DbyDt \log p_v V(\cB) \right] = & \E_{\bcG_v}\left[\E_{\cG\setminus \bcG_v| \bcG_v}\left[\DbyDt\log p_v B(\cB)\right]\right] =  \E_{\bcG_v}\left[B(\cB) \E_{\cG\setminus \bcG_v| \bcG_v}\left[\DbyDt\log p_v \right]\right] = 0
\end{align*}
Finally,
\begin{align*}
\E\left[\DbyDt \log p_v L(v) \right] = & \E_{\cC_{v}}\left[\E_{\cG \setminus \cC_{v}|\cC_{v}}\left[\left(\DbyDt \log p_v\right) L(v)\right]\right] \\
= & \E_{\cC_{v}}\left[\E\left[\DbyDt\log p_v|\cC\right] \E\left[ L(v)|\cC\right]\right]\\
= & \E_{\cC_{v}}\left[\E\left[\DbyDt\log p_v|\cC\right] Q(\cC)\right]\\
= & \E_{\cC_{v}}\left[\E\left[Q(\cC)\DbyDt\log p_v|\cC\right] \right]\\
= & \E\left[Q(\cC)\DbyDt\log p_v\right]
\end{align*}
where the second line comes from the conditional independence of $L(v)$ and $\DbyDt\log p_v$ given $\cC$ (following the conditional independence of $L(v)$ and $\log p_v$, and applying lemma~\ref{lem:cond-indep-grad}), the third line follows from the definition of the critic, and the fourth from the fact that the critic is constant as a function of $\cC$.

\end{proof}

\begin{proof}[Proof of Theorem~\ref{thm:optimal-baseline}]

The variance of $q_v$ is $\Var(q_v) = \E[q_v^2]-\E[q_v]^2$. Since the later term is unaffected by the choice of the baseline function, minimizing the variance is equivalent to minimizing $\E[q_v^2]$. 
\begin{align*}
    \E[q_v^2] =& \E\left[s_\theta(v,h_v)^2 \left(Q(v,\cC)-V(\cB)\right)^2\right]\\
    =& \sum_{b} p(\cB=b) \left(\E_{\cG \setminus \cB | \cB=b}\left[s_\theta^2 \left(Q(v,\cC)-B(b)\right)^2\right]\right)
\end{align*}
The expression above is a sum of non-negative expressions, each involving a distinct $B(b)$ to optimize. The sum is jointly minimized if each is minimized.
For a given $b$, we take the gradient of $\E_{\cG \setminus \cB | \cB=b}\left[s_\theta^2 \left(Q(v,\cC)-B(b)\right)^2\right]$ with respect to $B(b)$ and set it to zero; we find:
\begin{align*}
    B^*(\cB) = \frac{\E_{\cG\setminus \cB|\cB}\left[s_\theta^2 Q(v,\cC)\right]}{\E_{\cG\setminus \cB|\cB}\left[s_\theta^2\right]}
\end{align*}
To prove the second equality, simply note that $\cB \subset \cC$ (since the baseline is congruent); it follows that:
\begin{align*}
\E_{\cG\setminus \cB|\cB}\left[s_\theta^2 L(v) \right]  = &\: \E_{\cC\setminus \cB|\cB} \left[\E_{\cG\setminus \cC | \cC} s_\theta^2 L(v)\right]\\
= &\: \E_{\cC\setminus \cB|\cB} \left[s_\theta^2 Q(v,\cC) \right]
\end{align*}

Next, we consider two conditioning sets $\cB_1 \subset \cB_2$, with respective optimal baselines $B^{*}_{1}$ and $B^{*}_{2}$ (as defined above).
For an assignment $b$ of variables in $\cB_2$, let ${b}_{|1}$ be the restriction of $b$ to the variables found in $\cB_1$. We can construct a baseline $B$ with set $\cB_2$ by choosing $V(b)=V^*_1({b}_{|1})$; this is a valid baseline for set $\cB_2$, but is strictly equivalent to using the optimal baseline $B^*_1$ for set $\cB_1$. By optimality of $B^*(\cB_2)$, the variance of $q_v$ using $B^*(\cB_1)$ is higher than using $B^*(\cB_2)$.
\end{proof}

\begin{proof}[Proof of lemma~\ref{lem:average}]
Consider $\cX_v^1\subset \cX_v^2$. By definition of the conditional expectation, we have
$V(\cX_v^1)=\E[L(v)|\cX_v^1]$. By the law of iterated expectation, $\E[L(v) | \cX_v^1]=\E[\E[L(v)|\cX_v^2]|\cX_v^1]=\E[V(\cX_v^2)|\cX_v^1]$. 
\end{proof}

\begin{proof}[Proof of property~\ref{prop:Markov}]
Applying property~\ref{prop:MarkovDecomp}, we can see that the conditional distribution of $\cW$ given $\cX_v$ and $\cX_v^\uparrow$ is the same (direct stochastic ancestors to a variable in $\cW$ cannot be in $\cX_v^\uparrow\setminus \cX_v)$, therefore the conditional expectations are the same. By law of iterated expectations, for $\cX_v\subset \cX_v' \subset \cX_v^\uparrow$, $V(\cX_v')=\E[\E[L(v)|\cX_v^\uparrow]|\cX_v']=\E[V(\cX_v^\uparrow)|\cX_v']=\E[V(\cX_v)|\cX_v']=V(\cX_v)$.
\end{proof}

\begin{proof}[Proof of lemma~\ref{lem:Bellman}]
Since $\cX^1\subset \cX^{2\uparrow}$, by Lemma~\ref{lem:average}, we have $V(\cX^1)=E[V(\cX^{2\uparrow})|\cX^1]$. But since $\cX^2$ is Markov, $V(\cX^{2\uparrow})=V(\cX^2)$ and the lemma follows.
\end{proof}

\begin{proof}[Proof of theorem~\ref{thm:bootstrap}]
By the decomposition assumption, we have:
\begin{align*}
    L(v) = \sum_i L(V_i)
\end{align*}
Taking expectations conditional on $\cX$, we obtain
\begin{align*}
    V(\cX)=\E[L(v) | \cX] = \sum_i\E\left[L_{v_i}|\cX\right]
\end{align*}

For each $i$, $V(\cX_{V_i})=\E[L(V_i)|\cX_{V_i}]$; by lemma~\ref{lem:Bellman}, $\E\left[L({V_i})|\cX\right]=\E[V(\cX_{V_i}|\cX]$.
\end{proof}

\begin{proof}[Proof of Lemma~\ref{lem:baseline-variance}]
This follows from classical manipulation regarding conditional expectation:
\begin{align*}
    \E\left[(Q(\cC)-V(\cB_2))^2|\cB_2\right]= &
    \E\left[\Big(\big(Q(\cC)-V(\cB_1)\big)+\big(V(\cB_1)-V(\cB_2)\big)\Big)^2\Big|\cB_2\right]\\
    = & \E\Big[\big(Q(\cC)-V(\cB_1)\big)^2+\big(V(\cB_1)-V(\cB_2)\big)^2\\
    & +2\big(Q(\cC)-V(\cB_1)\big)\big(V(\cB_1)-V(\cB_2)\big)\Big|\cB_2\Big]
\end{align*}
The last term can be simplified:
\begin{align*}
    \E\left[\big(Q(\cC)-V(\cB_1)\big)\big(V(\cB_1)-V(\cB_2)\big)\Big|\cB_2\right]& =\big(V(\cB_1)-V(\cB_2)\big)\E\left[ Q(\cC)-V(\cB_1)\Big|\cB_2\right]\\
    &=-\big(V(\cB_1)-V(\cB_2)\big)^2
\end{align*}
The first equality is due to the fact that since $\cB_1\subset \cB_2$, conditioned on $\cB_2$, both values are constant; the second equality due to $\E[Q(\cC)|\cB_2]=V(\cB_2)$ due to $\cB_2\subset \cC$ and the law of iterated expectations. We obtain $\E\left[(Q(\cC)-V(\cB_2))^2|\cB_2\right] = \E\left[\big(Q(\cC)-V(\cB_1)\big)^2-\big(V(\cB_1)-V(\cB_2)\big)^2\Big|\cB_2\right]$, which from applying the law of iterated expectation again becomes
\begin{align*}
    \E\left[(Q(\cC)-V(\cB_2))^2\right] = \E\left[\big(Q(\cC)-V(\cB_1)\big)^2-\big(V(\cB_1)-V(\cB_2)\big)^2\right] \leq \E\left[\big(Q(\cC)-V(\cB_1)\big)^2\right]
\end{align*}
\end{proof}

\subsection{Proofs of results for gradient-based methods}\label{sec:appendix-gradient}

\begin{proof}[Proof of theorem~\ref{thm:grad-critic}]
Using the iterated law of expectations once again:
\begin{align*}
    \E_{\cG}\left[\sum_i \DxbyDy{L^s}{v_i} \dxbydy{v_i}{v}\right] & = \sum_i \E_{\cC_{v_i}}\left[\E_{\cG | \cC_{v_i}}\left[ \DxbyDy{L^s}{v_i} \dxbydy{v_i}{v}\right]\right]\\
    &= \sum_i \E_{\cC_{v_i}}\left[\E_{\cG | \cC_{v_i}}\left[ \DxbyDy{L^s}{v_i} \right]\:\E_{\cG | \cC_{v_i}}\left[\dxbydy{v_i}{v}\right]\right]
    =\sum_i \E_{\cC_{v_i}}\left[ g_{v_i}\: \E_{\cG | \cC_{v_i}}\left[\dxbydy{v_i}{v}\right] \right]\\
    &=\sum_i \E_{\cC_{v_i}}\left[ \E_{\cG | \cC_{v_i}}\left[g_{v_i}\dxbydy{v_i}{v}\right] \right]=\sum_i \E\left[g_{v_i}\dxbydy{v_i}{v}\right]
\end{align*}
The third equality comes the conditional independence assumption, the fourth from the definition of the gradient-critic, and the fourth from the fact that the gradient-critic is a deterministic function of $\cC_{v_i}$.
\end{proof}

\begin{proof}[Proof of theorem~\ref{thm:gradientCritic:criticGradient}]

From the Markov assumption, following Lemma~\ref{lem:Decomp} and property~\ref{prop:MarkovDecomp}, we can write $L(v)$ as a deterministic function of a set $\cV \ni v$ with stochastic ancestors $\cW$ with $P(\cW|\cC)=\prod_{w \in \cW}p(w|h_w(\cC))$.

We have $Q(\cC)=\int_{\mathbf{w} \in \cW} \prod_{w \in \cW}p(w|h_w)  L(v)(\cV)$.
Taking the gradient with respect to $v$, we obtain:
\begin{align*}
    \DxbyDy{Q(\cC)}{v} & = \int_{\mathbf{w} \in \cW} \left(\prod_{w \in \cW} p(w|h_w)\right) \left(\DxbyDy{L(v)}{v}+\sum_{w \in \cW} \DxbyDy{\log p(w|h_w)}{v} L(v)\right)\\
    & = \int_{\mathbf{w} \in \cW} \left(\prod_{w \in \cW} p(w|h_w)\right) \DxbyDy{}{v}\left(L(v)+\sum_{w \in \mathcal{A}^S} \log p(w|h_w)L(v)\right)\\
    & = \int_{\mathbf{w} \in \cW} \left(\prod_{w \in \cW} p(w|h_w)\right) \DxbyDy{}{v}L^s_v=\E\left[\DxbyDy{}{v}L^s_v|\cC\right]=g_v(\cC)
\end{align*}
\end{proof}

\end{document}